\definecolor{titleblue}{RGB}{51,101,138}
\definecolor{constraintred}{RGB}{179,77,77}
\definecolor{guaranteegreen}{RGB}{76,140,85}
\definecolor{methodpurple}{RGB}{110,76,140}
\definecolor{datablue}{RGB}{70,130,180}
\definecolor{nodepurple}{RGB}{110,76,140}
\definecolor{lightblue}{RGB}{173,216,230}
\definecolor{lightgreen}{RGB}{144,238,144}
\definecolor{correct}{RGB}{34,139,34}
\definecolor{incorrect}{RGB}{220,20,60}
\newtheorem{definition}{Definition}
\newtheorem{theorem}{Theorem}
\newtheorem{corollary}{Corollary}
\newtheorem{problem}{Problem}
\newcommand{\fr}{\texttt{FlowRec}\xspace}
\newcommand{\hfr}{\texttt{HFR}\xspace}
\title{Hierarchical Forecast Reconciliation on Networks: A Network Flow Optimization Formulation}
\author[]{Charupriya Sharma}
\author[]{ I\~naki Estella Aguerri}
\author[]{Daniel Guimarans}
\affil[]{Amazon}
\affil[ ]{\textit {\{charupr, iaguerri, guimd\}@amazon.com}}
\date{}
\begin{document}

\maketitle

%TODO mandatory: add short abstract of the document
% \begin{abstract}
% Hierarchical forecasting with reconciliation is crucial for organizations requiring coherent predictions across multiple aggregation levels. Current methods like minimum trace (MinT) are limited to tree structures and become computationally expensive for large-scale problems. We introduce \fr, which reformulates hierarchical forecast reconciliation as a network flow optimization problem, enabling handling of general network structures beyond traditional hierarchies.

% We prove that while reconciliation under the $\ell_0$ norm is NP-hard, \fr enables polynomial-time solutions for all $\ell_p$ norms with $p > 0$. Our approach achieves $O(n^2 \log n)$ complexity for sparse networks, improving upon MinT's $O(n^3)$, while allowing efficient local updates without requiring expensive recomputation. We show that \fr is a special case of MinT where the weight matrix is determined by network structure rather than requiring estimation, providing theoretical insight into its superior performance.

% Experiments on both simulated and real logistics networks demonstrate that \fr runs 5-40x faster than existing methods while reducing memory usage and improving accuracy. The framework naturally accommodates practical constraints like capacity limits and minimum flow requirements, making it particularly valuable for large scale applications.
% \end{abstract}

\begin{abstract}
The problem of hierarchical forecasting with reconciliation requires that we forecast values that are part of a hierarchy (e.g.~customer demand on a state, district or city level), and there is a relation between different forecast values (e.g.~all district forecasts should add up to the state forecast). State of the art forecasting provides no guarantee for these desired structural relationships. Reconciliation addresses this problem, which is crucial for organizations requiring coherent predictions across multiple aggregation levels. Current methods like minimum trace (MinT) are mostly limited to tree structures and are computationally expensive for large-scale problems. We introduce \fr, which reformulates hierarchical forecast reconciliation as a network flow optimization problem, enabling forecasting on generalized network structures and relationships beyond trees.

We present a rigorous complexity analysis of hierarchical forecast reconciliation under different loss functions. While reconciliation under the $\ell_0$ norm is NP-hard, we prove polynomial-time solvability for all $\ell_{p > 0}$ norms and, more generally, for any strictly convex and continuously differentiable loss function. For sparse networks, \fr achieves $O(n^2\log n)$ complexity, significantly improving upon MinT's $O(n^3)$. Furthermore, we prove that \fr extends MinT beyond tree structures to handle general networks, replacing MinT's error-covariance estimation step with direct network structural information, theoretically justifying  its superior computational efficiency.

A key novelty of our approach is its handling of dynamic scenarios: while traditional methods require recomputing both base forecasts and their reconciliation, \fr provides efficient localised updates with optimality guarantees. A monotonicity property ensures that when forecasts improve incrementally, the initial reconciliation remains optimal. We also establish efficient, error-bounded approximate reconciliation, enabling fast updates in time-critical applications.

Experiments on both simulated and real benchmarks demonstrate that \fr improves runtime by 3-40x, reduces memory usage by 5-7x and improves accuracy over state of the art. These results establish \fr as a powerful tool for large-scale hierarchical forecasting applications.

\end{abstract}

\section{Introduction} \label{sec:intro}
Hierarchical forecasting addresses settings where predictions are needed at multiple aggregation levels and must remain coherent - that is, the individual forecasts must sum to match their aggregates, and has the goal of forecasting the future values of the underlying variable relationships. For example, demand fulfillment for an entire district must equal the aggregated demand across all stores in that district. The exponential growth of digital infrastructure and global retail operations has created unprecedented challenges in such forecasting. For instance, retail operations managing thousands of stores face complex inventory decisions where store-level forecasts must align with district totals, regional projections, and national planning. The financial implications are substantial - forecast inconsistencies between these levels lead to significant inventory costs \cite{seeger2016bayesian, do2015demand} and resource misallocation.

Forecasting on hierarchical data is a challenge in practise due to inconsistencies in data availability at all levels as well as mathematical challenges in recovering multivariate dependencies. Although a few end-to-end methods \cite{rangapuram2021end, das2023dirichlet, olivares2024probabilistic}exist, they have no guarantees on accuracy and coherence for complex networks. Thus, hierarchical forecasting typically involves two steps: generating \textit{base forecasts}  at each level independently, then reconciling these forecasts to satisfy aggregation constraints. These structures appear naturally in many network-based systems: traffic flows must sum across routes to match road capacity, water networks must balance supply and demand, and cloud services must allocate computing resources across data centers. For example, in the logistics network shown in Figure \ref{fig:treevsnetwork}, base forecasts for stores, warehouses, and regions must be reconciled to ensure physical consistency - the sum of flows into each location must equal the sum of flows out. These flows can represent demand, purchased goods, or the number of trucks needed to move the goods.

\begin{figure}[h]
\centering

\tikzset{
    node_style/.style={circle, draw, minimum size=0.8cm, align=center},
    flow_edge/.style={->, >=latex, thick},
    hier_edge/.style={dashed, gray, thick},
    flow_label/.style={fill=white, inner sep=1pt},
    region_node/.style={node_style, fill=gray!20},
    dc_node/.style={node_style, fill=lightblue},
    store_node/.style={node_style, fill=lightgreen},
    true_value/.style={text=correct},
    false_value/.style={text=incorrect}
    legend_node/.style={circle, draw, minimum size=0.4cm},
    legend_store/.style={legend_node, fill=lightgreen},
    legend_dc/.style={legend_node, fill=lightblue},
    legend_wh/.style={legend_node},
    legend_region/.style={legend_node, fill=gray!20}
}

\begin{tikzpicture}[node distance=2cm, scale=0.5, transform shape]
    % Left Panel Title
    \node[align=center] at (-4,5.25) {\textbf{Tree Reconciliation}\\\textbf{(Incorrect)}};
    
    % Right Panel Title
    \node[align=center] at (4,5.25) {\textbf{Network Flow}\\\textbf{(Correct)}};

    % Left Panel: Tree Structure
    \node[node_style] (Total) at (-4,4) {T:1000};
    \node[region_node] (RA) at (-5,2) {RA:600};
    \node[region_node] (RB) at (-3,2) {RB:450};
    \node[dc_node] (W1) at (-6,0) {W1:300};
    \node[dc_node] (DC1) at (-4,0) {D1:280};
    \node[dc_node] (DC2) at (-2.5,0) {D2:250};
    \node[dc_node] (W2) at (-1,0) {W2:200};
    \node[store_node] (S1) at (-6,-3) {S1:150\\\textcolor{incorrect}{(280)}};
    \node[store_node] (S2) at (-4,-3) {S2:280\\\textcolor{incorrect}{(400)}};
    \node[store_node] (S3) at (-1,-3) {S3:170\\\textcolor{incorrect}{(250)}};

    % Tree edges
    \draw[hier_edge] (Total) -- (RA);
    \draw[hier_edge] (Total) -- (RB);
    \draw[hier_edge] (RA) -- (W1);
    \draw[hier_edge] (RA) -- (DC1);
    \draw[hier_edge] (RB) -- (DC2);
    \draw[hier_edge] (RB) -- (W2);
    \draw[hier_edge] (W1) -- (S1);
    \draw[hier_edge] (DC1) -- (S2);
    \draw[hier_edge] (W2) -- (S3);

    % Right Panel: Network Structure
    \node[node_style] (Total2) at (4,4) {T:1000};
    \node[region_node] (RA2) at (3,2) {RA:580};
    \node[region_node] (RB2) at (5,2) {RB:420};
    \node[dc_node] (W12) at (2,0) {W1:300};
    \node[dc_node] (DC12) at (4,0) {D1:280};
    \node[dc_node] (DC22) at (5.5,0) {D2:180};
    \node[dc_node] (W22) at (7,0) {W2:170};
    \node[store_node] (S12) at (2,-3) {S1:\textcolor{correct}{280}};
    \node[store_node] (S22) at (4,-3) {S2:\textcolor{correct}{400}};
    \node[store_node] (S32) at (7,-3) {S3:\textcolor{correct}{250}};

    % Network flows
    \draw[flow_edge, gray] (W12) -- node[flow_label, above, sloped] {150} (S12);
    \draw[flow_edge, gray] (W12) -- node[flow_label, above, sloped] {150} (S22);
    \draw[flow_edge, blue] (DC12) -- node[flow_label, above, sloped] {130} (S12);
    \draw[flow_edge, blue] (DC12) -- node[flow_label, above, sloped] {150} (S22);
    \draw[flow_edge, blue] (DC22) -- node[flow_label, above, sloped] {100} (S22);
    \draw[flow_edge, blue] (DC22) -- node[flow_label, above, sloped] {80} (S32);
    \draw[flow_edge, guaranteegreen] (W22) -- node[flow_label, above, sloped] {170} (S32);

    % Hierarchical relationships in network
    \draw[hier_edge] (Total2) -- (RA2);
    \draw[hier_edge] (Total2) -- (RB2);
    \draw[hier_edge] (RA2) -- (W12);
    \draw[hier_edge] (RA2) -- (DC12);
    \draw[hier_edge] (RB2) -- (DC22);
    \draw[hier_edge] (RB2) -- (W22);

    % Comparison Table
    \node[anchor=north west] at (8.5,1.5) {\includegraphics[width=.6\textwidth]{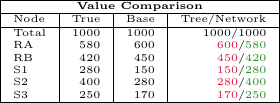}
    };

  %   % Legend
  %   \begin{scope}[xshift=8.5cm, yshift=2.5cm]
  %    \draw[hier_edge] (0,1.5) -- (1,1.5); \node[anchor=west] (text) at (1,1.5) {Hierarchical Relationship};
  %    \draw[flow_edge, gray] (0,1) -- (1,1); \node[anchor=west] (text) at (1,1.0) {WH1 Flow};
  %    \draw[flow_edge, blue] (0,0.5) -- (1,0.5); \node[anchor=west] (text) at (1,0.5) {DC1 Flow};
  %    \draw[flow_edge, lightgreen] (0,0) -- (1,0); \node[anchor=west] (text) at (1,0.0) {RB Flow};
  % \end{scope}

  % Add three-column legend with tikz styling

% Legend in three columns
\begin{scope}[yshift=-8cm]
    % Node Types Column
    \node[anchor=south west] at (-4,2) {\textbf{Node Types:}};
    \node[store_node,scale=0.7] at (-4,1.5) {};
    \node[anchor=west] at (-3.5,1.5) {S: Store};
    \node[dc_node,scale=0.7] at (-4,1) {};
    \node[anchor=west] at (-3.5,1) {D: Distribution Center};
    \node[dc_node,scale=0.7] at (-4,0.5) {};
    \node[anchor=west] at (-3.5,0.5) {W : Warehouse};
    \node[region_node,scale=0.7] at (-4,0) {};
    \node[anchor=west] at (-3.5,0) {RA,RB : Region A,B};
    \node[node_style,scale=0.7] at (-4,-0.5) {};
    \node[anchor=west] at (-3.5,-0.5) {T : Totals};

    % Flow Types Column
    \node[anchor=south west] at (1,2) {\textbf{Flow Types:}};
    \draw[hier_edge] (1,1.5) -- (2,1.5); 
    \node[anchor=west] at (2.2,1.5) {Hierarchical Relationship};
    \draw[flow_edge, gray] (1,1) -- (2,1); 
    \node[anchor=west] at (2.2,1) {Warehouse Flow};
    \draw[flow_edge, blue] (1,0.5) -- (2,0.5); 
    \node[anchor=west] at (2.2,0.5) {Distribution Flow};
    \draw[flow_edge, guaranteegreen] (1,0) -- (2,0); 
    \node[anchor=west] at (2.2,0) {Region  Flow};

    % Applications Column
    \node[anchor=south west] at (7,2) {\textbf{Applications:}};
    \node[anchor=west] at (7,1.5) {Forecasts:};
    \node[anchor=west] at (8.8,1.5) {Demand};
    \node[anchor=west] at (8.8,1) {Product Movement};
    \node[anchor=west] at (7.8,0.5) {Uses:};
    \node[anchor=west] at (8.8,0.5) {truck Reservation};
    \node[anchor=west] at (8.8,0) {Route Planning};
    \node[anchor=west] at (8.8,-0.5) {Inventory Control};
\end{scope}

\end{tikzpicture}

\bigskip

\begin{subfigure}{\textwidth}
\centering
 \includegraphics[width=.6\textwidth]{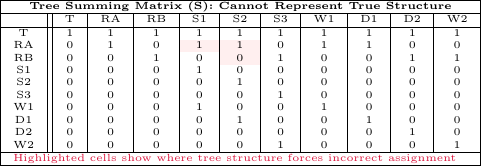}
\end{subfigure}

\bigskip

\begin{subfigure}{\textwidth}
\centering
 \includegraphics[width=.6\textwidth]{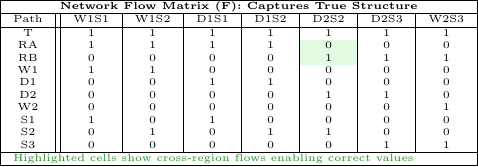}
\end{subfigure}

\caption{Comparison of Tree and Network Reconciliation. Base forecasts deviate from true values. Tree structure cannot recover true values due to single-parent constraint. Network structure allows multiple paths, enabling recovery of true values through flow conservation.} \label{fig:treevsnetwork}
\end{figure}

% While the state-of-the-art minimum trace (MinT)~\cite{wickramasuriya2019optimal} method combines forecasts in tree structures, it cannot handle the complex network relationships common in modern applications, where flows often follow multiple paths between nodes. Consider Figure \ref{fig:treevsnetwork}. The tree network (left), imposes an incorrect structure (tree summing matrix) on the base forecast, failing to recover the true values (right) due to the stores receiving flow from multiple warehouse and distribution center nodes, which is a relationship the general network structure (center) can correctly model (network flow matrix).

While the state-of-the-art minimum trace (MinT)~\cite{wickramasuriya2019optimal} method combines forecasts in tree structures by minimizing the trace of the forecast error covariance matrix, it cannot handle the complex network relationships, where flows often follow multiple paths between nodes. Consider Figure \ref{fig:treevsnetwork}. MinT's covariance estimation assumes each node has exactly one parent. The resulting tree network (left), imposes an incorrect structure (tree summing matrix) on the base forecast, failing to recover the true values (right) due to the stores receiving flow from multiple warehouse and distribution centers, which the general network structure (center) can correctly model (network flow matrix).

We present \fr, a novel approach that reformulates hierarchical forecast reconciliation as a network flow problem. Our key insight, as in Figure \ref{fig:treevsnetwork}, is that many forecasting constraints naturally map to flow conservation in networks. \fr generalizes hierarchical relationships beyond trees to arbitrary network structures and provides several fundamental advantages:

\begin{enumerate}
\item We provide a rigorous study of the complexity of loss functions for forecast reconciliation and prove that while reconciliation under the $\ell_0$ norm is NP-hard, \fr enables polynomial-time forecast reconciliation for all $\ell_{p>0}$ and strictly convex and continuously differentiable losses in general.

\item We show \fr is a generalization of MinT from trees to networks, where the weight matrix is determined by network structure. For sparse networks, \fr achieves optimal runtime of $O(n^2\log n)$ against $O(n^3)$ for MinT.

\item \fr supports efficient localised dynamic updates for network and data updates, and allows an $\varepsilon$-relaxation of structural constraints with provable error bounds, enabling fast reconciliation for online settings.

\item We prove a monotonicity property ensuring optimality preservation under improving forecasts, particularly valuable for online settings where data quality improves over time.

\item Empirically, we demonstrate significant speedups with reduced memory usage and improved accuracy across both simulated networks and standard forecasting benchmarks, including 3-40x performance improvements over existing methods like MinT.
\end{enumerate}

\fr generalizes traditional hierarchical structures like trees to networks while improving computational efficiency, as any tree is a specific type of graph. In the comparison table of Figure \ref{fig:treevsnetwork}, \fr can recover true values that tree-based methods cannot represent. Complex real-world constraints that were previously difficult or impossible to incorporate are natural components of \fr.

The remainder of this paper is organized as follows. Section \ref{sec:lit} provides background on hierarchical forecasting and network flows. Section \ref{sec:defs} introduces our \fr framework. Section \ref{sec:props} presents our main theoretical results on computational complexity, accuracy and establishes \fr's relationship to MinT. Section \ref{sec:fast} provides results on dynamic updates and approximations. Section \ref{sec:experiments} presents experimental results on both simulated and real-world data. Future work and conclusions are detailed in Section \ref{sec:future} and \ref{sec:conclude}.

\section{Background and Related Work}\label{sec:lit}

Base forecasting methods provide the foundation for hierarchical forecasting. While classical approaches like exponential smoothing \cite{hyndman2008forecasting} and traditional time series models like ARIMA \cite{box2015time} have been widely used, recent deep learning approaches \cite{sak2014long, rangapuram2018deep, salinas2020deepar, oreshkin2019n, zeng2023transformers} show significant improvements in complex hierarchical settings. The fundamental challenge remains that forecasts generated independently at different levels violate aggregation constraints \cite{wickramasuriya2019optimal}.

Traditional reconciliation approaches include bottom-up methods (aggregating base-level forecasts upward), middle-out methods (using a middle level as base), and top-down methods (disaggregating from the top level). The optimal combination approach \cite{hyndman2011optimal} provided a framework to combine forecasts from all levels using ordinary least squares, which MinT \cite{wickramasuriya2019optimal} enhanced by minimizing the trace of the reconciled forecast error covariance matrix.

Network flows offer a natural solution to these challenges, though they have not been extensively explored in forecasting contexts. Since the introduction of the maximum flow problem \cite{ford1956maximal}, the field has seen significant advances. Early work \cite{kantorovich1942translocation} established foundations for min-cost flow, leading to a series of breakthroughs \cite{christiano2011electrical, madry2013navigating, madry2016computing, van2020bipartite, king1994faster, olver2020simpler, lacki2012single, abboud2020new}. Recent work has achieved almost-linear time algorithms for maximum flow and minimum-cost flow \cite{dong2022nested, chen2022maximum, van2023deterministic}, with growing interest in learning-augmented settings \cite{davies2023predictive, polak2024learning, davies2024warm}.

Multicommodity flows have seen parallel development, from early work \cite{fulkerson1961network, edmonds1972theoretical} through advances in maximum concurrent flow \cite{shahrokhi90} and max-flow min-cut theorems \cite{leighton95}. Significant progress in approximation algorithms \cite{leighton1999multicommodity, fleischer00, benczur1996approximating, karakostas02, garg2007faster, madry10} has culminated in almost-linear time algorithms for undirected graphs \cite{chen2022maximum} and new techniques for directed cases \cite{van2023deterministic, vdBZ23}.

Dynamic settings \cite{DBLP:journals/jea/HanauerHS22, ford1958constructing, gale1959transient, skutella2009introduction} have seen particular advancement in decremental problems \cite{henzinger2014sublinear, henzinger2015improved} and incremental algorithms \cite{chen2024almost, van2024incremental} achieving almost-linear time complexity.

These advances in network algorithms, particularly in dynamic and multicommodity settings, provide the theoretical foundation for addressing the challenges of hierarchical time series, especially in handling complex network structures and dynamic updates.

\section{Hierarchical Forecasting on Networks with Flows} \label{sec:defs}

% [NEW] Enhanced opening paragraph with precise terminology
We now formalize hierarchical forecasting and reconciliation on networks introduced in Section \ref{sec:intro}, building on  definitions from \cite{panagiotelis2021forecast}. Given time series observations on a network $G=(V,E)$ with paths $\mathcal{P}$, we need to generate and reconcile forecasts while respecting hierarchical relationships represented by complex networks. %The fundamental challenge is ensuring that forecasts remain coherent across all aggregation levels while maintaining flow conservation.

% [NOTATION CHANGES from original] 
% - $\mathcal{T}$ for time index set (was T) for set consistency
% - $(\mathbf{y}_t)_{t\in \mathcal{T}}$ keeping bold for vector-valued time series
% - $\mathcal{B}$ for bottom-level set (was B)
% - Added explicit dimensions for all matrices
\begin{definition}[Hierarchical Time Series] \label{def:hier-time}
For a set of time steps $\mathcal{T}$, a hierarchical time series $(\mathbf{y}_t)_{t\in \mathcal{T}}$ is a collection of observations of $n$ variables $V$. More formally:
\begin{enumerate}
    \item $(\mathbf{y}_t)_{t\in \mathcal{T}}$ is a series of $n$-dimensional vectors $\mathbf{y}_t \in \mathbb{R}^n$
    \item Bottom-level variables $\mathcal{B}$, $|\mathcal{B}| = n_b \leq n$ cannot be expressed by linear aggregation of other variables. And $\beta: \mathbb{R}^n \to \mathbb{R}^{n_b}$ extracts bottom-level observations $\beta(\mathbf{v}_t) \mapsto \mathbf{b}_t$
    \item Aggregation constraints are linear and constant over time: $\mathbf{y}_t = \mathbf{S}\mathbf{b}_t \quad \forall t \in \mathcal{T}$
    \item The coherent subspace $\mathfrak{s} \subset \mathbb{R}^n$ is the linear subspace where aggregation constraints hold: $\mathfrak{s} = \{\mathbf{y} \in \mathbb{R}^n | \mathbf{y} = \mathbf{S}\beta(\mathbf{y})\}$, where $\mathbf{S} \in \mathbb{R}^{n\times n_b}$ encodes aggregation constraints.
\end{enumerate}

\end{definition}

% [NEW] Added motivation connecting to Figure \ref{fig:treevsnetwork}
In Figure~\ref{fig:treevsnetwork}, stores represent bottom-level variables $\mathcal{B}$, while the network structures represent aggregations $S$. Next, we define a forecast that satisfies aggregation constraints.

% [NOTATION CHANGES]
% - Standardized forecast notation to use T+h|T consistently
% - All vectors in bold
% - Clear distinction between forecast time T and observation time t
\begin{definition}[Point Forecast, Base Forecast, Coherent Forecast, Reconciliation]  \label{def:point}
For a hierarchical time series $(\mathbf{y}_t)_{t\in \mathcal{T}}$:
\begin{enumerate}
    \item A point forecast $\hat{\mathbf{y}}_{T+h|T} \in \mathbb{R}^n$ is a vector of predictions for all variables $V$ at time $T+h$, made at time $T$.
    \item A base forecast $\hat{\mathbf{y}}_{T+h|T}$ is an initial point forecast, typically generated using standard time series models (e.g., ARIMA), which may not satisfy aggregation constraints.
    \item A point forecast $\hat{\mathbf{y}}_{T+h|T}$ is coherent for aggregation constraint matrix $\mathbf{S}$ if $\hat{\mathbf{y}}_{T+h|T} \in \mathfrak{s}$.
    \item Reconciling an incoherent point forecast $\hat{\mathbf{y}}_{T+h|T}$ to a coherent point forecast $\tilde{\mathbf{y}}_{T+h|T} = \psi(\hat{\mathbf{y}}_{T+h|T})$ is done via a reconciling mapping $\psi: \mathbb{R}^n \to \mathfrak{s}$.
\end{enumerate}
\end{definition}

Now we can define hierarchical forecasting: generating $h$ future predictions that are both coherent (satisfying aggregation constraints) and accurate (minimizing some loss function) with respect to future actual values $\mathbf{y}_{T+h}$.
\begin{problem}[Hierarchical Forecasting] \label{prob:hier-forecast}
Given a hierarchical time series $(\mathbf{y}_t)_{t\in \mathcal{T}}$, a loss function $L: \mathbb{R}^n \times \mathbb{R}^n \to \mathbb{R}^+_0$, and a forecast horizon $h$, the hierarchical forecasting problem for each time step $k=1,\ldots,h$ is:
\begin{equation}
\min_{\hat{\mathbf{y}}_{T+k|T}} L(\mathbf{y}_{T+k}, \hat{\mathbf{y}}_{T+k|T}) \quad \text{subject to} \quad \hat{\mathbf{y}}_{T+k|T} = \mathbf{S}\hat{\mathbf{b}}_{T+k|T}
\end{equation}
where $\mathbf{S} \in \mathbb{R}^{n \times n_b}$ encodes aggregation constraints and $\hat{\mathbf{b}}_{T+k|T}$ represents bottom-level forecasts.
\end{problem}

% [NEW] Added transition to flow-specific structures
The matrix $\mathbf{S}$ can encode any network structure, including trees. However, network-structured data presents unique challenges for maintaining coherence across complex hierarchical relationships. Consider Figure~\ref{fig:treevsnetwork}'s network representation: each node's flow must equal the sum of all flows on its incoming and outgoing edges, while each edge's flow must equal the sum of all path flows traversing that edge. For instance, a store might receive inventory from multiple warehouses simultaneously, a scenario impossible to represent in tree-based hierarchies. This limitation motivates our flow aggregated time series formulation, which extends Definition \ref{def:flow-hier-time} to naturally incorporate these complex network constraints.

% [NOTATION CHANGES]
% - All vectors and matrices in bold
% - Explicit dimensions for block matrices
% - Consistent set notation for paths $\mathcal{P}$
% \begin{definition}[Flow Aggregated Time Series] \label{def:flow-hier-time}
% Let $G=(V,E)$ be a network with paths $\mathcal{P}$. A hierarchical flow time series $\mathbf{y}_t \in \mathbb{R}^n$, where $n = |\mathcal{P}| + |V| + |E|$, consists of observations:
% \[ \mathbf{y}_t = \begin{pmatrix} \mathbf{n}_t \\ \mathbf{e}_t \\ \mathbf{p}_t \end{pmatrix} \]
% where $\mathbf{n}_t \in \mathbb{R}^{|V|}$, $\mathbf{e}_t \in \mathbb{R}^{|E|}$, and $\mathbf{p}_t \in \mathbb{R}^{|\mathcal{P}|}$ represent node, edge, and path flows respectively. Flow aggregation constraints are given by:
% \[ \mathbf{S} = \begin{pmatrix} \mathbf{V}' \\ \mathbf{E}' \\ \mathbf{I}_{|\mathcal{P}|} \end{pmatrix} \in \mathbb{R}^{n \times |\mathcal{P}|} \]
% where $\mathbf{V}' \in \mathbb{R}^{|V| \times |\mathcal{P}|}$ and $\mathbf{E}' \in \mathbb{R}^{|E| \times |\mathcal{P}|}$ are the vertex-path and edge-path incidence matrices.
% \end{definition}

\begin{definition}[Flow Aggregated Time Series] \label{def:flow-hier-time}
Let $G=(V,E)$ be a network with paths $\mathcal{P}$. A hierarchical flow time series $\mathbf{y}_t \in \mathbb{R}^n$, where $n = |\mathcal{P}| + |V| + |E|$, consists of observations:
\[ \mathbf{y}_t = \begin{pmatrix} \mathbf{v}_t \\ \mathbf{e}_t \\ \mathbf{p}_t \end{pmatrix} \]
where $\mathbf{v}_t \in \mathbb{R}^{|V|}$, $\mathbf{e}_t \in \mathbb{R}^{|E|}$, and $\mathbf{p}_t \in \mathbb{R}^{|\mathcal{P}|}$ represent node, edge, and path flows respectively. Flow aggregation constraints are given by:
\[ \mathbf{S} = \begin{pmatrix} \mathbf{V}' \\ \mathbf{E}' \\ \mathbf{I}_{|\mathcal{P}|} \end{pmatrix} \in \mathbb{R}^{n \times |\mathcal{P}|} \text{ where $\mathbf{V}' \in \mathbb{R}^{|V| \times |\mathcal{P}|}$ is the vertex-path incidence matrix}\]
  defined by
$ v'_{ij} = \begin{cases} 1 & \text{if vertex } i \text{ appears in path } j \\ 0 & \text{otherwise} \end{cases} $
and $\mathbf{E}' \in \mathbb{R}^{|E| \times |\mathcal{P}|}$ is the edge-path incidence matrix defined by
$ e'_{ij} = \begin{cases} 1 & \text{if edge } i \text{ appears in path } j \\ 0 & \text{otherwise} \end{cases} $.
\end{definition}
% [NOTATION CHANGES]
% - Consistent function space notation
% - Bold vectors throughout
% - Explicit domain and codomain for mappings
Now, we extend the concept of forecast coherence to flow aggregated time series.
\begin{definition}[Flow Point Forecast] \label{def:flow-point}
A coherent flow point forecast is a coherent point forecast where the aggregation constraint matrix $\mathbf{S}$ is a flow aggregation matrix. 
\end{definition}

% [NEW] Added transition to problem formulation
Having established the structure of flow-based hierarchical forecasting, we now formulate the general forecast reconciliation problem on networks. This novel formulation combines standard hierarchical time series and flow constraints.

% \begin{problem}[Hierarchical Flow Reconciliation (\hfr)] \label{prob:hfr}
% Given a loss function $L: \mathbb{R}^n \times \mathbb{R}^n \to \mathbb{R}^+_0$, base forecasts $\hat{\mathbf{y}}_{T+k|T}$ for $k=1,\ldots,h$, 
% and a flow aggregation matrix $\mathbf{S} \in \mathbb{R}^{n \times |\mathcal{P}|}$,
% the forecast reconciliation problem for each time step $k$ is:
% \begin{equation}\label{eq:opt_problem}
% \min_{\tilde{\mathbf{y}}_{T+k|T}} L(\hat{\mathbf{y}}_{T+k|T}, \tilde{\mathbf{y}}_{T+k|T}) \quad \text{subject to} \quad \tilde{\mathbf{y}}_{T+k|T} = \mathbf{S}\tilde{\mathbf{b}}_{T+k|T}
% \end{equation}
% Additional constraints may include box-constraints:
% %\begin{equation}\label{eq:box_constraints}
% $\boldsymbol{\ell} \leq \tilde{\mathbf{y}}_{T+k|T} \leq \mathbf{u}$
% %\end{equation}
% where $\boldsymbol{\ell}, \mathbf{u} \in \mathbb{R}^n$ are lower and upper bounds respectively.
% \end{problem}

\begin{problem}[Hierarchical Flow Reconciliation (\hfr)] \label{prob:hfr}
Given a loss function $L: \mathbb{R}^n \times \mathbb{R}^n \to \mathbb{R}^+_0$, base forecasts $\hat{\mathbf{y}}_{T+k|T}$ for $k=1,\ldots,h$, 
and a flow aggregation matrix $\mathbf{S} \in \mathbb{R}^{n \times |\mathcal{P}|}$,
find a reconciling mapping $\psi: \mathbb{R}^n \to \mathfrak{s}$ that for each time step $k$ solves:
\begin{equation}\label{eq:opt_problem}
\min_{\tilde{\mathbf{y}}_{T+k|T}} L(\hat{\mathbf{y}}_{T+k|T}, \tilde{\mathbf{y}}_{T+k|T}) \quad \text{subject to} \quad \tilde{\mathbf{y}}_{T+k|T} = \mathbf{S}\tilde{\mathbf{b}}_{T+k|T}
\end{equation}
where $\tilde{\mathbf{y}}_{T+k|T} = \psi(\hat{\mathbf{y}}_{T+k|T})$ and additional constraints may include box-constraints $\boldsymbol{\ell} \leq \tilde{\mathbf{y}}_{T+k|T} \leq \mathbf{u}$ with lower and upper bounds $\boldsymbol{\ell}, \mathbf{u} \in \mathbb{R}^n$.
\end{problem}
% [NEW] Added connecting paragraph to next section
\hfr captures both hierarchical structure and complex constraints like capacity limits.  %In Section \ref{sec:props}, we will analyze the computational complexity of this problem and present efficient algorithms for its solution.

\section{Properties of the \hfr Formulation and the \fr Framework} \label{sec:props}
\begin{algorithm}[h]
\caption{Flow-Based Hierarchical Forecast Reconciliation (\fr)}\label{algo:flowrec}
\begin{algorithmic}[1]
\Require Network $G=(V,E)$, observations $(\mathbf{y}_t)_{t\in \mathcal{T}}$, loss function $L$
\Ensure Reconciled forecasts $\tilde{\mathbf{y}}_{T+k|T}$ for $k=1,\ldots,h$

\State Initialize $\mathcal{P} \gets$ paths with observations in vector $(\mathbf{y}_t)_{t\in \mathcal{T}}$
\State Initialize $\mathbf{V} \gets$ vertex-path incidence matrix (Definition \ref{def:flow-hier-time})
\State Initialize $\mathbf{E} \gets$ edge-path incidence matrix (Definition \ref{def:flow-hier-time})
\State Form flow aggregation matrix: $\mathbf{S} \gets \begin{bmatrix} \mathbf{V} \\ \mathbf{E} \\ \mathbf{I} \end{bmatrix}$

\State Generate base forecasts $\hat{\mathbf{y}}_{T+k|T}$ using any suitable forecasting method
\State Solve \hfr instance: $\min_{\tilde{\mathbf{y}}} L(\tilde{\mathbf{y}}, \hat{\mathbf{y}}_{T+k|T})$ subject to $\mathbf{S} = \mathbf{b}\tilde{\mathbf{y}}$

\end{algorithmic}
\end{algorithm}

The \fr framework (Algorithm \ref{algo:flowrec}) converts an instance of the hierarchical forecasting problem to an instance of \hfr and solves that instance (Theorem \ref{thm:computation}). %Reconciliation methods like MinT are alternatives to \fr. 

%We analyze the theoretical properties of \hfr and \fr and show several advantages over alternative methods. While the problem of forecast reconciliation with arbitrary constraints is not known to have an optimal polynomial-time solution, we show that \fr enables one for commonly used loss functions like $l_{p>0}$ norms. Proofs are in Appendix \ref{app:props}.

Preprocessing steps 1-4 use $G$, which encodes any hierarchical relationship between the variables and uses that to formulate an instance of \hfr. Note that Step 5 can be implemented using various forecasting methods depending on the application (see Section \ref{sec:experiments}). Our analysis focuses on Step 6 (reconciliation). Methods like MinT provide an alternative to \fr, but our theoretical results show several advantages of \fr (Proofs in Appendix \ref{app:props}):

\begin{enumerate}
    \item \textbf{Computational Complexity of \hfr:} We prove polynomial-time solvability for all $\ell_{p>0}$ as well as general convex and continuously differentiable losses. A runtime lower bound proves optimality of \fr for sparse networks.
    
    \item \textbf{Relationship of \fr to MinT:} We show \fr is a generalization of MinT from trees to graphs with superior computational efficiency.
    
    \item \textbf{Flow Conservation:} \fr exploits network structure encoded in $\mathbf S$ to ensure coherence, avoiding the need for variance-covariance estimation required by MinT.
    
    \item \textbf{Multiple Computation Methods:} We provide two ways to compute the reconciled forecasts, and use hardness results to guide the choice of which method to use according to the given loss functions for the best performance.
\end{enumerate}

\subsection{Hardness of \hfr}

The choice of loss function in hierarchical forecasting affects both computational tractability and practical utility. Mean squared error (MSE), root mean squared error (RMSE), and mean absolute error (MAE) — corresponding to $\ell_2$, $\sqrt \ell_2$ , and $\ell_1$ norms respectively — dominate forecasting practice due to their convexity and differentiability (for MSE/RMSE) or robustness to outliers (for MAE). However, discrete resource allocation often demands step-wise loss functions. For example, in logistics, when determining the number of trucks needed for delivery, the truck capacity reservation leads to:
\[L(y, \hat{y}) = \begin{cases} 
0 & \text{if } |y - \hat{y}| \leq a \\
c_1 & \text{if } a < |y - \hat{y}| \leq b \\
c_2 & \text{if } b < |y - \hat{y}| \leq c \\
c_3 & \text{if } |y - \hat{y}| > c
\end{cases}\]
 where thresholds $a,b,c$ represent various truck sizes and costs $c_i$ reflect the penalties for under or over-estimating the required number of trucks at each size.
Table~\ref{tab:common_loss_3} presents common loss functions, from $\ell_p$ norms to Huber loss, which combines differentiability near zero with linear growth for large errors. Our analysis shows \hfr is NP-hard for the $\ell_0$ norm but polynomial-time solvable for $\ell_p$ norms with $p > 0$, suggesting continuous relaxations when discrete constraints prove intractable. The runtime lower bound of $\Omega(m \log n)$ later proves \fr's optimality for sparse networks, which are standard in applications.

% In the following, we focus on the hardness of finding the coherent forecast that minimizes the loss compared to a given base forecast for common loss functions. Table~\ref{tab:common_loss_3} gives an overview of the most common loss functions in forecasting.
% For applied settings, the loss function should represent the impact of the forecasting error. In the context of logistic, forecasting errors lead to incorrect procurement, e.g.~the amount of trucks needed to transport customer orders. As the number of trucks is integral, this makes step-wise loss functions appealing, although these functions are less common in machine learning due to them being non-differentiable. These step-functions are in general of the form  \[L(y, \hat{y}) = \begin{cases} 
% 0 & \text{if } |y - \hat{y}| \leq a \\
% c_1 & \text{if } a < |y - \hat{y}| \leq b \\
% c_2 & \text{if } b < |y - \hat{y}| \leq c \\
% c_3 & \text{if } |y - \hat{y}| > c
% \end{cases}\]

\begin{table}[thb]
\caption{Common Loss Functions in Forecasting}
\centering
\small
\begin{tabular}{p{2.15cm}p{3.0cm}|p{2.2cm}p{4.45cm}}
\toprule
Mean Squared Error (MSE) & 
$\frac{1}{n} \sum_{i=1}^n (y_i - \hat{y}_i)^2$ &
Mean Absolute Error (MAE) & 
$\frac{1}{n} \sum_{i=1}^n |y_i - \hat{y}_i|$ \\
\midrule
Root Mean Squared Error (RMSE) & 
$\sqrt{\frac{1}{n} \sum_{i=1}^n (y_i - \hat{y}_i)^2}$ &
Huber Loss & 
$\begin{cases} 
\frac{1}{2}(y - \hat{y})^2 & \text{if } |y - \hat{y}| \leq \delta \\
\delta|y - \hat{y}| - \frac{1}{2}\delta^2 & \text{otherwise}
\end{cases}$ \\
\bottomrule
\end{tabular}
\label{tab:common_loss_3}
\end{table}
% \begin{table}[thb]
% \caption{Common Loss Functions in Forecasting}
% \centering
% \small
% \begin{tabular}{p{5cm}p{7cm}}
% \textbf{Loss Function} & \textbf{Definition} \\
% \toprule
% Mean Squared Error (MSE) & 
% $MSE = \frac{1}{n} \sum_{i=1}^n (y_i - \hat{y}_i)^2$ \\
% \midrule
% Root Mean Squared Error (RMSE) & 
% $RMSE = \sqrt{\frac{1}{n} \sum_{i=1}^n (y_i - \hat{y}_i)^2}$ \\
% \midrule
% Mean Absolute Error (MAE) & 
% $MAE = \frac{1}{n} \sum_{i=1}^n |y_i - \hat{y}_i|$ \\
% \hline
% Huber Loss & 
% $L_\delta(y, \hat{y}) = \begin{cases} 
% \frac{1}{2}(y - \hat{y})^2 & \text{for } |y - \hat{y}| \leq \delta \\
% \delta|y - \hat{y}| - \frac{1}{2}\delta^2 & \text{otherwise}
% \end{cases}$ \\
% %\hline
% %Binary Cross-Entropy & 
% %$L(y, \hat{y}) = -[y \log(\hat{y}) + (1-y) \log(1-\hat{y})]$ \\
% %\hline
% %Categorical Cross-Entropy & 
% %$L(y, \hat{y}) = -\sum_{i=1}^C y_i \log(\hat{y}_i)$ \\
% %\hline
% %Hinge Loss & 
% %$L(y, \hat{y}) = \max(0, 1 - y\hat{y})$ \\
% %\hline
% % Quantile Loss & 
% % $L_q(y, \hat{y}) = \max(q(y - \hat{y}), (q-1)(y - \hat{y}))$ \\
% \bottomrule
% \end{tabular}

% \label{tab:common_loss_3}
% \end{table}

For all of these loss-functions, the set of feasible solutions is convex (with or without box-constraints). Consequently, the hardness of \hfr depends on the loss-function. 
For step-functions, \hfr becomes NP-hard, as they introduce an integrality aspect. For our hardness proof, we use the $||\cdot||_0$-norm as a loss function. The $||\cdot||_0$ norm in $\mathbb{R}^n$ is defined as the number of non-zero entries, i.e. 
$ ||x||_0 = \sum_{i=0}^n \iota_i, \quad  \iota_i = \begin{cases} 0 & \text{if } x_i = 0,\\ 1 & \text{otherwise.}\end{cases}$

\begin{theorem}[NP-Hardness for $\ell_0$]
Minimizing the loss of the $\|\cdot\|_0$ norm for \hfr is NP-hard, where for $\mathbf{x} \in \mathbb{R}^n$:
$ \|\mathbf{x}\|_0 = |\{i : x_i \neq 0\}| $
\end{theorem}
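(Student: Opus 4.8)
The plan is to prove NP-hardness by reduction from a known NP-hard problem whose combinatorial core matches the integrality that the $\ell_0$ objective induces. The natural candidate is \textbf{Subset Sum} (or equivalently \textbf{Partition}), because minimizing $\|\hat{\mathbf{y}} - \tilde{\mathbf{y}}\|_0$ asks us to change as few coordinates of the base forecast as possible while landing in the coherent subspace $\mathfrak{s}$, and deciding whether a small number of changes suffices has the flavor of hitting an exact target with a sparse set of nonzero adjustments. First I would formalize the decision version of \hfr under $\ell_0$: given base forecast $\hat{\mathbf{y}}$, flow aggregation matrix $\mathbf{S}$, and a budget $k$, decide whether there exists a coherent $\tilde{\mathbf{y}} = \mathbf{S}\tilde{\mathbf{b}}$ with $\|\hat{\mathbf{y}} - \tilde{\mathbf{y}}\|_0 \le k$. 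Membership in NP is immediate, since a candidate $\tilde{\mathbf{b}}$ is a polynomial-size certificate whose coherence and sparsity are checkable in polynomial time.

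The heart of the reduction is constructing, from a Subset Sum instance with items $a_1,\dots,a_m$ and target $t$, a network $G=(V,E)$ together with a base forecast such that achieving $\ell_0$-distance below a threshold forces a binary selection of items summing to $t$. I would design a small gadget graph in which the path flows $\mathbf{p}_t$ act as the selection variables: each path $j$ corresponds to item $a_j$, and I engineer the base forecast so that the cheap (zero-cost) choice on each path coordinate is either ``include'' or ``exclude,'' while any fractional or off-target value incurs additional nonzero residuals elsewhere. A designated aggregation row of $\mathbf{S}$ — built from the vertex-path or edge-path incidence structure — would encode the linear constraint that the selected path flows sum to $t$. The key trick is to set the base forecasts on the aggregate/node coordinates so that coherence with $\mathbf{S}$ is attainable with exactly the item-selection number of nonzero changes if and only if some subset hits the target, padding the construction with coordinates whose base value is already consistent so they contribute $0$ to the $\ell_0$ count.

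The main obstacle, and the step requiring the most care, is enforcing \textbf{integrality/binarity} of the path selections through a purely continuous $\ell_0$ objective. The $\ell_0$ norm only penalizes \emph{whether} a coordinate is changed, not by how much, so a naive construction would allow the solver to spread small adjustments across many coordinates or pick fractional path flows, defeating the combinatorial encoding. I would address this by arranging the incidence structure so that any deviation from the intended $\{0,1\}$ pattern on a path coordinate necessarily activates a distinct, otherwise-zero residual on a coordinate that is ``locked'' by another aggregation constraint, thereby charging extra $\ell_0$ cost. Concretely, I would make the budget $k$ tight: set it equal to the number of coordinates that must change under an honest subset selection, so that the forecaster has no slack to introduce fractional flows without exceeding the budget. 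Verifying that this tightness argument is airtight — that \emph{every} budget-respecting coherent solution corresponds to a genuine subset, and conversely — is the delicate direction of the equivalence and the part I expect to occupy most of the proof.

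Finally, I would confirm that the reduction is polynomial: the gadget has size $O(m)$ in vertices, edges, and paths, the matrix $\mathbf{S}$ is written down directly from the incidence definitions in Definition \ref{def:flow-hier-time}, and all numeric entries are polynomially bounded in the Subset Sum encoding. Combining the two directions of the equivalence with NP-membership then yields NP-completeness of the decision problem and hence NP-hardness of minimizing the $\|\cdot\|_0$ loss for \hfr, which is exactly the claim. An alternative, if the flow gadget proves awkward, would be to reduce instead from a sparse-recovery / \textbf{minimum-weight solution to a linear system} problem (the classic hardness of $\min \|\mathbf{x}\|_0$ subject to $\mathbf{A}\mathbf{x}=\mathbf{b}$), and then realize the required matrix $\mathbf{A}$ as a flow aggregation matrix $\mathbf{S}$ by exhibiting a network whose incidence structure produces $\mathbf{A}$; I would keep this route in reserve since it sidesteps designing the selection gadget from scratch but instead demands showing that the hard instances of sparse linear recovery are expressible within the restricted form imposed by Definition \ref{def:flow-hier-time}.
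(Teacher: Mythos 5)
Your primary route---reduction from Subset Sum---has a gap that is more fundamental than the ``delicate tightness argument'' you defer to the end, and it cannot be repaired by the locking/budget devices you sketch. Under the $\ell_0$ loss, a single changed coordinate can absorb an \emph{arbitrary} real discrepancy: whatever gadget you build, the constraint that selected path flows sum to the target $t$ can be met by changing just one path flow to the (fractional) value needed, together with the constant number of aggregate coordinates containing that path. This cheating solution costs a small constant number of changes, independent of the Subset Sum instance, and in particular \emph{fewer} changes than any honest subset encoding---so setting the budget $k$ tight to the honest cost does not exclude it; it makes every constructed instance a YES instance and the reduction collapses. Your proposed fix (deviations from the $\{0,1\}$ pattern activate extra residuals) cannot be implemented with the coherence constraints available: the matrix $\mathbf{S}$ of Definition~\ref{def:flow-hier-time} has only $0/1$ incidence entries, so the item weights $a_j$ can live only in base forecast values, and $\ell_0$ is blind to magnitudes---no linear aggregation row can distinguish a path flow equal to $a_j$ from one equal to $0.37$ except by charging the same unit cost it charges the honest solution. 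This is precisely why $\ell_0$-hardness must be driven by support/covering structure rather than numeric structure, and why Subset Sum is the wrong source problem.

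Your reserve route is the correct one, and it is essentially what the paper does: the paper reduces from Exact-1-in-3-SAT, introducing one linear equation per clause with a slack variable $\sigma_j$, and argues that zero $\ell_0$ loss is achievable if and only if the formula is satisfiable---the same family as the classical hardness of minimum-weight solutions to linear systems (via exact-cover-type problems) that you cite, where hardness comes from which coordinates must be nonzero rather than from their values. Two further observations: first, the concern you raise about this route---that the hard linear system must be realized as a genuine flow aggregation matrix of the restricted form in Definition~\ref{def:flow-hier-time}---is legitimate, but the paper's own proof does not carry out that step either (it constructs clause equations with $\pm 1$ coefficients and never exhibits a network); second, you would still need to handle the converse direction carefully, since real-valued solutions of the clause equations need not be binary, an issue your covering-style source problem is better positioned to address than Subset Sum ever could be. Had you led with the reserve route and fleshed it out, you would have been on the paper's path; as proposed, the main construction fails.
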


% [In main text: Brief proof sketch]

% [For appendix: Full proof]
% \begin{proof}
% We prove the claim by reduction from Exact-1-in-3-SAT. Given an instance with $n$ variables and $m$ clauses where each clause is of the form $(l_{j,1} \lor l_{j,2} \lor l_{j,3})$, we construct a forecast reconciliation instance as follows:

% For each variable $x_i$ in the SAT instance, create a forecast variable $x_i$. For each clause $(l_{j,1} \lor l_{j,2} \lor l_{j,3})$, create a linear equation:
% \[ (-1)^{\alpha_1}x_{j,1} + (-1)^{\alpha_2}x_{j,2} + (-1)^{\alpha_3}x_{j,3} + \sigma_j = 1 \]
% where $\alpha_i = 0$ if $l_{j,i}$ is positive, and $\alpha_i = 1$ if $l_{j,i}$ is negative.

% If there exists a variable assignment satisfying Exact-1-in-3-SAT, then setting $\sigma_j = 0$ for all $j$ gives reconciled forecasts with zero $\ell_0$ loss. Conversely, any reconciliation with zero $\ell_0$ loss must have $\sigma_j = 0$ for all $j$, implying a satisfying assignment for the original Exact-1-in-3-SAT instance.
% \end{proof}

% [NOTATION CHANGES from original]
% - Added vector notation for forecasts
% - Standardized subscripts
% - Explicit statement of problem construction

\begin{theorem}[Polynomial-Time Solvability for $\ell_{p > 0}$]
For any $\ell_p$ norm with $p > 0$, \hfr can be solved in polynomial time, even with additional linear constraints (e.g.~upper/lower bounds) or weighted objectives. For $p>1$, the solution is unique. Specifically:
\begin{enumerate}
    \item For $p=1$ (MAE), \hfr reduces to linear programming
    \item For $p=2$ (MSE), \hfr reduces to quadratic programming
    \item For general $p > 0$, \hfr can be solved via convex optimization
\end{enumerate}
\end{theorem}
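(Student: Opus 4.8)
The plan is to exploit the fact that the \hfr feasible region is a polyhedron and that the $\ell_p$ objective is, for $p \ge 1$, a convex function of the decision variables, reducing the problem to a tractable convex program. First I would reparametrize: since the bottom block of $\mathbf{S}$ is $\mathbf{I}_{|\mathcal{P}|}$, the coherence constraint $\tilde{\mathbf{y}} = \mathbf{S}\tilde{\mathbf{b}}$ is solved explicitly by the free variables $\tilde{\mathbf{b}}$, and I can write the residual as the affine map $\mathbf{r}(\tilde{\mathbf{b}}) = \hat{\mathbf{y}} - \mathbf{S}\tilde{\mathbf{b}}$. Any box constraints $\boldsymbol{\ell} \le \mathbf{S}\tilde{\mathbf{b}} \le \mathbf{u}$ (and any coordinate weights) remain linear in $\tilde{\mathbf{b}}$, so the feasible set is a polyhedron $\mathcal{F}$. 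I would also record here that the $\mathbf{I}_{|\mathcal{P}|}$ block forces $\mathbf{S}$ to have full column rank, so the map $\tilde{\mathbf{b}} \mapsto \mathbf{S}\tilde{\mathbf{b}}$ is injective; this is what will later upgrade uniqueness of $\tilde{\mathbf{y}}$ to uniqueness of $\tilde{\mathbf{b}}$.

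For $p \ge 1$ the objective $\tilde{\mathbf{b}} \mapsto \|\mathbf{r}(\tilde{\mathbf{b}})\|_p$ is a norm precomposed with an affine map, hence convex and continuous; minimizing it over the polyhedron $\mathcal{F}$ is a convex program that interior-point or ellipsoid methods solve to accuracy $\varepsilon$ in time polynomial in the encoding length of $(\mathbf{S}, \hat{\mathbf{y}}, \boldsymbol{\ell}, \mathbf{u})$ and $\log(1/\varepsilon)$. I would then make the two named cases explicit: for $p=1$, introduce epigraph variables $t_i$ with $t_i \ge r_i$ and $t_i \ge -r_i$ and minimize $\sum_i t_i$, giving a linear program; for $p=2$, minimize the equivalent smooth objective $\|\mathbf{r}\|_2^2 = \mathbf{r}^\top\mathbf{r}$, a convex quadratic over $\mathcal{F}$, i.e.\ a QP (solvable in closed form via the normal equations or pseudoinverse when no box constraint is active). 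For uniqueness when $p>1$, I would invoke strict convexity of $\mathbf{z}\mapsto\|\mathbf{z}\|_p^p$: the strictly convex function $\tilde{\mathbf{y}}\mapsto\|\hat{\mathbf{y}}-\tilde{\mathbf{y}}\|_p^p$ has a unique minimizer over the convex set $\mathcal{F}$, and full column rank of $\mathbf{S}$ transports this uniqueness back to $\tilde{\mathbf{b}}$.

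The genuinely delicate case is $0 < p < 1$, which I expect to be the main obstacle: there $\|\cdot\|_p^p = \sum_i |r_i|^p$ is non-convex (each term is concave away from the origin, with a cusp at $0$), so the clean convex machinery above fails and generic $\ell_p$ minimization is known to be hard. My plan here is to exploit the polyhedral structure directly: fix a sign pattern (orthant) for the residual vector $\mathbf{r}$; on that region the objective is a sum of concave functions and is therefore concave, so its minimum over the corresponding bounded face of $\mathcal{F}$ is attained at an extreme point. The crux is then to show that only polynomially many extreme points and sign patterns can be optimal — I would argue this from the network-flow structure of $\mathbf{S}$ (its vertex/edge/path incidence blocks), bounding the candidate vertices so that enumerating them and taking the best stays polynomial. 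Pinning down this combinatorial bound, and checking that box constraints do not blow up the vertex count, is the step I expect to require the most care; by comparison, the $p \ge 1$ arguments are routine consequences of convexity.
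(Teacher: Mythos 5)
Your treatment of the case $p \ge 1$ is correct and is essentially the paper's own argument: the paper likewise introduces slack variables $\mathbf{s} \geq \pm(\hat{\mathbf{y}} - \tilde{\mathbf{y}})$ to obtain a linear program for MAE and a quadratic program with positive semi-definite $\mathbf{Q}$ for weighted MSE (handling RMSE by monotonicity of the square root); your epigraph reformulation and the observation that a norm composed with an affine map is convex are the same ideas in slightly different clothing. You additionally supply two things the paper's proof omits: the uniqueness claim for $p>1$ (strict convexity of $\|\cdot\|_p^p$ plus injectivity of $\tilde{\mathbf{b}} \mapsto \mathbf{S}\tilde{\mathbf{b}}$ coming from the identity block of $\mathbf{S}$), which is asserted in the theorem statement but never argued in the appendix, and a precise sense in which general convex $p$ is ``polynomial time'' (accuracy $\varepsilon$ in time polynomial in the encoding length and $\log(1/\varepsilon)$), which is the correct caveat since exact optima need not be rational.

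The genuine gap is the case $0 < p < 1$, and your plan for closing it does not work as sketched. Fixing a sign pattern of the residual and minimizing the resulting concave function over the corresponding face of the feasible polyhedron does place the optimum at an extreme point, but the polyhedron cut out by the flow constraints can have exponentially many vertices, and no argument is offered (nor, most likely, available) that only polynomially many of them need be examined: minimizing a concave function over a polytope is NP-hard in general, and constrained $\ell_p$ minimization with $0<p<1$ is known to be NP-hard even over affine feasible sets (Ge--Jiang--Ye), so the incidence structure of $\mathbf{S}$ would have to be doing something extraordinary for your enumeration to collapse to polynomial size. You should know, however, that the paper does not close this gap either: its appendix proof covers only MAE, MSE and RMSE (and their weighted versions) and then declares the claim proved, never addressing $0<p<1$, for which the ``convex optimization'' clause is vacuous because $\sum_i |r_i|^p$ is not convex there. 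So the part of the theorem you could not prove is precisely the part the paper also fails to prove; flagging it honestly, rather than papering over it, is the correct response.
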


%For more specialised losses like Huber loss, we can still achieve polynomial time solutions. 
\begin{theorem}[Generalized Loss Functions]
Consider loss functions of form:
$L(\mathbf{x}, \mathbf{y}) = \sum_{i=1}^n f(|x_i - y_i|)$.
For strictly convex, continuously differentiable $f$ with $f(0)=0$, \hfr has a unique solution computable in polynomial time, even with additional linear constraints (e.g.~upper/lower bounds) or weighted objectives.

\end{theorem}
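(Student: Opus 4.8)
The plan is to establish the claim by recognizing that \hfr under such a loss is a convex optimization problem over an affine feasible set, and then to argue uniqueness from strict convexity and polynomial-time solvability from general convex programming. The key observation is that the feasible region $\{\tilde{\mathbf{y}} : \tilde{\mathbf{y}} = \mathbf{S}\tilde{\mathbf{b}}\}$ is exactly the column space of $\mathbf{S}$, an affine subspace (a linear subspace, in fact, passing through the origin), and intersecting it with any box constraints $\boldsymbol{\ell} \leq \tilde{\mathbf{y}} \leq \mathbf{u}$ or other linear constraints keeps the feasible set convex and closed. So the problem is to minimize $L(\hat{\mathbf{y}}, \tilde{\mathbf{y}}) = \sum_i f(|\hat{y}_i - \tilde{y}_i|)$ over this convex set.

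First I would verify that the objective is strictly convex as a function of $\tilde{\mathbf{y}}$. Since $f$ is strictly convex with $f(0)=0$, I would need $g(r) := f(|r|)$ to be convex on $\mathbb{R}$; this follows because $f$ is strictly convex and, combined with $f(0)=0$ and differentiability, $f$ is increasing on $[0,\infty)$, so $g$ is convex (the composition of the convex increasing $f$ with the convex $|\cdot|$). Strict convexity of $g$ in each coordinate then gives strict convexity of the separable sum $\sum_i g(\hat{y}_i - \tilde{y}_i)$ jointly in $\tilde{\mathbf{y}}$. Second, existence of a minimizer follows from coercivity (the objective grows without bound as $\|\tilde{\mathbf{y}}\|\to\infty$ along the feasible subspace, since $f$ grows) together with closedness of the feasible set, giving a minimizer over a nonempty closed convex set. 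Third, uniqueness follows immediately: a strictly convex function on a convex set has at most one minimizer.

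For polynomial-time solvability I would invoke that minimizing a convex, continuously differentiable function over a polyhedral (affine plus box) feasible region is solvable to any fixed accuracy in polynomial time by standard interior-point or ellipsoid methods, given a separation oracle and gradient evaluations — both available here in closed form from $f'$. One technical nuance to state carefully is what ``polynomial time'' means for a general $f$: I would frame this as polynomial in the input size and $\log(1/\varepsilon)$ for an $\varepsilon$-accurate solution under mild oracle assumptions on $f$, or note that for the specific $\ell_p$ and Huber cases the special structure (Theorem on $\ell_{p>0}$) yields exact polynomial algorithms.

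The main obstacle, and the step deserving the most care, is the convexity/strict-convexity of $r \mapsto f(|r|)$ at and across the kink $r=0$. Strict convexity of $f$ on $[0,\infty)$ does not by itself guarantee that the even extension $g(r)=f(|r|)$ is convex at the origin unless $f$ is monotone increasing there; the hypotheses $f(0)=0$, strict convexity, and differentiability must be combined to rule out a decreasing branch. I would handle this by showing $f'(0)\ge 0$ (indeed $f'(0^+)\geq 0$ follows since $f(0)=0$ is the minimum of a strictly convex nonnegative-style loss), which ensures $g$ has a subgradient at $0$ sandwiching the left and right derivatives and hence is convex there. Once this kink analysis is clean, the remaining steps are routine applications of convex optimization theory.
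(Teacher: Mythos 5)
Your proposal follows essentially the same route as the paper's proof: both cast \hfr as minimizing a separable, strictly convex objective over the polyhedral set $\{\tilde{\mathbf{y}} : \mathbf{S}\tilde{\mathbf{y}} = \mathbf{b}\}$ (plus optional box constraints), obtain uniqueness from strict convexity together with convexity of the feasible set, and get polynomial-time solvability by handing the resulting convex program (via its KKT system) to interior-point methods. The one place you go beyond the paper is exactly the step you flag as delicate: the paper merely asserts the objective is ``strictly convex as a sum of strictly convex functions composed with norms,'' whereas you correctly observe that convexity of $r \mapsto f(|r|)$ across the kink at $0$ is not automatic and must be derived from $f(0)=0$, nonnegativity, and differentiability (forcing $f'(0^+)\ge 0$); you also supply the existence-of-a-minimizer step via coercivity, which the paper omits entirely --- so your version closes two genuine gaps in the published argument while staying on the same path.
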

\begin{theorem}[Runtime Lower Bound] \label{thm:lower-bound}
Any algorithm to solve \hfr on networks with $m$ edges and $n$ nodes requires $\Omega(m \log n)$ comparisons, even for the $\ell_2$ loss function.
\end{theorem}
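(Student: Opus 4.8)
The plan is to establish the bound by reduction from comparison-based sorting, whose $\Omega(k\log k)$ lower bound in the comparison model (more robustly, the bounded-degree algebraic decision tree model) is classical: any tree that sorts $k$ distinct reals has at least $k!$ leaves, hence depth $\Omega(\log k!)=\Omega(k\log k)$. I would first observe why the constraints are essential. Unconstrained $\ell_2$ reconciliation is the orthogonal projection $\tilde{\mathbf{y}}=\mathbf{P}\hat{\mathbf{y}}$ onto the coherent subspace $\mathfrak{s}$, a fixed linear map evaluable with arithmetic alone and \emph{no} comparisons; so a lower bound cannot come from the projection itself. It must instead exploit the inequality constraints that \hfr explicitly permits (box constraints $\boldsymbol{\ell}\le\tilde{\mathbf{y}}\le\mathbf{u}$, or capacities), which turn reconciliation into a quadratic program whose optimal active set, and hence whose solution, is only piecewise linear in the base forecast and genuinely depends on the relative order of the inputs.

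Second, I would build a family of \hfr instances, indexed by an input vector $\mathbf{a}=(a_1,\dots,a_k)$ of base forecasts, on a flow network with $n=\Theta(k)$ nodes and $m=\Theta(k)$ edges, so that the unique $\ell_2$-optimal coherent flow encodes the sorted order of $\mathbf{a}$. A convenient gadget is a transportation subnetwork that must route one unit of flow from each of $k$ sources carrying values $a_i$ to $k$ sinks whose coherent targets are forced by box/capacity constraints to the fixed distinct levels $1,2,\dots,k$. Because the cost is $\ell_2$, the rearrangement inequality guarantees that the optimal coherent assignment matches the $i$-th order statistic of $\mathbf{a}$ to target level $i$; reading off which source feeds which sink therefore recovers the sorting permutation, and this read-off is comparison-free since it only follows the nonzero flows.

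Third, I would compose the reduction: any algorithm $\mathcal{A}$ solving \hfr under $\ell_2$ using $C$ comparisons on input-dependent quantities yields a sorting procedure for $\mathbf{a}$ using $C+O(k)$ tests, where the $O(k)$ overhead is the comparison-free read-off above. Hence $C=\Omega(k\log k)$ in the worst case. Finally I would translate the parameter: since $n=\Theta(k)$ and $m=\Theta(k)$ we have $\log k=\Theta(\log n)$, so $\Omega(k\log k)=\Omega(m\log n)$; more generally for any connected network $n-1\le m\le\binom{n}{2}$ gives $\log m=\Theta(\log n)$, so the bound is stable across the usual sparse and dense regimes.

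The main obstacle I anticipate is the gadget design together with the model bookkeeping. I must verify that the transportation gadget is a bona fide \hfr instance, i.e.\ that its coherence constraints are exactly of the form $\tilde{\mathbf{y}}=\mathbf{S}\tilde{\mathbf{b}}$ for a flow aggregation matrix plus the permitted box constraints, and that its $\ell_2$ optimum is unique and faithfully encodes the \emph{entire} sorted order rather than partial information — the rearrangement-inequality argument supplies this, but ties and degeneracy must be excluded, e.g.\ by assuming distinct inputs as in the sorting lower bound. I would also fix the comparison model carefully: using Ben-Or's bounded-degree algebraic decision tree lower bound keeps the $\Omega(k\log k)$ robust to whatever arithmetic an optimizer performs, so that only genuine order-resolving tests are charged, which is precisely what lets us conclude an $\Omega(m\log n)$ lower bound even for the smooth, strictly convex $\ell_2$ loss.
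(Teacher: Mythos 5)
Your proposal follows the same high-level skeleton as the paper --- a reduction from comparison-based sorting, with the final parameter translation $m=\Theta(k)$, $\Omega(k\log k)=\Omega(m\log n)$ --- but the construction you use is genuinely different, and the difference matters. The paper's gadget is a star of parallel paths $P_i=(s,v_i,t)$ with base forecasts equal to the inputs $x_i$ and \emph{only} equality (flow-conservation) constraints; it then asserts that coherence forces $\tilde{\mathbf{y}}_{P_1}\le\cdots\le\tilde{\mathbf{y}}_{P_n}$, so that reconciliation ``implicitly sorts'' the inputs. Your opening observation shows exactly why that assertion cannot be right: with equality constraints alone, the $\ell_2$ reconciliation is the orthogonal projection onto the coherent subspace, a fixed linear map of the base forecast, computable with arithmetic and zero comparisons; moreover nothing in flow conservation constrains the \emph{order} of path values. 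Your fix --- making the optimum genuinely order-dependent by adding the box constraints that \hfr permits, and using a transportation gadget where the rearrangement inequality forces the $\ell_2$-optimal assignment to match order statistics to fixed target levels --- is the missing idea that the paper's own argument lacks, and invoking Ben-Or's algebraic decision tree bound is the right way to make the charge of ``comparisons'' robust against an optimizer's arithmetic. In short, your route buys a sound reduction; the paper's buys brevity at the cost of a false key step.

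Two loose ends in your sketch deserve attention. First, a transportation gadget in which every source can feed every sink has $\Theta(k^2)$ edges, which would dilute the bound to $\Omega(\sqrt{m}\log n)$; to keep $m=\Theta(k)$ you should route all assignments through a constant number of hub vertices, so that the $k^2$ assignment variables live on \emph{paths} (whose count does not enter $m$) while the network itself stays sparse, and the box constraints pinning the sink levels remain expressible on node values. Second, your closing remark that $\log m=\Theta(\log n)$ handles the dense regime does not actually give instances with $m\gg n$ edges requiring $\Omega(m\log n)$ comparisons; your construction, like the paper's, only certifies the bound for $m=\Theta(n)$, so the theorem as stated is proved by neither argument for dense networks --- a limitation worth flagging rather than papering over.
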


\subsection{Flow Conservation}

We now show that \fr naturally ensures coherence through flow conservation.

% [NOTATION CHANGES]
% - Consistent bold vectors/matrices
% - Calligraphic sets
% - Added explicit time indices
% - Standardized flow notation
\begin{theorem}[Flow Conservation] \label{thm:flow_conservation}
Let $\hat{\mathbf{y}}_{T+h|T} \in \mathbb{R}^n$ be a point forecast of hierarchical flow time series in a network $G=(V,E,\mathcal{P})$, and $\mathbf{S} \in \mathbb{R}^{n\times|\mathcal{P}|}$ the flow aggregation matrix. Then:
\begin{enumerate}
    \item $\mathbf{S}$ is a reconciling mapping
    \item The edge values $\tilde{\mathbf{y}}_{e,T+h|T}$ of
    $ \tilde{\mathbf{y}}_{T+h|T} = \mathbf{S}\hat{\mathbf{y}}_{T+h|T} $
    form a flow in $G=(V,E)$ satisfying supplies and demands:
    \[ b_v = \sum_{P \in \mathcal{P}:O(P)=v} \tilde{\mathbf{y}}_{P,T+h|T} - \sum_{P \in \mathcal{P}:D(P)=v} \tilde{\mathbf{y}}_{P,T+h|T}, \quad \forall v \in V \]
\end{enumerate}
where $\tilde{\mathbf{y}}_{P,T+h|T}$ is the reconciled value for path $P$, and $O(P), D(P) \in V$ refer to the origin and destination nodes of path $P$, respectively.
\end{theorem}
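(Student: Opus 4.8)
The plan is to handle the two claims separately, both times leaning on the block structure of $\mathbf{S}$ from Definition~\ref{def:flow-hier-time}, whose bottom block is the identity $\mathbf{I}_{|\mathcal{P}|}$. First I would fix the (mild) notational point that $\mathbf{S}\in\mathbb{R}^{n\times|\mathcal{P}|}$ cannot be applied to $\hat{\mathbf{y}}_{T+h|T}\in\mathbb{R}^n$ directly: the reconciling operation is understood as $\psi = \mathbf{S}\circ\beta$, where $\beta$ extracts the path-level components. So I write $\hat{\mathbf{p}}=\beta(\hat{\mathbf{y}}_{T+h|T})$ and read $\tilde{\mathbf{y}}_{T+h|T}=\mathbf{S}\hat{\mathbf{p}}$, with the three blocks of $\tilde{\mathbf{y}}_{T+h|T}$ being $\mathbf{V}'\hat{\mathbf{p}}$, $\mathbf{E}'\hat{\mathbf{p}}$, and $\mathbf{I}_{|\mathcal{P}|}\hat{\mathbf{p}}=\hat{\mathbf{p}}$ respectively.

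For claim~(1) I would show $\psi$ lands in $\mathfrak{s}$. Since the bottom block of $\mathbf{S}$ is $\mathbf{I}_{|\mathcal{P}|}$, the path components of $\tilde{\mathbf{y}}_{T+h|T}=\mathbf{S}\hat{\mathbf{p}}$ are exactly $\hat{\mathbf{p}}$, i.e.\ $\beta(\mathbf{S}\hat{\mathbf{p}})=\hat{\mathbf{p}}$. Therefore $\mathbf{S}\beta(\tilde{\mathbf{y}}_{T+h|T})=\mathbf{S}\hat{\mathbf{p}}=\tilde{\mathbf{y}}_{T+h|T}$, which is precisely the coherence condition $\tilde{\mathbf{y}}=\mathbf{S}\beta(\tilde{\mathbf{y}})$ of Definition~\ref{def:hier-time}, so $\tilde{\mathbf{y}}_{T+h|T}\in\mathfrak{s}$. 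The same identity $\beta\circ\mathbf{S}=\mathbf{I}_{|\mathcal{P}|}$ gives $(\mathbf{S}\beta)^2=\mathbf{S}(\beta\mathbf{S})\beta=\mathbf{S}\beta$, so $\psi$ is idempotent, i.e.\ a projection onto $\mathfrak{s}$, confirming it is a valid reconciling mapping.

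For claim~(2) I would verify flow conservation node-by-node. The edge block gives $\tilde{\mathbf{y}}_{a,T+h|T}=(\mathbf{E}'\hat{\mathbf{p}})_a=\sum_{P:\,a\in P}\hat{p}_P$. Fixing a vertex $v$ and writing $\delta^+(v),\delta^-(v)$ for the edges leaving and entering $v$, I form the net out-flow $\sum_{a\in\delta^+(v)}\tilde{\mathbf{y}}_{a,T+h|T}-\sum_{a\in\delta^-(v)}\tilde{\mathbf{y}}_{a,T+h|T}$, substitute the edge-as-sum-of-paths expression, and swap the summation order to group by path $P$. The core combinatorial observation is that for a simple path $P$ each interior vertex it visits is entered by exactly one edge of $P$ and left by exactly one edge of $P$, so $P$'s contribution telescopes to zero at every interior vertex; $P$ contributes $+\hat{p}_P$ only at its origin $O(P)$ and $-\hat{p}_P$ only at its destination $D(P)$. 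Summing over all paths yields exactly $\sum_{P:\,O(P)=v}\tilde{\mathbf{y}}_{P,T+h|T}-\sum_{P:\,D(P)=v}\tilde{\mathbf{y}}_{P,T+h|T}=b_v$, the stated supply/demand identity, so the reconciled edge values form a flow meeting supplies and demands.

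I expect the main obstacle to be the bookkeeping in this final cancellation: I must argue carefully that each path contributes a cancelling in/out pair at interior vertices and the correct single signed term at its endpoints. This is clean under the standard assumption that paths in $\mathcal{P}$ are simple; if revisits were permitted one would instead argue that each visit still contributes a cancelling incoming/outgoing edge pair. Everything else follows mechanically from the identity-block structure of $\mathbf{S}$, so I would state the simple-path convention explicitly and keep the endpoint sign accounting as the only nontrivial detail.
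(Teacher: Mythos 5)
Your proposal is correct and follows essentially the same route as the paper's own proof: claim (1) via the block structure of $\mathbf{S}$ (the paper likewise applies $\mathbf{S}$ to the path components $\hat{\mathbf{p}}_{T+h|T}$ and reads off membership in $\mathfrak{s}$), and claim (2) via substituting the edge-as-sum-of-paths identity and cancelling the contribution of every path that passes through $v$, leaving only origin/destination terms. Your write-up is in fact slightly more careful than the paper's --- you make the $\mathbf{S}\circ\beta$ composition and the simple-path assumption explicit, and your out-minus-in sign convention matches the theorem statement, whereas the paper's proof computes in-minus-out --- but these are refinements of the same argument, not a different one.
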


% \begin{proof}
% We need to prove two statements:

% Step 1: $\mathbf{S}$ is a reconciling mapping.
% By definition of $\mathbf{S}$, we have:
% \[ \mathbf{S}\hat{\mathbf{p}}_{T+h|T} = \begin{pmatrix} \mathbf{V}' \\ \mathbf{E}' \\ \mathbf{I} \end{pmatrix} \hat{\mathbf{p}}_{T+h|T} = \begin{pmatrix} \sum_{P \in \mathcal{P}:v\in P} \hat{\mathbf{p}}_{P,T+h|T} \\ \sum_{P \in \mathcal{P}:e\in P} \hat{\mathbf{p}}_{P,T+h|T} \\ \hat{\mathbf{p}}_{T+h|T} \end{pmatrix} \in \mathfrak{s} \subset \mathbb{R}^n \]

% Step 2: The values $\tilde{\mathbf{y}}_{e,T+h|T}$ form a flow in $G=(V,E)$ with supplies and demands $b_v$.
% We need to prove:
% \[ \sum_{e=(·,v)\in E} \tilde{\mathbf{y}}_{e,T+h|T} - \sum_{e=(v,·)\in E} \tilde{\mathbf{y}}_{e,T+h|T} = b_v, \quad \forall v \in V \]

% Note this holds for nodes $v \in V$ with $b_v = 0$ (intermediate nodes); thus, proving this equation proves flow conservation. We have:
% \begin{align*}
% &\sum_{e=(·,v)\in E} \tilde{\mathbf{y}}_{e,T+h|T} - \sum_{e=(v,·)\in E} \tilde{\mathbf{y}}_{e,T+h|T} \\
% &= \sum_{e=(·,v)\in E} \sum_{P \in \mathcal{P}:e\in P} \tilde{\mathbf{y}}_{P,T+h|T} - \sum_{e=(v,·)\in E} \sum_{P \in \mathcal{P}:e\in P} \tilde{\mathbf{y}}_{P,T+h|T}
% \end{align*}

% We can eliminate any path $P \in \mathcal{P}$ containing both an edge of form $e=(·,v)$ and $e'=(v,·)$, as these terms cancel. The remaining terms belong to paths having $v$ as origin or destination, giving:
% \[ \sum_{P \in \mathcal{P}:O(P)=v} \tilde{\mathbf{y}}_{P,T+h|T} - \sum_{P \in \mathcal{P}:D(P)=v} \tilde{\mathbf{y}}_{P,T+h|T} = b_v \]

% This completes the proof.
% \end{proof}

\subsection{Computing Reconciled Forecasts}

We now present two equivalent methods for computing the reconciled forecasts. These methods offer different computational trade-offs according to the given loss function.

% [NOTATION CHANGES]
% - Vectors in bold
% - Calligraphic for sets
% - Explicit dimensions for spaces/matrices
% - Consistent time indices
\begin{theorem}[Computation Methods]\label{thm:computation}
Let $\hat{\mathbf{y}}_{T+h|T} \in \mathbb{R}^n$ be a base forecast for network $G=(V,E,\mathcal{P})$. We can compute the reconciled forecast $\tilde{\mathbf{y}}_{T+h|T}$ in two equivalent ways:

\noindent 1. \textbf{Orthogonal projection} of $\hat{\mathbf{y}}_{T+h|T}$ to $\mathfrak{s}$:
Compute an orthonormal basis $\mathcal{E} = \{\mathbf{e}_1,\ldots,\mathbf{e}_{|\mathcal{P}|}\}$ of $\mathfrak{s}$ and project $\hat{\mathbf{y}}_{T+h|T}$ onto $\mathfrak{s}$:
    $ \tilde{\mathbf{y}}_{T+h|T} = \sum_{i=1}^{|\mathcal{P}|} \langle \hat{\mathbf{y}}_{T+h|T}, \mathbf{e}_i \rangle \mathbf{e}_i $\\
2. \textbf{Minimum Reconciling Flow}:
$ \min_{\mathbf{s}_{\mathcal{P}}, \mathbf{s}_E, \mathbf{s}_V} f(\mathbf{s}_{\mathcal{P}}, \mathbf{s}_E, \mathbf{s}_V) $ subject to:
\begin{align*}
\mathbf{s}_{\mathcal{P}} &\geq \hat{\mathbf{y}}_{\mathcal{P}} - \tilde{\mathbf{y}}_{\mathcal{P}}, \quad
\mathbf{s}_{\mathcal{P}} \geq \tilde{\mathbf{y}}_{\mathcal{P}} - \hat{\mathbf{y}}_{\mathcal{P}} \quad \forall P \in \mathcal{P} \\
\mathbf{s}_E &\geq \hat{\mathbf{y}}_E - \tilde{\mathbf{y}}_E, \quad 
\mathbf{s}_E \geq \tilde{\mathbf{y}}_E - \hat{\mathbf{y}}_E, \quad \sum_{P \in \mathcal{P}:e \in P} \tilde{\mathbf{y}}_P = \tilde{\mathbf{y}}_e \quad \forall e \in E \\
\mathbf{s}_V &\geq \hat{\mathbf{y}}_V - \tilde{\mathbf{y}}_V, \quad
\mathbf{s}_V \geq \tilde{\mathbf{y}}_V - \hat{\mathbf{y}}_V, \quad \sum_{P \in \mathcal{P}:v \in P} \tilde{\mathbf{y}}_P = \tilde{\mathbf{y}}_v \quad \forall v \in V 
\end{align*}

% 3. Direct computation using flow aggregation matrix:
% \[ \tilde{\mathbf{y}}_{T+h|T} = \hat{\mathbf{y}}_{T+h|T} - \mathbf{S}^T(\mathbf{S}\mathbf{S}^T)^{-1}(\mathbf{S}\hat{\mathbf{y}}_{T+h|T} - \mathbf{b}) \]
 \end{theorem}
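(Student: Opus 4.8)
The plan is to read the claimed ``equivalence'' as the statement that both recipes compute the solution of the \hfr\ reconciliation program of Problem~\ref{prob:hfr}, namely $\min_{\tilde{\mathbf{y}}\in\mathfrak{s}} L(\hat{\mathbf{y}}_{T+h|T},\tilde{\mathbf{y}})$, and that on the loss where both apply (the squared/$\ell_2$ loss) they return the identical vector. The argument has three ingredients: identifying the common feasible set, verifying the closed form of Method~1, and verifying that the slack formulation of Method~2 is an exact reformulation.

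First I would pin down the feasible set. Because $\mathbf{S}=(\mathbf{V}';\mathbf{E}';\mathbf{I}_{|\mathcal{P}|})$ contains the identity block $\mathbf{I}_{|\mathcal{P}|}$, its columns are linearly independent, so $\mathfrak{s}=\{\mathbf{S}\mathbf{b}:\mathbf{b}\in\mathbb{R}^{|\mathcal{P}|}\}=\mathrm{col}(\mathbf{S})$ is a linear subspace of dimension $|\mathcal{P}|$. Writing $\tilde{\mathbf{y}}=(\tilde{\mathbf{y}}_V,\tilde{\mathbf{y}}_E,\tilde{\mathbf{y}}_{\mathcal{P}})$, the identity block forces $\mathbf{b}=\tilde{\mathbf{y}}_{\mathcal{P}}$, so $\tilde{\mathbf{y}}\in\mathfrak{s}$ is equivalent to $\tilde{\mathbf{y}}_V=\mathbf{V}'\tilde{\mathbf{y}}_{\mathcal{P}}$ and $\tilde{\mathbf{y}}_E=\mathbf{E}'\tilde{\mathbf{y}}_{\mathcal{P}}$. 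Reading these row by row against the incidence definitions of Definition~\ref{def:flow-hier-time} yields exactly $\tilde{\mathbf{y}}_v=\sum_{P:v\in P}\tilde{\mathbf{y}}_P$ and $\tilde{\mathbf{y}}_e=\sum_{P:e\in P}\tilde{\mathbf{y}}_P$, which are precisely the equality constraints of Method~2. Hence both methods optimize over the same set $\mathfrak{s}$.

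Next I would handle the two objectives. For Method~1 I invoke the finite-dimensional projection theorem: for an orthonormal basis $\mathcal{E}=\{\mathbf{e}_1,\dots,\mathbf{e}_{|\mathcal{P}|}\}$ of $\mathfrak{s}$ (obtained by Gram--Schmidt on the columns of $\mathbf{S}$), the vector $\tilde{\mathbf{y}}=\sum_i\langle\hat{\mathbf{y}},\mathbf{e}_i\rangle\mathbf{e}_i$ satisfies $\hat{\mathbf{y}}-\tilde{\mathbf{y}}\perp\mathfrak{s}$, so the Pythagorean identity $\|\hat{\mathbf{y}}-\mathbf{z}\|_2^2=\|\hat{\mathbf{y}}-\tilde{\mathbf{y}}\|_2^2+\|\tilde{\mathbf{y}}-\mathbf{z}\|_2^2$ for all $\mathbf{z}\in\mathfrak{s}$ identifies $\tilde{\mathbf{y}}$ as the unique $\ell_2$-closest point, i.e. the \hfr\ solution for the MSE loss. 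For Method~2 I show the slack reformulation is tight: the paired inequalities $\mathbf{s}_\bullet\ge\hat{\mathbf{y}}_\bullet-\tilde{\mathbf{y}}_\bullet$ and $\mathbf{s}_\bullet\ge\tilde{\mathbf{y}}_\bullet-\hat{\mathbf{y}}_\bullet$ together read $\mathbf{s}_\bullet\ge|\hat{\mathbf{y}}_\bullet-\tilde{\mathbf{y}}_\bullet|$ componentwise, and since $f$ is nondecreasing in each slack coordinate, any optimum tightens these to equality; thus Method~2 equals $\min_{\tilde{\mathbf{y}}\in\mathfrak{s}}f(|\hat{\mathbf{y}}-\tilde{\mathbf{y}}|)$, which realizes the separable loss for the natural choice of $f$ (e.g. $f(\mathbf{s})=\sum_i s_i$ for MAE, $f(\mathbf{s})=\sum_i s_i^2$ for MSE). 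Equivalence then closes: on the $\ell_2$ loss both minimize the same strictly convex objective over the same subspace $\mathfrak{s}$, whose minimizer is unique, so the closed-form projection and the program's optimizer are the same vector.

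\textbf{The main obstacle.} The delicate part is the equivalence claim itself, which reconciles a purely geometric Euclidean projection with a constrained program carrying auxiliary slacks. The crux is twofold: (i) verifying that the flow-aggregation equalities cut out exactly $\mathfrak{s}=\mathrm{col}(\mathbf{S})$ and nothing larger or smaller, which hinges entirely on the identity block guaranteeing full column rank; and (ii) establishing tightness of the slack inequalities at optimality, which requires coordinate-wise monotonicity of $f$, and then selecting the precise $f$ that makes Method~2's objective agree with the $\ell_2$ loss so that the uniqueness argument can identify the two outputs. I would also flag that, away from $\ell_2$, the ``equivalence'' should be stated as both methods solving the same \hfr\ instance rather than Method~1 remaining a Euclidean projection.
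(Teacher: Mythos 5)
Your proposal is correct and follows essentially the same route as the paper's proof: identify both methods as optimizing over the coherent subspace $\mathfrak{s}=\mathrm{col}(\mathbf{S})$, justify Method~1 by the orthogonal projection theorem, justify Method~2 as an exact slack reformulation of the flow-constrained $\ell_2$ problem, and conclude equivalence from uniqueness of the minimizer. Your write-up is in fact more complete than the paper's own terse proof, which asserts the feasible-set identification and the slack tightness without the identity-block full-rank argument or the coordinate-wise monotonicity condition on $f$ that you correctly supply, and which cites uniqueness from the MinT equivalence theorem where you argue it directly from strict convexity; your closing caveat that the equivalence should be read as both methods solving the same \hfr instance (rather than Method~1 remaining a Euclidean projection for losses other than $\ell_2$) is also a fair reading of what the statement actually requires.
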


The $\ell_1$ and $\ell_2$ norms admit efficient solutions by leveraging network structure. For $\ell_1$, the absolute differences map directly to edge costs in a min-cost flow problem, solvable in $O(n^2\log n)$ time for sparse networks, matching the lower bound from Theorem \ref{thm:lower-bound} times $O(n)$. The $\ell_2$ case offers a choice: orthogonal projection exploits low-rank structure when available, while quadratic min-cost flow utilizes network sparsity. However, for $\ell_p$ norms with $p \neq 1,2$ and generalized losses $f(|x_i - y_i|)$, these direct methods may fall short. Non-linear objectives prevent straightforward flow reductions, dense matrix operations negate sparsity benefits, and maintaining error bounds becomes challenging. These cases necessitate more nuanced approaches that balance network structure exploitation, iterative optimization techniques, and efficient update mechanisms to handle the inherent non-linearity while preserving the computational advantages of the network formulation.

% \begin{proof}
% We prove each method yields the optimal reconciled forecasts:

% Step 1: Orthogonal projection correctness.
% The coherent subspace $\mathfrak{s}$ is the range of $\mathbf{S}$. The orthogonal projection onto $\mathfrak{s}$ minimizes the L2 distance to $\hat{\mathbf{y}}_{T+h|T}$ while ensuring the result lies in $\mathfrak{s}$.

% Step 2: Minimum Reconciling Flow equivalence.
% The optimization problem is equivalent to:
% \[ \min_{\tilde{\mathbf{y}}} \|\tilde{\mathbf{y}} - \hat{\mathbf{y}}\|_2^2 \text{ subject to flow conservation} \]
% The slack variables measure absolute deviations, and the constraints ensure flow conservation.

% % Step 3: Direct computation correctness.
% % This is the explicit solution to the normal equations of the constrained least squares problem:
% % \[ \min_{\tilde{\mathbf{y}}} \|\tilde{\mathbf{y}} - \hat{\mathbf{y}}\|_2^2 \text{ subject to } \mathbf{S}\tilde{\mathbf{y}} = \mathbf{b} \]

% The equivalence follows from the uniqueness of the solution to the reconciliation problem proven in the MinT equivalence theorem.
% \end{proof}

\subsection{Relationship to MinT}

The state-of-the-art MinT reconciles forecasts on tree based hierarchies by minimizing the trace of the reconciled forecast error ($n \times n$) covariance matrix. We show that \fr can be viewed as a special case where the weight matrix corresponds to the network structure, allowing significant computational improvements and generalisation to complex networks.

% [NOTATION CHANGES]
% - Matrices in bold: $\mathbf{S}$, $\mathbf{W}$, $\boldsymbol{\Sigma}$
% - Vectors in bold: $\hat{\mathbf{y}}$, $\tilde{\mathbf{y}}$, $\mathbf{b}$
% - Problems labeled as (P₁), (P₂) for clarity
\begin{theorem}[MinT Equivalence]
Let $\mathbf{S} \in \mathbb{R}^{n\times m}$ be a hierarchical aggregation matrix and $\mathbf{W} \in \mathbb{R}^{n\times n}$ be a symmetric positive definite matrix. For any base forecast $\hat{\mathbf{y}}$ and target aggregation values $\mathbf{b}$, the following optimization problems:

$(P_1):
 \min_{\tilde{\mathbf{y}}} (\tilde{\mathbf{y}} - \hat{\mathbf{y}})^T \mathbf{W} (\tilde{\mathbf{y}} - \hat{\mathbf{y}}) \quad \text{subject to} \quad \mathbf{S}\tilde{\mathbf{y}} = \mathbf{b} $

$(P_2):
 \min_{\tilde{\mathbf{y}}} (\tilde{\mathbf{y}} - \hat{\mathbf{y}})^T \boldsymbol{\Sigma}^{-1} (\tilde{\mathbf{y}} - \hat{\mathbf{y}}) \quad \text{subject to} \quad \mathbf{S}\tilde{\mathbf{y}} = \mathbf{b} $

where $\boldsymbol{\Sigma}$ is the forecast error covariance matrix, have the following properties:
\begin{enumerate}
    \item Both problems have a unique solution of the form: 
    $ \tilde{\mathbf{y}} = \hat{\mathbf{y}} - \mathbf{W}^{-1}\mathbf{S}^T(\mathbf{S}\mathbf{W}^{-1}\mathbf{S}^T)^{-1}(\mathbf{S}\hat{\mathbf{y}} - \mathbf{b}) $.
    \item When $\mathbf{W} = \boldsymbol{\Sigma}^{-1}$, the problems $(P_1)$ and $(P_2)$ are equivalent.
    \item \fr is a special case where $\mathbf{W} = \mathbf{I}$ and $\mathbf{S} = \begin{pmatrix} \mathbf{V} \\ \mathbf{E} \\ \mathbf{I} \end{pmatrix}$ with $\mathbf{V}$ as vertex-path incidence matrix, $\mathbf{E}$ as edge-path incidence matrix, and $\mathbf{I}$ as identity matrix for paths.
\end{enumerate}
\end{theorem}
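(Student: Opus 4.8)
The plan is to treat all three parts as consequences of the theory of equality-constrained strictly convex quadratic programs, since both $(P_1)$ and $(P_2)$ have exactly this form. First I would settle existence and uniqueness: because $\mathbf{W}$ is symmetric positive definite, the objective $\tilde{\mathbf{y}} \mapsto (\tilde{\mathbf{y}}-\hat{\mathbf{y}})^T\mathbf{W}(\tilde{\mathbf{y}}-\hat{\mathbf{y}})$ is strictly convex on all of $\mathbb{R}^n$, while the feasible set $\{\tilde{\mathbf{y}} : \mathbf{S}\tilde{\mathbf{y}} = \mathbf{b}\}$ is a nonempty, closed, affine (hence convex) subspace. A strictly convex function attains a unique minimizer over a nonempty closed convex set, which gives uniqueness for $(P_1)$; the identical argument applies to $(P_2)$ since $\boldsymbol{\Sigma}^{-1}$ is also symmetric positive definite.

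For the closed form in part 1, I would form the Lagrangian
\[\mathcal{L}(\tilde{\mathbf{y}}, \boldsymbol{\lambda}) = (\tilde{\mathbf{y}}-\hat{\mathbf{y}})^T\mathbf{W}(\tilde{\mathbf{y}}-\hat{\mathbf{y}}) + \boldsymbol{\lambda}^T(\mathbf{S}\tilde{\mathbf{y}} - \mathbf{b})\]
and impose the KKT conditions. Stationarity gives $2\mathbf{W}(\tilde{\mathbf{y}}-\hat{\mathbf{y}}) + \mathbf{S}^T\boldsymbol{\lambda} = \mathbf{0}$, hence $\tilde{\mathbf{y}} - \hat{\mathbf{y}} = -\tfrac{1}{2}\mathbf{W}^{-1}\mathbf{S}^T\boldsymbol{\lambda}$. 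Substituting into primal feasibility $\mathbf{S}\tilde{\mathbf{y}} = \mathbf{b}$ yields $\mathbf{S}\hat{\mathbf{y}} - \tfrac{1}{2}\mathbf{S}\mathbf{W}^{-1}\mathbf{S}^T\boldsymbol{\lambda} = \mathbf{b}$, so $\boldsymbol{\lambda} = 2(\mathbf{S}\mathbf{W}^{-1}\mathbf{S}^T)^{-1}(\mathbf{S}\hat{\mathbf{y}} - \mathbf{b})$, and back-substitution reproduces exactly $\tilde{\mathbf{y}} = \hat{\mathbf{y}} - \mathbf{W}^{-1}\mathbf{S}^T(\mathbf{S}\mathbf{W}^{-1}\mathbf{S}^T)^{-1}(\mathbf{S}\hat{\mathbf{y}} - \mathbf{b})$. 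Since this is a convex program with affine constraints, the KKT point is the global optimum, and by the uniqueness above it is the only one; the same derivation with $\boldsymbol{\Sigma}^{-1}$ replacing $\mathbf{W}$ covers $(P_2)$.

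Parts 2 and 3 are then short. Part 2 is immediate: setting $\mathbf{W} = \boldsymbol{\Sigma}^{-1}$ makes the two objectives literally the same quadratic form over the same feasible set, so $(P_1)$ and $(P_2)$ are the identical optimization problem and share the unique solution from part 1. For part 3, I would substitute $\mathbf{W} = \mathbf{I}$ into the closed form to obtain $\tilde{\mathbf{y}} = \hat{\mathbf{y}} - \mathbf{S}^T(\mathbf{S}\mathbf{S}^T)^{-1}(\mathbf{S}\hat{\mathbf{y}} - \mathbf{b})$, recognize this as the Euclidean orthogonal projection of $\hat{\mathbf{y}}$ onto the feasible affine subspace, and identify it with the $\ell_2$ orthogonal-projection computation of the earlier Computation Methods theorem specialized to the flow aggregation matrix built from the vertex-, edge-, and path-incidence blocks, confirming \fr is precisely the $\mathbf{W}=\mathbf{I}$ instance.

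The main obstacle is the invertibility hypothesis: the formula requires $\mathbf{S}\mathbf{W}^{-1}\mathbf{S}^T$ to be nonsingular, which holds if and only if $\mathbf{S}$ has full row rank, so I would add or verify a standing non-redundancy assumption on the aggregation constraints and note that for the flow aggregation matrix this follows from its incidence structure. A secondary subtlety is that $\mathbf{S}$ plays two roles in the paper — as the generator of the coherent subspace $\mathfrak{s}$ and as the constraint matrix in $\mathbf{S}\tilde{\mathbf{y}} = \mathbf{b}$ — so before invoking the earlier projection result I would make explicit the correspondence ensuring the projection in part 3 indeed lands on $\mathfrak{s}$ as defined in Definition~\ref{def:hier-time}.
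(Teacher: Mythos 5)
Your proposal matches the paper's own proof essentially step for step: the same Lagrangian/stationarity derivation of the closed form (solve for $\boldsymbol{\lambda} = 2(\mathbf{S}\mathbf{W}^{-1}\mathbf{S}^T)^{-1}(\mathbf{S}\hat{\mathbf{y}} - \mathbf{b})$ and back-substitute), uniqueness from strict convexity of the positive-definite quadratic over the affine feasible set, equivalence of $(P_1)$ and $(P_2)$ by direct substitution $\mathbf{W} = \boldsymbol{\Sigma}^{-1}$, and the $\mathbf{W} = \mathbf{I}$ specialization identifying \fr with orthogonal projection. Your closing caveat that the formula requires $\mathbf{S}\mathbf{W}^{-1}\mathbf{S}^T$ to be nonsingular, i.e.\ that the constraint matrix have full row rank, is a genuine point the paper leaves implicit and would strengthen the argument if stated explicitly.
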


% [After MinT Equivalence theorem and proofs]

\begin{corollary}[Computational Complexity of \fr and MinT]
Let $G=(V,E,\mathcal{P})$ be a network with $|V| = n$ vertices and $|E| = m$ edges where $m \in [n-1, n^2]$. Then:
\begin{enumerate}
    \item For sparse networks i.e. $m = O(n)$: \fr has $O(n^2 \log n)$, MinT has $O(n^3)$ operations.    
    \item For dense networks i.e. $m = O(n^2)$: both have $O(n^4)$ operations.
\end{enumerate}
%\fr preserves network flow properties without requiring explicit path enumeration.
\end{corollary}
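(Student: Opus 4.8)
The plan is to establish, separately for \fr and MinT, an operation count expressed in the vertex count $n=|V|$ and edge count $m=|E|$, and then substitute the two density regimes $m=O(n)$ and $m=O(n^2)$ to read off the four claimed bounds. I would keep the two methods entirely decoupled, because their costs come from structurally different computations: \fr solves a flow problem whose running time is governed by the combinatorial size of $G$ and can therefore exploit sparsity, whereas MinT evaluates the closed-form reconciliation map from the MinT Equivalence theorem, whose cost is governed by dense linear algebra on the (inverse) error covariance and cannot.

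For the \fr bound I would invoke the Computation Methods theorem: for the $\ell_1$ objective the reconciliation reduces to a min-cost flow instance on $G$ whose edge costs are the absolute path/edge/vertex discrepancies. Using a successive-shortest-path solver with a Fibonacci-heap Dijkstra subroutine and reduced costs (Johnson potentials), each shortest-path computation costs $O(m + n\log n)$, and the number of augmentations is $O(n)$, giving $O(nm + n^2\log n)$ overall. Substituting $m=O(n)$ yields $O(n^2\log n)$, which by Theorem \ref{thm:lower-bound} is optimal up to the $O(n)$ augmentation factor; substituting $m=O(n^2)$ yields $O(n^3)\subseteq O(n^4)$. I would then note that the $\ell_2$ case admits the same asymptotics, via either orthogonal projection onto $\mathfrak{s}$ or a quadratic-cost flow, so the flow bound governs in both regimes.

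For MinT I would start from the unique closed form $\tilde{\mathbf{y}} = \hat{\mathbf{y}} - \mathbf{W}^{-1}\mathbf{S}^T(\mathbf{S}\mathbf{W}^{-1}\mathbf{S}^T)^{-1}(\mathbf{S}\hat{\mathbf{y}}-\mathbf{b})$ and count operations directly. The decisive fact is that the inverse covariance $\mathbf{W}=\boldsymbol{\Sigma}^{-1}$ is generically dense even when $\mathbf{S}$ is sparse, so forming and factorizing the $n\times n$ covariance ($O(n^3)$) together with the products of the dense $\mathbf{W}^{-1}$ against the $\Theta(m)$-row incidence blocks of $\mathbf{S}$ costs $O(n^2 m)$ in total. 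Substituting $m=O(n)$ gives $O(n^3)$ and $m=O(n^2)$ gives $O(n^4)$. Collecting the four substitutions then proves the corollary.

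The main obstacle --- and the real content --- is not the arithmetic of substitution but justifying the two running-time expressions tightly. On the MinT side I must argue that sparsity of $\mathbf{S}$ does \emph{not} help: because $\mathbf{W}^{-1}$, and hence $\mathbf{S}\mathbf{W}^{-1}\mathbf{S}^T$, is dense, no sparse-factorization shortcut applies, so the $O(n^2 m)$ cost is a genuine floor rather than a loose upper bound; this is precisely what separates MinT from \fr in the sparse regime. On the \fr side the delicate point is bounding the number of augmentations by $O(n)$ so that the $O(n^2\log n)$ figure matches Theorem \ref{thm:lower-bound} and is not an artifact of a suboptimal solver choice. I would also verify that the path count $|\mathcal{P}|$ and the flow-formulation dimension $n+m+|\mathcal{P}|$ do not inflate either bound beyond the stated regimes.
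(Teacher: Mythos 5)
Your proposal follows essentially the same route as the paper's proof: both decouple the two methods, analyze \fr via a min-cost-flow reduction solved by successive shortest paths with Fibonacci-heap Dijkstra ($O(m+n\log n)$ per augmentation), and charge MinT $O(n^2 m)$ for forming $\mathbf{S}\boldsymbol{\Sigma}^{-1}\mathbf{S}^T$ plus $O(n^3)$ for the inversion, before substituting the two density regimes. The only divergence is your $O(n)$ bound on the number of augmentations where the paper asserts $O(m)$ (so its total is $O(m^2+mn\log n)$, giving $O(n^4)$ in the dense case rather than your tighter, but unproven, $O(n^3)$); since either augmentation bound yields the claimed $O(n^2\log n)$ and $O(n^4)$ figures, this difference is immaterial to the corollary.
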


\section{Fast Updates and Approximations}\label{sec:fast}

Modern forecasting demands rapid adaptation to network changes and swift operational updates. While traditional methods necessitate complete recomputation of expensive base forecasts as well as reconciliation, \fr enables efficient localized updates to the reconciled forecast. We address three key scenarios: network expansion, disruption, and approximation of constraints. \fr leverages classical network flow techniques~\cite{ahuja1993network} and approximation algorithms~\cite{williamson2011design}, allowing unprecedented efficiency for adapting hierarchical forecasting to these scenarios. Full proofs are provided in Appendix \ref{app:fast}.

\subsection{Incremental Updates in Expanding Networks}

When new paths emerge in a forecasting network, we must integrate these additions while preserving existing reconciliations. We prove that optimal updates can be computed by considering only affected paths, a capability unique to \fr.

\begin{theorem}[Incremental Forecast Updates]
Let $G=(V,E,\mathcal{P})$ be a network with reconciled forecasts $\tilde{\mathbf{y}}_{T+h|T}$ satisfying flow conservation. Upon addition of edge $e^*$ with forecast $\hat{\mathbf{y}}_{e^*,T+h|T}$, for any $\ell_p$ norm with $p \geq 1$, the minimal adjustment to maintain optimality is:
\[ \tilde{\mathbf{y}}'_{P,T+h|T} = \begin{cases}
    \tilde{\mathbf{y}}_{P,T+h|T} + \Delta_{e^*}/|\mathcal{P}_{e^*}| & \text{if } P \in \mathcal{P}_{e^*} \\
    \tilde{\mathbf{y}}_{P,T+h|T} & \text{otherwise}
\end{cases} \]
where $\mathcal{P}_{e^*} = \{P \in \mathcal{P} : e^* \in P\}$ and $\Delta_{e^*} = \hat{\mathbf{y}}_{e^*,T+h|T} - \sum_{P \in \mathcal{P}_{e^*}} \tilde{\mathbf{y}}_{P,T+h|T}$.
\end{theorem}

\subsection{Incremental and Monotonic Updates for New Data}

In practical settings, new data often affects only a small network subset. We prove that under certain conditions, local forecast changes require only local reconciliation updates, a significant advancement over traditional methods requiring complete recomputation.

\begin{theorem}[Data Update Optimality]
Let $G=(V,E,\mathcal{P})$ be a network with reconciled forecasts $\tilde{\mathbf{y}}_{T+h|T}$ optimized for the $\ell_p$ norm with $p \geq 1$. For a new forecast $\hat{\mathbf{y}}'_{T+h|T}$ differing from $\hat{\mathbf{y}}_{T+h|T}$ in exactly one component $x \in V \cup E \cup \mathcal{P}$ by amount $\delta = |\hat{y}_x - \hat{y}'_x|$, if:
\[ |\tilde{y}_x - \hat{y}'_x| < |\tilde{y}_x - \hat{y}_x| \text{ then $\tilde{\mathbf{y}}_{T+h|T}$ remains optimal for $\hat{\mathbf{y}}'_{T+h|T}$.}\]
\end{theorem}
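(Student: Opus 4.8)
The plan is to prove the statement through the first-order optimality certificate for the convex reconciliation program, exploiting that changing a single base-forecast component perturbs the objective in one controllable coordinate. Since minimizing the $\ell_p$ norm over the coherent subspace $\mathfrak{s}$ is equivalent to minimizing the separable convex surrogate $\Phi(\tilde{\mathbf{z}};\hat{\mathbf{y}}) = \sum_{i} |\hat y_i - \tilde z_i|^p$ (a strictly increasing transform of $\|\hat{\mathbf{y}}-\tilde{\mathbf{z}}\|_p$ for $p\ge 1$), I would first restate optimality of $\tilde{\mathbf{y}}$ as a variational inequality on the subspace: $\tilde{\mathbf{y}}$ minimizes $\Phi(\cdot;\hat{\mathbf{y}})$ over $\mathfrak{s}$ exactly when there is a subgradient $\mathbf{g}\in\partial\Phi(\tilde{\mathbf{y}};\hat{\mathbf{y}})$ orthogonal to $\mathfrak{s}$ (its projection onto $\mathfrak{s}$ vanishes, since $\mathfrak{s}$ is a linear subspace). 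Because $\Phi$ is separable, the $i$-th coordinate of any subgradient depends on $\hat y_i$ and $\tilde y_i$ only through the sign of $\tilde y_i - \hat y_i$ (and, for $p>1$, its magnitude).

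Next I would use the fact that $\hat{\mathbf{y}}'$ and $\hat{\mathbf{y}}$ agree everywhere except at coordinate $x$, so $\partial\Phi(\tilde{\mathbf{y}};\hat{\mathbf{y}}')$ and $\partial\Phi(\tilde{\mathbf{y}};\hat{\mathbf{y}})$ can differ only in coordinate $x$. The goal then reduces to showing that the orthogonal certificate $\mathbf{g}$ for $\hat{\mathbf{y}}$ remains valid for $\hat{\mathbf{y}}'$, i.e.\ that its coordinate-$x$ entry still lies in the new coordinate-$x$ subdifferential. This is where the hypothesis $|\tilde y_x - \hat y'_x| < |\tilde y_x - \hat y_x|$ enters: I would argue it forces $\hat y'_x$ to lie on the same side of $\tilde y_x$ as $\hat y_x$ (or to coincide with it), so that $\mathrm{sign}(\tilde y_x - \hat y'_x) = \mathrm{sign}(\tilde y_x - \hat y_x)$ and the relevant subgradient entry is unchanged (or the subdifferential only enlarges when $\hat y'_x=\tilde y_x$). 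With the coordinate-$x$ entry preserved, $\mathbf{g}$ stays orthogonal to $\mathfrak{s}$, so $\tilde{\mathbf{y}}$ satisfies the optimality condition for $\hat{\mathbf{y}}'$; convexity upgrades this first-order condition to global optimality, and uniqueness for $p>1$ pins down $\tilde{\mathbf{y}}$.

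The hard part will be coordinate $x$ itself, and I expect it to be the crux rather than a formality. The distance hypothesis controls only $|\tilde y_x - \hat y'_x|$, not the side on which $\hat y'_x$ falls, so one must rule out the ``overshoot'' regime where the perturbation crosses $\tilde y_x$, flips the sign of the residual, and destroys the orthogonality certificate. I would therefore isolate the same-side case, where the argument above is clean, and treat the boundary case $\hat y'_x=\tilde y_x$ separately via the enlarged subdifferential. A further subtlety is that for $p>1$ the subgradient entry is sign-\emph{and}-magnitude dependent, so preserving the sign alone need not keep $\mathbf{g}$ orthogonal; the cleanest route is to run the sign-based certificate argument for $p=1$ and, for $p>1$, to supply a separate monotonicity/continuity argument that the reconciled value does not move when the residual stays on one side and shrinks. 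Reconciling the distance-only hypothesis with the sign/magnitude sensitivity of the optimality condition is the main obstacle I anticipate, and I would spend most of the effort making precisely that step airtight.
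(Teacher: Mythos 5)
Your plan takes a genuinely different route from the paper: you argue via a subgradient certificate orthogonal to the coherent subspace $\mathfrak{s}$ and track how the certificate's $x$-th coordinate changes under the data update, whereas the paper argues by contradiction, bounding $L_p(\tilde{\mathbf{y}}',\hat{\mathbf{y}}')$ against $L_p(\tilde{\mathbf{y}}',\hat{\mathbf{y}})$ with a triangle-type inequality and then invoking optimality on the old data. The more important point, however, is that the obstacle you single out as ``the crux rather than a formality'' is not just hard --- it is fatal. The theorem as stated is false for every $p>1$, even in the same-side regime you hoped to handle with a monotonicity/continuity argument. Take the one-edge network $V=\{s,t\}$, $E=\{e\}$, $\mathcal{P}=\{P\}$, whose coherent subspace is the diagonal $\mathfrak{s}=\{(c,c,c,c)^{T}:c\in\mathbb{R}\}\subset\mathbb{R}^{4}$. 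With $p=2$ and $\hat{\mathbf{y}}=(0,0,0,4)$, the reconciliation is the projection $\tilde{\mathbf{y}}=(1,1,1,1)$. Update the path coordinate to $\hat{y}'_{P}=2$: the hypothesis holds since $|1-2|=1<|1-4|=3$, and the residual does not even change sign, yet the unique new optimum is $(\tfrac12,\tfrac12,\tfrac12,\tfrac12)$ (new loss $3$ versus $4$ for $\tilde{\mathbf{y}}$). This is exactly the magnitude-dependence you flagged: for $p>1$ the certificate reads $\sum_i |r_i|^{p-1}\mathrm{sign}(r_i)\,s_i=0$ for all $\mathbf{s}\in\mathfrak{s}$, with $r_i=\tilde{y}_i-\hat{y}_i$, so shrinking a single residual destroys the balance unless coordinate $x$ projects to zero in $\mathfrak{s}$; no monotonicity argument can repair this.

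For $p=1$ your certificate argument does close the same-side case: the subgradient coordinate depends only on $\mathrm{sign}(\tilde{y}_x-\hat{y}_x)$, and if $\hat{y}'_x=\tilde{y}_x$ the subdifferential only enlarges, so the old certificate remains valid. But the stated hypothesis does not exclude overshoot, and overshoot genuinely breaks the claim: on the same diagonal subspace with $\hat{\mathbf{y}}=(0,1,3,5)$, the point $\tilde{\mathbf{y}}=(2,2,2,2)$ is an $\ell_1$-optimal reconciliation, and updating the fourth coordinate from $5$ to $0.5$ satisfies $|2-0.5|=1.5<3$, yet the diagonal point with $c=2$ now has loss $5.5$ while $c=1$ has loss $3.5$. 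So the correct statement your approach can actually deliver is: $p=1$, with the added hypothesis that $\hat{y}'_x$ lies on the same closed side of $\tilde{y}_x$ as $\hat{y}_x$. You should also know that the paper's own proof commits precisely the errors you anticipated: its central inequality $|\tilde{y}'_x-\hat{y}'_x|^{p}\ge|\tilde{y}'_x-\hat{y}_x|^{p}-\delta^{p}$ is the triangle inequality only for $p=1$ and fails for $p>1$ (take $\tilde{y}'_x=0$, $\hat{y}_x=3$, $\hat{y}'_x=2$, $p=2$: it asserts $4\ge 8$), and even where it holds the argument terminates with $L_p(\tilde{\mathbf{y}},\hat{\mathbf{y}}')>L_p(\tilde{\mathbf{y}},\hat{\mathbf{y}})-\delta^{p}/n$ set against $L_p(\tilde{\mathbf{y}},\hat{\mathbf{y}}')\le L_p(\tilde{\mathbf{y}},\hat{\mathbf{y}})+\delta^{p}/n$, which is no contradiction at all. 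Your diagnosis of where the argument must fail is exactly right; an honest completion of your plan yields the weaker, correct statement rather than the theorem as written.
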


This result yields a powerful monotonicity property which is particularly valuable in online settings with frequent small updates trending towards improved accuracy.

\begin{corollary}[Monotonicity of Reconciliation Updates]
Let $G=(V,E,\mathcal{P})$ be a network, and let $\hat{\mathbf{y}}^{(0)}_{T+h|T}, \hat{\mathbf{y}}^{(1)}_{T+h|T}, ..., \hat{\mathbf{y}}^{(k)}_{T+h|T}$ be a sequence of forecasts where each differs from the previous in exactly one component. If for each update $i$:
$ |\tilde{y}_x^{(i-1)} - \hat{y}_x^{(i)}| < |\tilde{y}_x^{(i-1)} - \hat{y}_x^{(i-1)}| $, then the initial reconciliation $\tilde{\mathbf{y}}^{(0)}_{T+h|T}$ remains optimal throughout the sequence.
\end{corollary}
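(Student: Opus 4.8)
The plan is to derive the Monotonicity Corollary directly from the preceding Data Update Optimality theorem by induction on the update index $i$. The key observation is that the theorem already establishes the single-step result: if an update changes exactly one component and the reconciled value $\tilde{y}_x$ for that component moves strictly closer to the new base forecast than it was to the old one, then the \emph{same} reconciled vector remains optimal for the new base forecast. The corollary is simply the iterated version of this statement, so the proof is essentially bookkeeping to confirm that the hypotheses of the theorem are satisfied at each step and that the reconciled vector that stays fixed is always $\tilde{\mathbf{y}}^{(0)}_{T+h|T}$.

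First I would set up the induction. The base case is $i=0$: the vector $\tilde{\mathbf{y}}^{(0)}_{T+h|T}$ is by definition the optimal reconciliation of $\hat{\mathbf{y}}^{(0)}_{T+h|T}$. For the inductive step, I would assume that $\tilde{\mathbf{y}}^{(0)}_{T+h|T}$ is optimal for $\hat{\mathbf{y}}^{(i-1)}_{T+h|T}$, and show it remains optimal for $\hat{\mathbf{y}}^{(i)}_{T+h|T}$. The crucial point to verify is that the theorem's hypothesis is stated in terms of the reconciled value $\tilde{y}_x^{(i-1)}$, which under the inductive hypothesis is exactly the $x$-component of the fixed vector $\tilde{\mathbf{y}}^{(0)}_{T+h|T}$, i.e. $\tilde{y}_x^{(i-1)} = \tilde{y}_x^{(0)}$. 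The corollary's condition $|\tilde{y}_x^{(i-1)} - \hat{y}_x^{(i)}| < |\tilde{y}_x^{(i-1)} - \hat{y}_x^{(i-1)}|$ is then precisely the hypothesis of the Data Update Optimality theorem applied to the transition from $\hat{\mathbf{y}}^{(i-1)}_{T+h|T}$ to $\hat{\mathbf{y}}^{(i)}_{T+h|T}$, so the theorem yields that $\tilde{\mathbf{y}}^{(0)}_{T+h|T}$ stays optimal. Chaining this through $i=1,\ldots,k$ gives the result.

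The subtle point I would want to handle carefully is the consistency of notation: the reconciled vector is held fixed at $\tilde{\mathbf{y}}^{(0)}_{T+h|T}$ throughout, so the superscripts $\tilde{y}_x^{(i-1)}$ appearing in the condition must all collapse to the same quantity $\tilde{y}_x^{(0)}$ under the induction hypothesis. I would make this explicit to avoid the appearance of a moving target, and note that because each $\hat{\mathbf{y}}^{(i)}_{T+h|T}$ differs from its predecessor in exactly one component, the single-component hypothesis of the theorem applies verbatim at every step (the changed component may differ across steps, but the theorem is agnostic to which component changes).

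The main obstacle, such as it is, is not mathematical depth but a potential circularity in interpreting the hypothesis: one must confirm that the condition is checked against the \emph{running} reconciled value $\tilde{y}_x^{(i-1)}$, which the induction hypothesis identifies with the fixed $\tilde{y}_x^{(0)}$, rather than against a freshly recomputed reconciliation at each step. Once this identification is pinned down, the argument is a clean finite induction with no residual estimates to grind through. I would close by remarking that the optimality is preserved under the $\ell_p$ norm for $p \geq 1$ inherited directly from the theorem's scope, so no additional assumptions on the loss are needed.
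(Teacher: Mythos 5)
Your proposal is correct and matches the paper's proof: both argue by induction on $i$, invoking the Data Update Optimality theorem at each step to show that $\tilde{\mathbf{y}}^{(0)}_{T+h|T}$, once optimal for $\hat{\mathbf{y}}^{(i-1)}_{T+h|T}$, remains optimal for $\hat{\mathbf{y}}^{(i)}_{T+h|T}$. Your explicit identification of $\tilde{y}_x^{(i-1)}$ with $\tilde{y}_x^{(0)}$ under the induction hypothesis is a useful clarification of a point the paper's one-line proof leaves implicit, but the argument is the same.
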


% This monotonicity property has important practical implications for forecast updating. When new data arrives sequentially and each update brings forecasts closer to their reconciled values, no recomputation is needed. This is particularly valuable in online settings where data arrives frequently but changes are small and trend toward better accuracy.

\subsection{Forecast Maintenance Under Network Disruption}

Network disruptions necessitate forecast adjustments while maintaining coherence. \fr provides explicit bounds on forecast changes and efficient computation methods.

\begin{theorem}[Forecast Redistribution]
Let $G=(V,E,\mathcal{P})$ be a network with reconciled forecasts $\tilde{\mathbf{y}}_{T+h|T}$ optimized for the $\ell_2$ norm. Upon loss of edge $e^*$, if $G'=(V,E\setminus\{e^*\},\mathcal{P}')$ maintains strong connectivity, then the minimally adjusted forecasts $\tilde{\mathbf{y}}'_{T+h|T}$ satisfy:
\[ \|\tilde{\mathbf{y}}'_{T+h|T} - \tilde{\mathbf{y}}_{T+h|T}\|_2^2 \leq \left(\sum_{P \in \mathcal{P}_{e^*}} \tilde{\mathbf{y}}_{P,T+h|T}\right)^2 \text{where $\mathcal{P}_{e^*}$ denotes paths using $e^*$.} \]

\end{theorem}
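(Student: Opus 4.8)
The plan is to exploit the orthogonal-projection characterization of $\ell_2$ reconciliation from Theorem \ref{thm:computation}. Since $\tilde{\mathbf{y}}'_{T+h|T}$ is the minimally adjusted coherent forecast on the reduced network $G'$, it is exactly the orthogonal projection of $\tilde{\mathbf{y}}_{T+h|T}$ onto the new coherent subspace $\mathfrak{s}'$ (the image of the flow aggregation matrix $\mathbf{S}'$ built from $G'$, with the coordinates for $e^*$ and for the lost paths $\mathcal{P}_{e^*}$ embedded as zero). Consequently $\|\tilde{\mathbf{y}}'_{T+h|T} - \tilde{\mathbf{y}}_{T+h|T}\|_2^2 = \min_{\mathbf{z}\in\mathfrak{s}'}\|\mathbf{z}-\tilde{\mathbf{y}}_{T+h|T}\|_2^2$, so it suffices to exhibit a single feasible coherent flow $\mathbf{z}\in\mathfrak{s}'$ whose squared distance to $\tilde{\mathbf{y}}_{T+h|T}$ is at most $(\sum_{P\in\mathcal{P}_{e^*}}\tilde{\mathbf{y}}_{P,T+h|T})^2$; the optimality of the projection then upgrades this witness bound to the stated inequality.

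Second, I would build the witness $\mathbf{z}$ by a concentrated redistribution. The deleted paths $\mathcal{P}_{e^*}$ carry total flow $F=\sum_{P\in\mathcal{P}_{e^*}}\tilde{\mathbf{y}}_{P,T+h|T}$, and by flow conservation (Theorem \ref{thm:flow_conservation}) this flow realizes a fixed set of source--sink throughputs. Because $G'=(V,E\setminus\{e^*\},\mathcal{P}')$ remains strongly connected, each such source--sink demand can be rerouted through $\mathcal{P}'$; I would set the flow on every $P\in\mathcal{P}_{e^*}$ to zero and reassign the freed throughput onto replacement paths in $\mathcal{P}'$, which defines $\mathbf{z}=\mathbf{S}'\mathbf{b}'$ and hence keeps it coherent on $G'$. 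The contribution of zeroing the path coordinates is $\sum_{P\in\mathcal{P}_{e^*}}\tilde{\mathbf{y}}_{P,T+h|T}^2$, and since flows are nonnegative the elementary inequality $\sum_{P\in\mathcal{P}_{e^*}}\tilde{\mathbf{y}}_{P,T+h|T}^2 \le (\sum_{P\in\mathcal{P}_{e^*}}\tilde{\mathbf{y}}_{P,T+h|T})^2 = F^2$ already delivers the target size on those coordinates.

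The main obstacle is controlling the induced changes on the edge and node coordinates: rerouting $F$ onto replacement paths perturbs every edge and vertex those paths traverse, and a naive reroute inflates the squared change well beyond $F^2$. The decisive step is therefore choosing the redistribution so that these induced perturbations are absorbed within the budget --- for instance, concentrating all of $F$ on a single replacement path $Q$ and charging its edge/vertex increments against the cross terms of $F^2=\sum_P\tilde{y}_P^2+\sum_{P\neq Q}\tilde{y}_P\tilde{y}_Q$ that the nonnegativity inequality leaves unused, or invoking the nonexpansiveness of the projection onto $\mathfrak{s}'$ to discard the excess. I expect this accounting --- reconciling the multi-coordinate effect of a flow reroute with the scalar budget $F^2$ --- to be the crux, with the projection-optimality and nonnegativity arguments serving as routine scaffolding around it.
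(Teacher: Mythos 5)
Your strategy is the same as the paper's: treat $\tilde{\mathbf{y}}'_{T+h|T}$ as the closest point of the new coherent subspace $\mathfrak{s}'$ to $\tilde{\mathbf{y}}_{T+h|T}$, build a rerouting witness (zero the flows on $\mathcal{P}_{e^*}$, push the freed flow $F=\sum_{P\in\mathcal{P}_{e^*}}\tilde{\mathbf{y}}_{P,T+h|T}$ onto alternative paths, which exist by strong connectivity), and let optimality transfer the witness's cost to $\tilde{\mathbf{y}}'_{T+h|T}$. The paper's proof does exactly this, via a map $\phi:\mathcal{P}_{e^*}\to\mathcal{P}'$ that reassigns each deleted path's flow to a replacement path, and then bounds the reroute term by $\bigl(\sum_{P\in\mathcal{P}_{e^*}}\tilde{\mathbf{y}}_{P,T+h|T}\bigr)^2$ using nonnegativity.

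However, the point you flag as the crux --- accounting for the induced changes on edge and node coordinates --- is a genuine gap, and neither your proposal nor the paper closes it. The paper's displayed ``equality'' for $\|\tilde{\mathbf{y}}'-\tilde{\mathbf{y}}\|_2^2$ sums only over path coordinates, silently omitting that the witness changes the coordinate of $e^*$ by $F$ and the coordinates of every edge and vertex on the replacement paths by up to $F$, contributing on the order of $d\,F^2$ where $d$ is the replacement path length. Moreover, even restricted to path coordinates the algebra does not give the stated bound: the witness costs $\sum_{P\in\mathcal{P}_{e^*}}\tilde{y}_P^2$ (zeroing) plus the reroute term, and since the reroute term can itself equal $F^2$ (all flow concentrated on one replacement path, exactly your construction), the total is $\sum_P\tilde{y}_P^2+F^2>F^2$ whenever some $\tilde{y}_P\neq 0$. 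Your two suggested repairs do not rescue this: nonexpansiveness of the projection bounds $\|P\mathbf{x}-P\mathbf{y}\|_2$ by $\|\mathbf{x}-\mathbf{y}\|_2$ for pairs of points, which says nothing about the distance from $\tilde{\mathbf{y}}$ to its own projection (that distance is precisely the quantity being bounded, and the only available tool is the witness bound you already have); and the unused cross terms $\sum_{P\neq Q}\tilde{y}_P\tilde{y}_Q<F^2$ are too small to absorb the $O(d\,F^2)$ edge/node contributions. So your instinct is correct that this accounting is where the proof lives or dies: what the witness argument actually proves is a weaker bound of the form $C(1+d)F^2$, and the theorem's stated constant is not justified by this route --- a defect your blind reconstruction shares with, and in fact exposes in, the paper's own proof.
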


\subsection{Efficient Approximate Reconciliation}

For applications prioritizing speed over exact coherence, we provide a principled relaxation with bounded incoherence.

\begin{theorem}[Approximate Forecast Reconciliation]
Let $G=(V,E,\mathcal{P})$ be a network with base forecasts $\hat{\mathbf{y}}_{T+h|T}$. For accuracy threshold $\varepsilon > 0$, there exists an $\varepsilon$-relaxed reconciliation $\tilde{\mathbf{y}}_{\varepsilon,T+h|T}$ computable in $O(m \log(\frac{1}{\varepsilon}) \log n)$ time satisfying:
\[ \left|\sum_{P \in \mathcal{P}:e \in P} \tilde{\mathbf{y}}_{\varepsilon,P,T+h|T} - \tilde{\mathbf{y}}_{\varepsilon,e,T+h|T}\right| \leq \varepsilon \quad \forall e \in E \]
with bounded deviation from exact reconciliation:
$ \|\tilde{\mathbf{y}}_{\varepsilon,T+h|T} - \tilde{\mathbf{y}}_{T+h|T}\|_2 \leq \sqrt{\varepsilon|E|}\|\tilde{\mathbf{y}}_{T+h|T}\|_2 $.
\end{theorem}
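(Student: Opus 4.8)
The plan is to treat the exact $\ell_2$ reconciliation as the orthogonal projection $\tilde{\mathbf{y}}_{T+h|T} = \Pi_{\mathfrak{s}}(\hat{\mathbf{y}}_{T+h|T})$ furnished by Theorem~\ref{thm:computation}, and to obtain the $\varepsilon$-relaxed forecast by approximating this projection rather than computing it exactly. Writing the edge-consistency constraints as $A\tilde{\mathbf{y}} = \mathbf{0}$, where each of the $|E|$ rows of $A$ encodes $\sum_{P:e\in P}\tilde{y}_P - \tilde{y}_e$, the coherent subspace is $\mathfrak{s} = \ker A$. The relaxation replaces $A\tilde{\mathbf{y}} = \mathbf{0}$ by the box $\|A\tilde{\mathbf{y}}\|_\infty \le \varepsilon$, which is exactly the per-edge residual tolerance asserted in the statement.

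First I would design an iterative first-order scheme for the strongly convex quadratic $\|\mathbf{y} - \hat{\mathbf{y}}_{T+h|T}\|_2^2$ subject to this system, either projected gradient descent or a Kaczmarz-type cyclic projection onto the individual edge hyperplanes. Because the objective is strongly convex and the feasible set is a nonempty polyhedron, standard convergence theory gives a geometric rate: the maximum constraint residual contracts by a fixed factor per sweep, so reaching residual $\le \varepsilon$ needs $O(\log(1/\varepsilon))$ sweeps. Each sweep visits all $m$ edges and performs a sort / priority-queue pass to process residuals in order, costing $O(m\log n)$ and matching the lower bound of Theorem~\ref{thm:lower-bound}; the product gives the claimed $O(m\log(1/\varepsilon)\log n)$ runtime. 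An equivalent route is to note that the $\ell_2$ projection coincides with an electrical/Laplacian projection on $G$ and to invoke a fast approximate Laplacian or min-cost-flow solver with the same per-accuracy-digit guarantee. The per-edge residual property is then immediate from the stopping criterion.

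The deviation bound is the delicate part. I would set $\mathbf{d} = \tilde{\mathbf{y}}_{\varepsilon,T+h|T} - \tilde{\mathbf{y}}_{T+h|T}$ and exploit the following structural fact: decomposing $\hat{\mathbf{y}}_{T+h|T}$ into its $\ker A$ component and its row-space component, the exact solution keeps the first and discards the second, while the relaxed projection keeps the same $\ker A$ component and only reinstates part of the row-space component allowed by $\|A\mathbf{y}\|_\infty\le\varepsilon$. Hence $\mathbf{d}$ lies entirely in the row space of $A$, and since $A\tilde{\mathbf{y}}_{T+h|T}=\mathbf{0}$ we get $\|A\mathbf{d}\|_\infty = \|A\tilde{\mathbf{y}}_{\varepsilon,T+h|T}\|_\infty \le \varepsilon$, so $\|A\mathbf{d}\|_2 \le \sqrt{|E|}\,\varepsilon$. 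Restricting to the row space and using the least nonzero singular value yields $\|\mathbf{d}\|_2 \le \sqrt{|E|}\,\varepsilon/\sigma_{\min}(A)$, which I would then recast in the stated relative form $\sqrt{\varepsilon|E|}\,\|\tilde{\mathbf{y}}_{T+h|T}\|_2$ by using the flow structure (each path is short in a sparse network, so each row of $A$ is sparse) to control $\sigma_{\min}(A)$ against the scale of the reconciled forecast.

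The main obstacle I anticipate is precisely this final conversion. The clean decomposition argument produces a bound that is \emph{linear} in $\varepsilon$ and governed by the conditioning $\sigma_{\min}(A)$ of the edge-path incidence operator, whereas the theorem advertises a scale-free bound in $\|\tilde{\mathbf{y}}_{T+h|T}\|_2$ with a $\sqrt{\varepsilon}$ dependence. Closing this gap requires either a per-coordinate (relative) accounting in which each relaxed edge contributes a fraction of $\|\tilde{\mathbf{y}}_{T+h|T}\|_2^2$ and the $|E|$ contributions sum to $\varepsilon|E|\,\|\tilde{\mathbf{y}}_{T+h|T}\|_2^2$, or an explicit lower bound on $\sigma_{\min}(A)$ in terms of network sparsity. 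I would therefore make the sparsity hypothesis ($m=O(n)$ with bounded path lengths) explicit and verify that under it the conditioning factor is absorbed, so that the geometric-convergence residual guarantee translates into the advertised deviation bound.
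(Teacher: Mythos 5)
Your formulation and runtime analysis coincide with the paper's own proof. The paper likewise defines $\tilde{\mathbf{y}}_{\varepsilon,T+h|T}$ as the minimizer of $\|\mathbf{y}-\hat{\mathbf{y}}_{T+h|T}\|_2^2$ subject to the box constraints $-\varepsilon \le \sum_{P \ni e} y_P - y_e \le \varepsilon$ for all $e \in E$, and it obtains the $O(m \log(\frac{1}{\varepsilon}) \log n)$ runtime exactly as you propose: strong convexity of the quadratic gives geometric convergence of projected gradient descent, hence $O(\log(1/\varepsilon))$ iterations, each costing $O(m)$ for gradient and projection plus $O(\log n)$ for a line search. Your Kaczmarz/priority-queue variant of the per-iteration accounting is at the same level of rigor as the paper's, and the per-edge residual guarantee is, in both cases, immediate from the feasible set.

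Where you genuinely diverge is the deviation bound, and your route is better founded than the paper's. The paper argues via KKT that each edge violation is $0$ or $\pm\varepsilon$ at the optimum and then simply asserts $\|\tilde{\mathbf{y}}_{\varepsilon,T+h|T} - \tilde{\mathbf{y}}_{T+h|T}\|_2^2 = \sum_{e \in \mathcal{E}_\varepsilon} (\pm\varepsilon)^2 \le \varepsilon^2 |E|$, equating a difference of solution vectors with constraint residuals without justification; your orthogonal decomposition ($\mathbf{d} = \tilde{\mathbf{y}}_{\varepsilon,T+h|T} - \tilde{\mathbf{y}}_{T+h|T}$ lies in the row space of $A$, with $\|A\mathbf{d}\|_\infty \le \varepsilon$) supplies precisely the missing argument. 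Moreover, the gap you flag --- controlling $\sigma_{\min}(A)$ --- closes with no sparsity hypothesis at all: ordering the variables as (nodes, edges, paths), the edge-consistency matrix is $A = \begin{pmatrix} \mathbf{0} & -\mathbf{I}_{|E|} & \mathbf{E}' \end{pmatrix}$, so $AA^T = \mathbf{I}_{|E|} + \mathbf{E}'\mathbf{E}'^T \succeq \mathbf{I}$ and hence $\sigma_{\min}(A) \ge 1$. This yields $\|\mathbf{d}\|_2 \le \varepsilon\sqrt{|E|}$, which is exactly the bound the paper's own proof derives internally. Note that this bound is linear in $\varepsilon$ and carries no $\|\tilde{\mathbf{y}}_{T+h|T}\|_2$ factor; it implies the theorem's advertised $\sqrt{\varepsilon|E|}\,\|\tilde{\mathbf{y}}_{T+h|T}\|_2$ only under the (unstated) condition $\varepsilon \le \|\tilde{\mathbf{y}}_{T+h|T}\|_2^2$. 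That mismatch between the derived and stated bounds is present in the paper as well, so it is not a defect specific to your argument. In short: same formulation and runtime proof, a different and more rigorous deviation argument, with one missed structural observation (the $-\mathbf{I}$ block forces $\sigma_{\min}(A) \ge 1$) that would have let you finish without the extra sparsity assumptions you propose.
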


\subsection{Summary of Computational Requirements}

Table~\ref{tab:complexity} summarizes \fr's computational requirements, demonstrating significant improvements over MinT's $O(n^3)$ time and $O(n^2)$ memory complexities.

\begin{table}[h]
\caption{Computational Requirements of \fr Operations}
\label{tab:complexity}
\centering
\begin{tabular}{|l|l|l|l|}
\hline
Operation & Time Complexity & Memory & Notes\\
\hline
Data Update & $O(1)$ & $O(1)$ & Single component change\\
\hline
Edge Addition & $O(|\mathcal{P}_{e^*}|)$ & $O(|\mathcal{P}_{e^*}|)$ & $\mathcal{P}_{e^*}$: paths using new edge\\
\hline
Edge Removal & $O(|\mathcal{P}_{e^*}|(m + n\log n))$ & $O(|\mathcal{P}_{e^*}|d)$ & $d$: maximum path length\\
\hline
$\varepsilon$-Relaxation & $O(m \log(\frac{1}{\varepsilon}) \log n)$ & $O(m)$ & $\varepsilon$: accuracy parameter\\
\hline
\end{tabular}
\end{table}

For sparse networks $(m = O(n))$, \fr is much faster than MinT's $O(n^3)$, with substantially lower memory requirements, especially for localized operations. The $\varepsilon$-relaxation offers a tunable accuracy-speed trade-off, enabling rapid approximate updates when needed.

\section{Experimental Evaluation}\label{sec:experiments}

We evaluate \fr on both simulated and real-world datasets, validating our theoretical results while demonstrating practical advantages in computational efficiency and reconciliation accuracy. We compare \fr, implemented as described in Theorem \ref{thm:computation} in Python, with the optimization problem being solved by the interior point method based solver for quadratic programs Clarabel \cite{goulart2024clarabel} to other reconciliation approaches implemented with Nixtla \footnote{\url{https://nixtlaverse.nixtla.io/}}. All experiments were run on an AWS g6.8xlarge instance with 32 virtual CPUs, 128GB RAM, and an NVIDIA A10G GPU with 24GB memory running Ubuntu 22.04 LTS.

Base forecasts are generated from the Nixtla methods, with hyper-parameter tuning: 
\begin{enumerate}
    \item \texttt{Naive:} All forecasts have the value of the last observation.
    \item \texttt{AutoARIMA} \cite{box2015time}: Selects ARIMA (AutoRegressive Integrated Moving Average) model, combining historical patterns with trend and seasonality components.

    \item \texttt{Simple Exponential Smoothing (SES): } It uses a weighted average of all past observations where the weights decrease exponentially into the past. 

\item \texttt{DeepAR} \cite{salinas2020deepar}:
Deep learning approach using autoregressive recurrent neural networks. 
\item \texttt{NBEATS} \cite{oreshkin2019n}:
Neural architecture with residual links and basis expansion. 

\item \texttt{LSTM} \cite{sak2014long}:
Recurrent neural network architecture with gates to control information flow through time and learns long-term dependencies.

\item \texttt{DLinear} \cite{zeng2023transformers}:
Decomposes time series into trend and seasonal components. 
Applies simple linear layer to each component separately before combining for final forecast.

\end{enumerate}

For reconciliation of base forecasts, we compare \fr against two standard methods:
\begin{enumerate}
    \item \texttt{Bottom-Up (BU):}  Forecasts are generated independently for the most disaggregated level and then aggregated upwards to obtain consistent forecasts for higher levels.
    \item \texttt{Minimum-Trace (MinT): } It aims to minimize the trace of the covariance matrix of reconciled forecast errors. MinT can be implemented with different estimation techniques, and here we have used OLS (Ordinary Least Squares) and  a non-negativity constraint.

\end{enumerate}

\subsection{Simulated Network Experiments}

Our experimental framework follows standard network flow literature construction, where each test instance comprises source, sink, and intermediate nodes connected by edges, except between source-sink pairs. The framework encompasses 100 random instances with network density ranging uniformly from $n$ to $n^2$ edges, each limited to 50 nodes for comparative analysis with existing methods, as larger dense networks often lead to convergence failures for MinT and some base models. Following standard forecasting practice, we generate ground truth values using flow conservation and add Gaussian noise to create base forecasts.

Table~\ref{tab:perf_comparison} presents the aggregated computational performance of \fr compared to MinT and BU reconciliation for AutoARIMA and DeepAR. The choice of the base forecasting method shows limited impact on \fr, with minor changes for MinT and BU. The results strongly support our theoretical complexity analysis (Theorem~\ref{thm:computation}). For sparse networks, \fr achieves runtime improvements of 3-40x over MinT, aligning with the predicted $O(n^2\log n)$ vs $O(n^3)$ complexity difference. Even for dense graphs, \fr maintains a significant improvement (notice the standard deviation), showing the impact of the formulation and the resulting ability to leverage highly efficient solvers. The memory usage reported by the tracemalloc utility further validates our analysis, with MinT requiring substantially more memory due to its covariance matrix operations, although results are implementation dependant.

% \begin{table}[t!]
% \caption{Performance Comparison of Reconciliation Methods (Mean and Standard Deviation)}
% \label{tab:perf_comparison}
% \centering
% \begin{tabular}{llrr|rr}
% \toprule
% Loss & Method & \multicolumn{2}{c}{AutoARIMA} & \multicolumn{2}{c}{DeepAR} \\
% \cmidrule(lr){3-4} \cmidrule(lr){5-6}
% & & Runtime (s) & Memory (MB) & Runtime (s) & Memory (MB) \\
% \midrule
% RMSE & \fr   & \textbf{0.063 ± 0.002} & \textbf{1.2 ± 0.1} & \textbf{0.063 ± 0.002} & \textbf{1.1 ± 0.1} \\
%      & MinT  & 0.172 ± 0.050 & 5.6 ± 1.5 & 0.232 ± 0.060 & 5.6 ± 1.5 \\
%      & BU  & 0.088 ± 0.010 & 1.8 ± 0.2 & 0.085 ± 0.010 & 1.9 ± 0.2 \\
% \midrule
% MAE  & \fr   & \textbf{0.058 ± 0.001} & \textbf{1.0 ± 0.1} & \textbf{0.058 ± 0.001} & \textbf{1.0 ± 0.1} \\
%      & MinT  & 0.165 ± 0.045 & 5.6 ± 1.5 & 0.225 ± 0.055 & 5.6 ± 1.5 \\
%      & BU  & 0.085 ± 0.010 & 1.8 ± 0.2 & 0.082 ± 0.010 & 1.9 ± 0.2 \\
% \bottomrule
% \end{tabular}
% \end{table}
\begin{table}[t!]
\caption{Performance Comparison (Mean ± Standard Deviation)}
\label{tab:perf_comparison}
\small
\setlength{\tabcolsep}{3pt}  % reduce column spacing
\begin{tabular}{l|rr|rr||rr|rr}
\toprule
& \multicolumn{4}{c||}{RMSE} & \multicolumn{4}{c}{MAE} \\
\cmidrule(lr){2-5} \cmidrule(lr){6-9}
& \multicolumn{2}{c|}{ARIMA} & \multicolumn{2}{c||}{DeepAR} & \multicolumn{2}{c|}{ARIMA} & \multicolumn{2}{c}{DeepAR} \\[-0.3ex]
Method & Time(s) & MB & Time(s) & MB & Time & MB & Time(s) & MB \\
\midrule
\fr & \textbf{.063±.002} & \textbf{1.2} & \textbf{.063±.002} & \textbf{1.1} & \textbf{.058±.001} & \textbf{1.0} & \textbf{.058±.001} & \textbf{1.0} \\
MinT & .172±.050 & 5.6 & .232±.060 & 5.6 & .165±.045 & 5.6 & .225±.055 & 5.6 \\
BU & .088±.010 & 1.8 & .085±.010 & 1.9 & .085±.010 & 1.8 & .082±.010 & 1.9 \\
\bottomrule
\end{tabular}
\end{table}

Figure~\ref{fig:overall-rmse} illustrates \fr's superior reconciliation overall accuracy for RMSE for various base forecasting methods. BU reconciliation consistently increases the error of the base forecast. MinT shows better performance than BU,  predicting values close to the base forecast, sometimes with a mild improvement. This can be explained by the enforced non-negativity constraint,  which forces all negative forecast numbers to zero thus limiting outliers. However,  \fr can be  at least five times more accurate than the base forecasts, irrespective of the base forecast model and the forecast hierarchy level (node, edge, path or overall). \fr's ability to improve upon base forecasts, an atypical capability for reconciliation methods, can be attributed to its effective utilization of network structure, as predicted by our flow conservation principles (Theorem~\ref{thm:flow_conservation}).  Figures showing similar results on MAE, as well as results over nodes, edges and paths are given in Appendix \ref{app:exp-sim}. 

\begin{figure}
    \centering
    \includegraphics[scale=0.45]{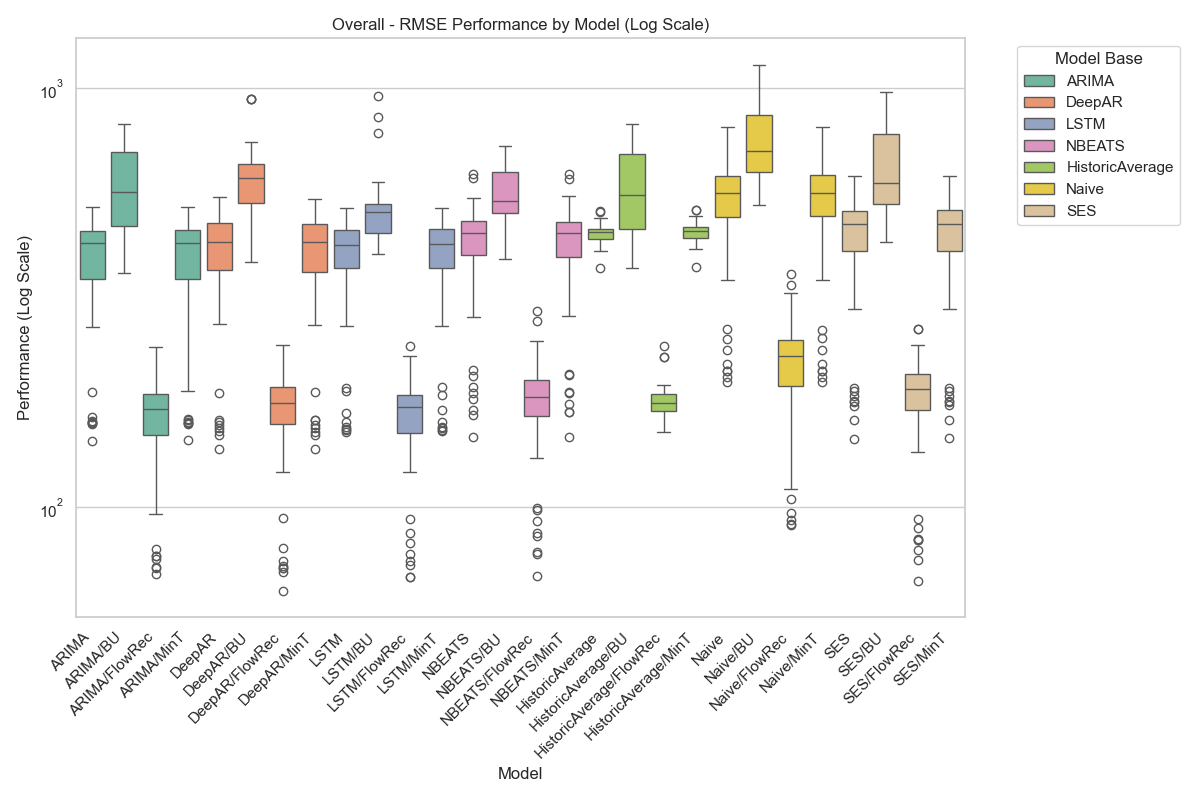}
    \caption{Total RMSE for forecasting and reconciliation for simulated data, shown across multiple base forecasting methods. \fr (third bar for each base model) shows the lowest RMSE}.%, while bottom-up shows the highest. MinT shows similar performance to the unreconciled base forecast. }
    \label{fig:overall-rmse}
\end{figure}

The poor performance of BU reconciliation can be understood through our theoretical framework. In a network $G = (V, E, \mathcal{P})$, BU aggregates path-level forecasts:

\[ \tilde{e}_{t+h} = \sum_{P \in \mathcal{P}: e \in P} \hat{P}_{t+h} \quad \text{for all } e\in E \quad \text{and}  \quad \tilde{v}_{t+h} = \sum_{P \in \mathcal{P}: v \in P} \hat{P}_{t+h} \quad \text{for all } v\in V \]

This approach relies solely on path-level forecasts, which our flow conservation theorem (Theorem~\ref{thm:flow_conservation}) shows to be constrained by both edge and node-level flows. By disregarding these higher-level constraints, BU fails to leverage the full network structure. Moreover, in large networks where the number of paths grows as $O(|N|^k)$ for paths of length $k$, individual path forecasts become increasingly sparse and potentially unreliable.

Now, consider a single base forecasting model like AutoARIMA, which is one of the our performers as a base forecast. Figure \ref{fig:overall-density1} shows the RMSE plotted against network density (average number of neighbours for a node with 1 being fully connected) for AutoARIMA plotted for the base and reconciled forecasts of MinT and \fr. BU is the worst performing, and thus omitted. We note that while performance degrades as density increases for each case, however \fr continues to perform the best, and also widens its performance advantage with increasing density compared to base AutoARIMA and MinT. Figure \ref{fig:overall-length1} shows the RMSE plotted against the maximum path length for AutoARIMA plotted for the base and reconciled forecasts of MinT and \fr. BU is the worst performing, and thus omitted. We note that while performance degrades as density increases for each case, however \fr continues to perform the best, and also widens its performance advantage with increasing density compared to base AutoARIMA and MinT. This shows \fr is applicable to a broad range of network structures. Complete results with MAE are given in Appendix \ref{app:exp-sim}.

\begin{figure}
    \centering
    \includegraphics[scale=0.35]{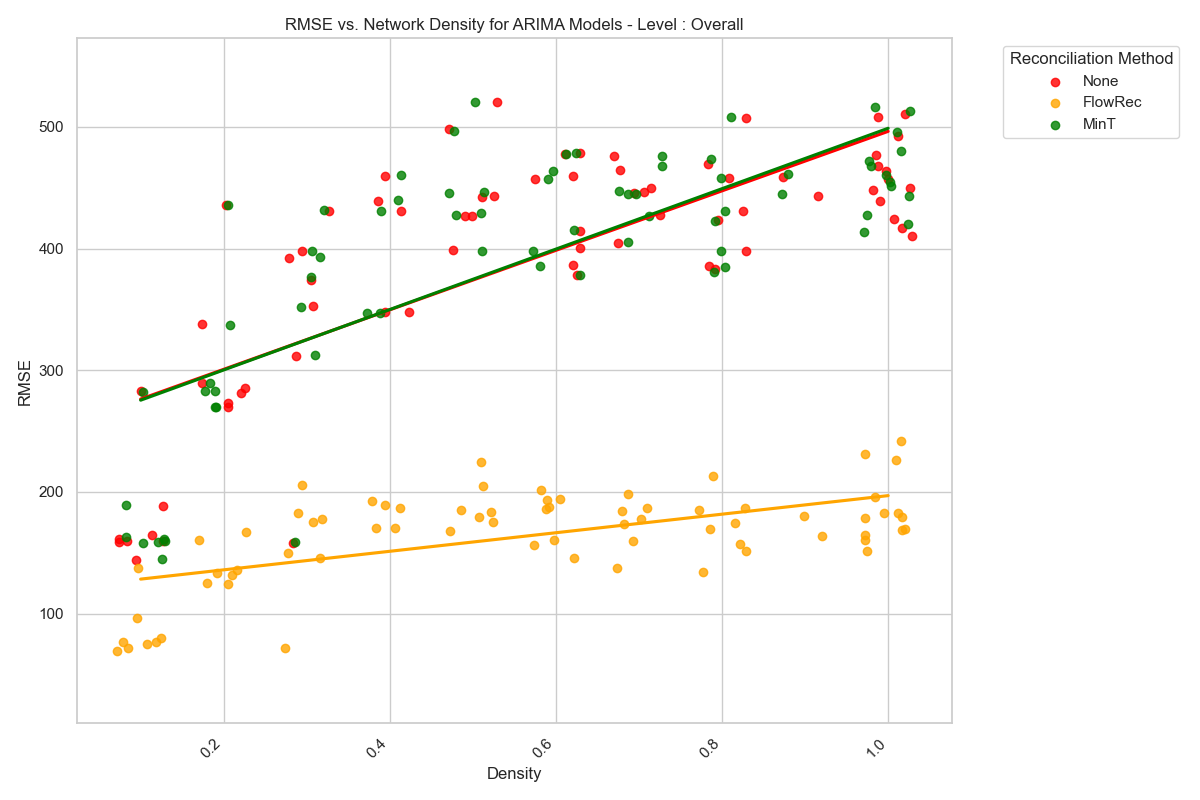}
    \caption{RMSE for AutoARIMA, plotted against density ( node degree, 1 means fully connected) }
    \label{fig:overall-density1}
\end{figure}

\begin{figure}
    \centering
    \includegraphics[scale=0.35]{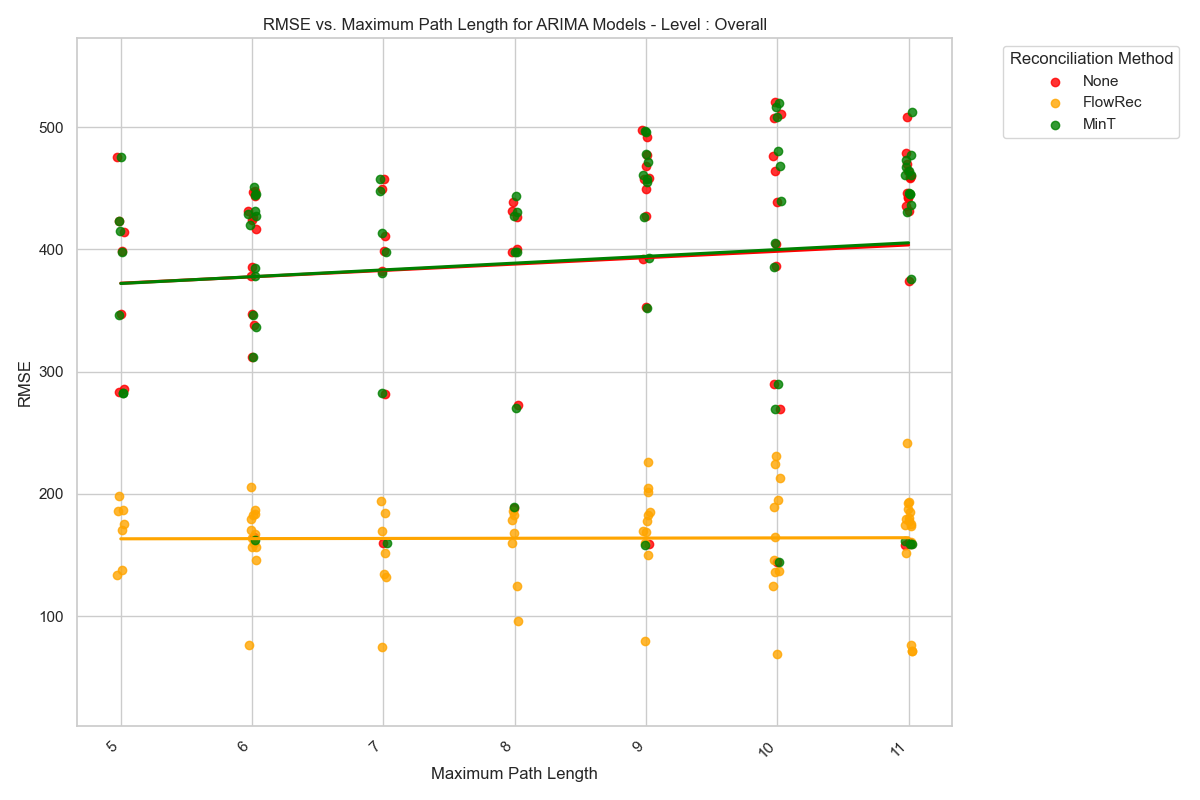}
    \caption{RMSE for AutoARIMA, plotted against the maximum path length}
    \label{fig:overall-length1}
\end{figure}

\subsection{Experiments on Real Benchmarks} \label{sec:exp_real}
To validate \fr's performance on established benchmarks, we conducted experiments on hierarchical datasets (Table \ref{tab:datasets}) widely used in forecasting literature~\cite{rangapuram2021end, olivares2024probabilistic, kamarthi2022profhit}. These datasets include Australian labour and tourism data, SF Bay area traffic, and Wikipedia article views. While lacking explicit flow conservation constraints and being mostly tree-like due to implementation standards of forecasting models, these datasets provide a rigorous test of \fr's generalizability to standard hierarchical reconciliation tasks.

\begin{table}[t!]
\caption{Datasets summary, $h$ being forecast horizon}\label{tab:datasets}
\centering
\begin{tabular}{lrrrrrr}
\toprule
Dataset & Total & Bottom & Aggregated & Levels & Observations & $h$ \\
\midrule
Tourism & 89 & 56 & 33 & 4 & 36 & 8 \\
Tourism-L (grouped) & 555 & 76,304 & 175 & 4,5 & 228 & 12 \\
Labour & 57 & 32 & 25 & 4 & 514 & 8 \\
Traffic & 207 & 200 & 7 & 4 & 366 & 1 \\
Wiki & 199 & 150 & 49 & 5 & 366 & 1 \\
\bottomrule
\end{tabular}
\end{table}
%\section{Experiments on Real Data} \label{sec:exp_real}
% We also validated our approach using a selection of hierarchical datasets \cite{rangapuram2021end, olivares2024probabilistic, kamarthi2022profhit}, extensively used for forecasting and reconciliation in academic literature. The data was taken directly from the hosted versions in the Nixtla package. We show RMSE results on Australian Monthly Labour (Labour), SF Bay Area Daily Traffic, Quarterly Australian Tourism Visits (Tourism(Small)), Monthly Australian Tourism visits (Tourism(Large)), and daily Wikipedia article views (Wiki2). While these datasets lack inherent network structure, they provide a rigorous test of FlowRec's performance on general hierarchical reconciliation tasks. We focus our analysis on Tourism(Small), with complete results for all datasets and metrics provided in Appendix \ref{app:exp-real}.

Figure~\ref{fig:rmse-tours} presents RMSE results on Tourism(Small). \fr achieves accuracy matching MinT, with both methods improving upon base forecasts across four different forecasting models. In contrast, BU shows inconsistent performance, matching FlowRec and MinT in two cases but significantly underperforming in the remaining five. This pattern persists across other datasets and evaluation metrics (see Appendix \ref{app:exp-real}).

Computationally, \fr maintains its theoretical advantages, executing 18x faster than MinT while reducing memory usage by 7x. These efficiency gains stem from FlowRec's ability to exploit the implicit network structure it imposes on hierarchical relationships.

\begin{figure}
    \centering
    \includegraphics[scale=0.43]{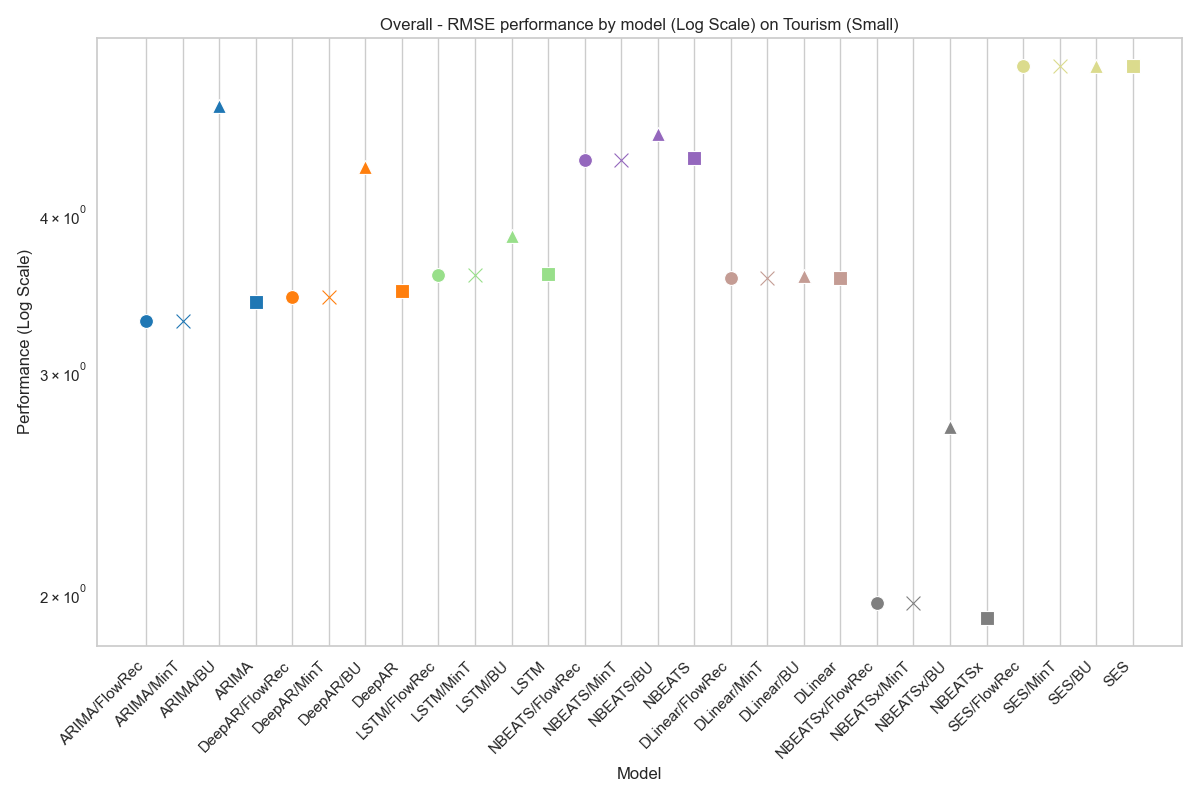}
    \caption{RMSE results for forecasting and reconciliation models across all hierarchies.}
    \label{fig:rmse-tours}
\end{figure}

\section{Future Research Directions} \label{sec:future}

While \fr provides a foundation for generalized hierarchical forecasting, several important theoretical and practical challenges remain open. We identify three primary directions.

First, the framework could be extended to sophisticated loss functions arising in resource procurement. Asymmetric losses are particularly relevant, as under-forecasting typically incurs higher costs than over-forecasting through premium-priced last-minute procurement. Similarly, the discrete nature of resources (e.g. numbers of trucks) suggests investigating approximation algorithms for step-wise losses, despite the NP-hardness of the $\ell_0$ norm.

Second, real-world forecasting systems often possess rich prior knowledge that could enhance reconciliation quality. While \fr handles basic box constraints, incorporating complex capacity constraints while maintaining computational efficiency remains open. Moreover, historical forecast performance varies across networks, suggesting weighted variants of our flow formulation could better capture prediction confidence.

Third, the temporal aspects of resource procurement introduce complex dependencies. The ability to adjust resources varies with forecast horizon, suggesting time-varying constraints or multi-stage optimization. While our monotonicity results provide insight into forecast updates, handling general recourse actions with committed resources remains challenging.

Additionally, investigating connections to fundamental graph concepts could yield valuable insights. Spectral graph theory might provide new perspectives on forecast stability, while network design principles could inform optimal hierarchical structures for reconciliation. Cut-based approaches might offer alternative views of forecast disagreement across hierarchies.

\section{Conclusion}\label{sec:conclude}

We have presented \fr, establishing three key theoretical advances in hierarchical forecast reconciliation. First, we characterize computational complexity across loss functions, proving NP-hardness for $\ell_0$ while achieving optimal $O(n^2\log n)$ complexity for $\ell_{p>0}$ norms in sparse networks, improving upon MinT's $O(n^3)$. Second, we prove \fr's equivalence to MinT while eliminating covariance estimation through network structure. Third, we establish novel dynamic properties: monotonicity preserving optimality under improving forecasts, and bounded-error approximate reconciliation enabling efficient updates.

These contributions open several promising directions. The framework naturally extends to asymmetric and discrete losses for resource procurement, enables incorporation of capacity constraints and confidence measures, and provides foundations for handling rolling horizons in dynamic settings. The connection between network structure and reconciliation suggests broader implications for network design and online forecasting scenarios.

Our experimental results confirm these theoretical advantages translate to practice: 3-40x faster computation, 5-7x reduced memory usage, and improved accuracy across diverse datasets. These results establish FlowRec as a significant advance in hierarchical forecasting, particularly for large-scale applications requiring frequent updates.

\printbibliography

\appendix
\printbibliography
\section{Properties of the \hfr Formulation and the \fr Framework} \label{app:props}

\subsection{Hardness of \hfr}

\begin{theorem}[NP-Hardness for $\ell_0$]
Minimizing the loss of the $\|\cdot\|_0$ norm in \hfr is NP-hard, where for $\mathbf{x} \in \mathbb{R}^n$:
\[ \|\mathbf{x}\|_0 = |\{i : x_i \neq 0\}| \]
\end{theorem}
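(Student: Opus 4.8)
The plan is to prove NP-hardness through a polynomial-time reduction from a known NP-complete problem into the decision version of $\ell_0$-\hfr. First I would fix the decision problem: given a network $G=(V,E,\mathcal P)$, a base forecast $\hat{\mathbf y}$, and an integer $k$, decide whether there exists a coherent $\tilde{\mathbf y}\in\mathfrak s$ with $\|\hat{\mathbf y}-\tilde{\mathbf y}\|_0 \le k$. Since coherence means $\tilde{\mathbf y}=\mathbf S\tilde{\mathbf p}$ is \emph{completely determined by the path flows} $\tilde{\mathbf p}$ (the node block is $\mathbf V'\tilde{\mathbf p}$, the edge block is $\mathbf E'\tilde{\mathbf p}$, and the path block is $\tilde{\mathbf p}$ itself), the problem reduces to choosing $\tilde{\mathbf p}\in\mathbb R^{|\mathcal P|}$ so as to maximize the number of coordinates of $(\mathbf V'\tilde{\mathbf p},\mathbf E'\tilde{\mathbf p},\tilde{\mathbf p})$ that agree with $\hat{\mathbf y}$. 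This is a nearest-point problem in Hamming distance over an affine image of path space, which is the natural setting for $\ell_0$ hardness.

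Second, I would choose the source problem. Because the edge--path and vertex--path incidence matrices can realize essentially arbitrary $0/1$ constraint patterns once $\mathcal P$ is allowed to be an arbitrary family of walks, I would reduce from \textbf{Exact Cover by 3-Sets} (equivalently, one may reduce from the minimum-weight-solution-to-linear-equations problem). The encoding is: ground-set elements become edges of $G$, candidate $3$-sets become paths, and a path traverses exactly the three edges of its set, so that $\mathbf E'$ is the set--element incidence matrix of the instance. I would then set the base forecast so that activating a path (moving $\tilde p_P$ away from $\hat p_P=0$) and covering each element exactly once is rewarded: choose $\hat{\mathbf e}$ and $\hat{\mathbf p}$ so that the $\ell_0$ count equals (paths switched on) plus (elements whose covering multiplicity differs from $1$), and pick the threshold $k$ so that $\|\hat{\mathbf y}-\tilde{\mathbf y}\|_0\le k$ holds if and only if the selected paths form an exact cover.

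Third, I would verify the reduction runs in polynomial time and prove the two-way equivalence: a yes-instance yields a coherent forecast meeting the threshold by turning on the covering paths, and conversely any coherent forecast beating the threshold must, by the calibrated penalties, correspond to a $0/1$ selection of paths covering every element exactly once. The final and hardest step is ruling out ``continuous cheating'': since $\tilde{\mathbf p}$ ranges over $\mathbb R^{|\mathcal P|}$ rather than $\{0,1\}^{|\mathcal P|}$, I must show that fractional or otherwise unintended assignments cannot achieve a strictly smaller support than the intended combinatorial optimum. This requires either arranging the gadget so that any coordinate agreement forces the corresponding path flow into the intended discrete value, or padding with auxiliary ``guard'' edges and vertices whose incidences make any deviation strictly costly in the $\ell_0$ count.

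The main obstacle is precisely this structural-realizability-plus-no-cheating requirement: unlike generic sparse-recovery hardness, here $\mathbf S$ must be a genuine flow-aggregation matrix arising from valid vertex--path and edge--path incidences of an actual network, so I cannot invoke off-the-shelf $\ell_0$ hardness as a black box. I expect most of the work to lie in the gadget design that simultaneously (i) realizes the exact-cover incidence pattern using connected paths and (ii) forces the continuous optimum over $\tilde{\mathbf p}$ to coincide with the intended integral solution, so that the $\ell_0$ objective faithfully counts the underlying combinatorial quantity.
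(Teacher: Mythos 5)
Your proposal is correct in outline but takes a genuinely different route from the paper. The paper reduces from Exact-1-in-3-SAT: for each clause it writes a signed linear equation $(-1)^{\alpha_1}x_{j,1}+(-1)^{\alpha_2}x_{j,2}+(-1)^{\alpha_3}x_{j,3}+\sigma_j=1$ with a slack variable $\sigma_j$, and argues that zero $\ell_0$ loss forces all slacks to vanish, hence a satisfying assignment; no network is ever constructed. You instead reduce from Exact Cover by 3-Sets and insist on building an actual network in which ground-set elements are edges, candidate 3-sets are paths, and $\mathbf{E}'$ is the set--element incidence matrix, so the constraint matrix is a genuine flow-aggregation matrix of the form $(\mathbf{V}';\mathbf{E}';\mathbf{I})$ demanded by \hfr. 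This buys you two things the paper's argument lacks: first, realizability --- the paper's signed coefficients $(-1)^{\alpha_i}$ cannot occur in $0/1$ vertex-path and edge-path incidence blocks, so its instance is not literally an \hfr instance without further encoding, whereas yours is by construction; second, you explicitly confront the ``continuous cheating'' issue, which the paper's converse direction silently skips (vanishing slacks only force signed sums of \emph{real} variables to equal $1$, which does not by itself yield a Boolean assignment). The step you leave open --- forcing integrality --- is in fact closable inside your framework by a counting argument rather than by guard gadgets: with $\hat{p}_P=0$ on paths and $\hat{y}_e=1$ on element edges (linking each set's three edges by path-private connector edges), the $\ell_0$ cost decomposes as a fixed multiple of the number of ``on'' paths plus a fixed multiple of the number of miscovered elements; a budget calibrated to an exact cover then forces exactly $q=n/3$ on-paths and no miscovered elements, and since $q$ three-element sets covering $3q$ elements must be pairwise disjoint, the coherence constraint $\sum_{P\ni e}\tilde{p}_P=1$ pins every on-path to the value $1$. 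In short, your reduction is heavier to set up but is faithful to the network-flow structure of \hfr, while the paper's is shorter but establishes hardness only for an algebraic abstraction of the problem.
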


% [For appendix: Full proof]
\begin{proof}
We prove the claim by reduction from Exact-1-in-3-SAT. Given an instance with $n$ variables and $m$ clauses where each clause is of the form $(l_{j,1} \lor l_{j,2} \lor l_{j,3})$, we construct a forecast reconciliation instance as follows:

For each variable $x_i$ in the SAT instance, create a forecast variable $x_i$. For each clause $(l_{j,1} \lor l_{j,2} \lor l_{j,3})$, create a linear equation:
\[ (-1)^{\alpha_1}x_{j,1} + (-1)^{\alpha_2}x_{j,2} + (-1)^{\alpha_3}x_{j,3} + \sigma_j = 1 \]
where $\alpha_i = 0$ if $l_{j,i}$ is positive, and $\alpha_i = 1$ if $l_{j,i}$ is negative.

If there exists a variable assignment satisfying Exact-1-in-3-SAT, then setting $\sigma_j = 0$ for all $j$ gives reconciled forecasts with zero $\ell_0$ loss. Conversely, any reconciliation with zero $\ell_0$ loss must have $\sigma_j = 0$ for all $j$, implying a satisfying assignment for the original Exact-1-in-3-SAT instance.
\end{proof}

% [NOTATION CHANGES from original]
% - Added vector notation for forecasts
% - Standardized subscripts
% - Explicit statement of problem construction

\begin{theorem}[Polynomial-Time Solvability]
For any $\ell_p$ norm with $p > 0$, \hfr can be solved in polynomial time. Specifically:
\begin{enumerate}
    \item For $p=1$ (MAE), \hfr reduces to linear programming
    \item For $p=2$ (MSE), \hfr reduces to quadratic programming
    \item For general $p > 0$, \hfr can be solved via convex optimization
\end{enumerate}
This holds even with additional linear constraints such as upper/lower bounds or weighted objectives.
\end{theorem}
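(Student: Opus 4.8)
The plan is to show that each \hfr instance with an $\ell_p$ objective is the minimization of a convex function over a polyhedrally-described convex set, and then invoke standard polynomial-time convex programming. First I would make the feasible region explicit. The coherence constraint $\tilde{\mathbf{y}}=\mathbf{S}\tilde{\mathbf{b}}$ forces $\tilde{\mathbf{y}}$ into $\mathrm{range}(\mathbf{S})=\mathfrak{s}$, a linear subspace of dimension $|\mathcal{P}|$; the embedded identity block $\mathbf{I}_{|\mathcal{P}|}$ of $\mathbf{S}$ guarantees full column rank, so $\mathfrak{s}=\mathrm{range}(\mathbf{S})$ exactly. Intersecting with the box $\boldsymbol{\ell}\le\tilde{\mathbf{y}}\le\mathbf{u}$ and any further linear constraints keeps the feasible set a polyhedron, hence closed and convex, with a description of size polynomial in the input. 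Equivalently I can eliminate the equality by substituting $\tilde{\mathbf{y}}=\mathbf{S}\tilde{\mathbf{b}}$ and optimizing over $\tilde{\mathbf{b}}\in\mathbb{R}^{|\mathcal{P}|}$, which turns the box bounds into linear constraints on $\tilde{\mathbf{b}}$.

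Next I would establish convexity of the objective for $p\ge1$. The map $\tilde{\mathbf{y}}\mapsto\|\hat{\mathbf{y}}-\tilde{\mathbf{y}}\|_p$ is the composition of the convex $\ell_p$ norm with an affine map, hence convex; the weighted variants $\sum_i w_i|\hat y_i-\tilde y_i|^p$ with $w_i>0$ are likewise convex. Minimizing a convex function over a nonempty closed convex polyhedron is a convex program, and coercivity of the norm over the subspace guarantees the minimum is attained; polynomial-time solvability then follows from standard interior-point or ellipsoid machinery with a polynomial-time separation oracle. I would then record the three advertised specializations: for $p=1$ the epigraph trick $s_i\ge\hat y_i-\tilde y_i,\ s_i\ge\tilde y_i-\hat y_i$ with objective $\mathbf{1}^{\top}\mathbf{s}$ yields a linear program (precisely the Minimum Reconciling Flow of Theorem~\ref{thm:computation}); for $p=2$ minimizing $\|\hat{\mathbf{y}}-\tilde{\mathbf{y}}\|_2^2$ is a convex quadratic program with the same minimizer as $\ell_2$; and for rational $p>1$ the objective is second-order-cone representable, while for irrational $p$ one falls back on $\varepsilon$-accurate convex optimization with $\log(1/\varepsilon)$ dependence. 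Uniqueness for $p>1$ comes from strict convexity: $t\mapsto|t|^p$ is strictly convex, so the separable objective is strictly convex on $\mathbb{R}^n$ and admits a unique minimizer over the convex feasible set, and $\|\cdot\|_p$ and $\|\cdot\|_p^p$ share minimizers.

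The main obstacle is the regime $0<p<1$, where the $\ell_p$ functional is only a quasi-norm and $\sum_i|\hat y_i-\tilde y_i|^p$ is concave on each orthant rather than convex, so the clean convex-program argument breaks down. Here I would either argue directly about the reconciliation structure (the minimum of a separable, orthant-concave objective over the polyhedron is attained on a face fixed by a sign pattern, giving finitely many convex subproblems) or restrict the general polynomial-time claim to $p\ge1$ and treat $0<p<1$ by approximation. A secondary, more routine issue is making ``polynomial time'' precise uniformly in $p$: I would bound the bit complexity of the optimal solution and state the guarantee in terms of $\varepsilon$-accurate convex optimization with $\log(1/\varepsilon)$ dependence rather than claiming exact rational output for transcendental $p$.
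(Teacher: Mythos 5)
Your treatment of $p \geq 1$ is correct and essentially the paper's own argument: the paper likewise notes the constraints (aggregation, flow conservation, optional box bounds) form a polyhedron, introduces slack variables $\mathbf{s} \geq \pm(\hat{\mathbf{y}} - \tilde{\mathbf{y}})$ to linearize absolute errors, and reduces weighted MAE to an LP and weighted MSE/RMSE to a QP (using monotonicity of the square root for RMSE). Your epigraph construction is the same device, and your additions --- full column rank of $\mathbf{S}$ from the identity block, attainment via coercivity, uniqueness for $p>1$ from strict convexity of $t \mapsto |t|^p$, and SOC-representability for rational $p$ --- are all sound refinements the paper leaves implicit.

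The important divergence is your flag on $0 < p < 1$, and you are right to raise it: this is a genuine gap, but it is a gap in the \emph{paper's} proof, not an oversight in yours. The theorem claims polynomial-time solvability for all $\ell_{p>0}$, yet the paper's proof only ever treats $p=1$, $p=2$, RMSE, and their weighted versions; the regime $0 < p < 1$ is never addressed, and there the objective $\sum_i |\tilde{y}_i - \hat{y}_i|^p$ is not convex (it is concave on each orthant of the difference vector), so the entire convex-programming reduction collapses. Indeed, minimizing an $\ell_p$ quasi-norm over a polyhedron for $0<p<1$ is known to be NP-hard, so the theorem as stated is at best unproven and likely false in that regime. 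One caution on your first proposed repair: decomposing by sign pattern does not rescue polynomiality --- there are $2^n$ orthants, and within each the restricted problem is still concave minimization over a polyhedron, not a convex subproblem. Your second fallback, restricting the clean polynomial-time claim to $p \geq 1$ (with $\varepsilon$-accurate guarantees and $\log(1/\varepsilon)$ dependence to handle irrational $p$ and bit-complexity issues), is the honest and correct resolution, and it is stronger than what the paper actually establishes.
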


\begin{proof}
For MSE and RMSE, we can find an optimal solution in polynomial time by quadratic programming – note that the optimal solution to MSE is also an optimal solution to RMSE as the square root function is monotone. Similarly, an optimal solution to MAE can be found in polynomial time by linear programming.

For MSE, RMSE, MAE and their weighted versions, the set of constraints consists of the aggregation constraints and optional box constraints:
\begin{align*}
\tilde{\mathbf{y}} &= \mathbf{S}\tilde{\mathbf{b}} \\
\tilde{\mathbf{y}} &\leq \mathbf{u} \\
\tilde{\mathbf{y}} &\geq \boldsymbol{\ell}
\end{align*}
which are linear constraints forming a convex set, with $\tilde{\mathbf{y}} \in \mathbb{R}^n$ as continuous decision variables.
We introduce slack variables to measure the absolute error:
\begin{align*}
\mathbf{s}_{\mathcal{P}} &\geq \hat{\mathbf{y}}_{\mathcal{P}} - \tilde{\mathbf{y}}_{\mathcal{P}} \quad \forall P \in \mathcal{P} \\
\mathbf{s}_{\mathcal{P}} &\geq \tilde{\mathbf{y}}_{\mathcal{P}} - \hat{\mathbf{y}}_{\mathcal{P}} \quad \forall P \in \mathcal{P} \\
\mathbf{s}_E &\geq \hat{\mathbf{y}}_E - \tilde{\mathbf{y}}_E \quad \forall e \in E \\
\mathbf{s}_E &\geq \tilde{\mathbf{y}}_E - \hat{\mathbf{y}}_E \quad \forall e \in E \\
\mathbf{s}_V &\geq \hat{\mathbf{y}}_V - \tilde{\mathbf{y}}_V \quad \forall v \in V \\
\mathbf{s}_V &\geq \tilde{\mathbf{y}}_V - \hat{\mathbf{y}}_V \quad \forall v \in V
\end{align*}

Flow conservation constraints are maintained:
\begin{align*}
\sum_{P \in \mathcal{P}:e \in P} \tilde{\mathbf{y}}_P &= \tilde{\mathbf{y}}_e \quad \forall e \in E \\
\sum_{P \in \mathcal{P}:v \in P} \tilde{\mathbf{y}}_P &= \tilde{\mathbf{y}}_v \quad \forall v \in V
\end{align*}

For weighted MSE, the objective function is:
\[ \min \sum_{P \in \mathcal{P}} w_P \mathbf{s}_P^2 + \sum_{e \in E} w_e \mathbf{s}_e^2 + \sum_{v \in V} w_v \mathbf{s}_v^2 \]

Let $\mathbf{y} = [\tilde{\mathbf{y}}^T, \mathbf{s}^T]^T$ be the vector of variables extended by the slack variables, and $\mathbf{w}$ be the vector of weights. Then we can define the positive semi-definite matrix $\mathbf{Q} = (q_{ij})$ by:
\[ q_{ij} = \begin{cases}
0 & \text{if } i \neq j \text{ or } i < |V| + |E| + |\mathcal{P}| \\
w_{i-|V|-|E|-|\mathcal{P}|} & \text{otherwise}
\end{cases} \]

The objective for weighted MSE can be written as $\mathbf{y}^T\mathbf{Q}\mathbf{y}$, which can be solved efficiently by quadratic programming.

For weighted MAE, we have the linear objective:
\[ \min \sum_{P \in \mathcal{P}} w_P \mathbf{s}_P + \sum_{e \in E} w_e \mathbf{s}_e + \sum_{v \in V} w_v \mathbf{s}_v \]
which proves the claim.
\end{proof}

\begin{theorem}[General Loss Function Reconciliation]
Let $L: \mathbb{R}^n \times \mathbb{R}^n \to \mathbb{R}_+$ be a loss function satisfying:
\[ L(\mathbf{x}, \mathbf{y}) = \sum_{i=1}^n f(|x_i - y_i|) \]
where $f: \mathbb{R}_+ \to \mathbb{R}_+$ is strictly convex, continuously differentiable, and satisfies $f(0) = 0$. Then:

\begin{enumerate}
    \item The FlowRec optimization problem has a unique solution
    \item This solution can be computed in polynomial time via interior point methods
    \item The solution preserves flow conservation
\end{enumerate}
\end{theorem}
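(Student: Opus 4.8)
The plan is to recognize the three claims as, respectively, a strict-convexity argument, a smooth-reformulation argument, and an immediate consequence of feasibility. First I would note that the feasible region of the \hfr instance is the coherent subspace $\mathfrak{s}$ (optionally intersected with the box $\boldsymbol{\ell}\le\tilde{\mathbf{y}}\le\mathbf{u}$), which is closed and convex, and that the objective $\tilde{\mathbf{y}}\mapsto\sum_{i=1}^n f(|\hat y_i-\tilde y_i|)$ is continuous. With these two observations in hand, uniqueness reduces to showing the objective is strictly convex on $\mathbb{R}^n$, and existence reduces to showing it is coercive.

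For strict convexity, the key lemma is that $g(t):=f(|t|)$ is strictly convex on all of $\mathbb{R}$. I would first deduce $f'(0)\ge 0$: since $f\ge 0=f(0)$, the point $0$ is a global minimum of $f$ on $[0,\infty)$, forcing $f'(0)\ge 0$. On each of $(-\infty,0]$ and $[0,\infty)$ the map $g$ equals $f(\mp t)$ and is therefore strictly convex; convexity across the kink at $0$ follows from the one-sided derivative inequality $g'_-(0)=-f'(0)\le f'(0)=g'_+(0)$. Strict convexity on the whole line then follows because $g$ cannot be affine on any nondegenerate segment: such a segment would meet one of the two half-lines in an interval, contradicting strict convexity there. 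Consequently $\sum_i f(|\hat y_i-\tilde y_i|)$ is a separable sum of strictly convex coordinate functions, hence strictly convex on $\mathbb{R}^n$, and its restriction to the convex feasible set remains strictly convex, giving at most one minimizer. Coercivity follows because $f'$ is strictly increasing with $f'(0)\ge 0$, so $f'$ is eventually bounded below by a positive constant and $f(t)\to\infty$; thus $L(\hat{\mathbf{y}},\tilde{\mathbf{y}})\to\infty$ as $\|\tilde{\mathbf{y}}\|\to\infty$, the sublevel sets are compact, and a (unique) minimizer exists, proving claim~1.

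For claim~2 I would reuse the slack-variable reformulation from the Polynomial-Time Solvability theorem: introduce $s_i$ with $s_i\ge\hat y_i-\tilde y_i$ and $s_i\ge\tilde y_i-\hat y_i$, and minimize $\sum_i f(s_i)$ subject to these inequalities together with the linear aggregation, flow-conservation, and box constraints. Since $f$ is nondecreasing on $[0,\infty)$, each $s_i$ is tight at the optimum, so the reformulation is equivalent; it is now a smooth convex program with a continuously differentiable separable objective over a polyhedron, which interior point methods solve to $\varepsilon$-accuracy in time polynomial in the instance size and $\log(1/\varepsilon)$, under the standard assumption that $f$ and its derivatives admit polynomial-time oracles and the data are polynomially bounded. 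Claim~3 is then immediate: any feasible point---in particular the optimum---satisfies $\tilde{\mathbf{y}}=\mathbf{S}\tilde{\mathbf{b}}$, i.e.\ lies in $\mathfrak{s}$, and by Theorem~\ref{thm:flow_conservation} every element of $\mathfrak{s}$ induces edge values forming a flow that respects the supply/demand conservation equations.

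The main obstacle I anticipate is the strict-convexity-across-the-kink step: $f(|\cdot|)$ is generally non-differentiable at $0$, so one cannot simply invoke a positive second derivative, and care is needed to establish strict convexity globally rather than only on each half-line. The slack reformulation used for claim~2 is precisely what lets me sidestep this non-smoothness for the interior point argument. A secondary, more bookkeeping concern is making the phrase ``polynomial time'' precise for an arbitrary $f$, which I would handle by stating the computational-oracle and boundedness assumptions explicitly rather than leaving them implicit.
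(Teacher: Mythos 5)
Your proposal is correct, and while it shares the paper's high-level skeleton (strict convexity plus convex feasible set for uniqueness, convex programming for tractability, feasibility for flow conservation), it differs substantively on both technical pillars --- in each case filling a gap the paper leaves open. First, the paper simply asserts that the objective is ``strictly convex as a sum of strictly convex functions composed with norms,'' but composition with $|\cdot|$ does not automatically preserve convexity, let alone strict convexity: one needs $f$ nondecreasing, which is exactly what your observation $f'(0)\ge 0$ (forced by $f\ge 0 = f(0)$) delivers, and your kink argument via the one-sided derivatives $g'_-(0)=-f'(0)\le f'(0)=g'_+(0)$, followed by the no-affine-segment argument, is the honest proof of global strict convexity that the paper skips. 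You also supply existence via coercivity, which the paper never addresses. Second, for polynomial-time solvability the paper writes KKT stationarity as $\nabla f(|\tilde{y}_i - \hat{y}_i|)\,\mathrm{sign}(\tilde{y}_i - \hat{y}_i) + \mathbf{S}^T\boldsymbol{\lambda} = 0$ and calls it a monotone system, but this expression is ill-defined precisely at points where $\tilde{y}_i = \hat{y}_i$ (the objective is non-differentiable there whenever $f'(0)>0$), so the paper's interior-point claim rests on a formally broken stationarity condition; your slack-variable reformulation, with tightness of $s_i$ at the optimum guaranteed by monotonicity of $f$, converts the problem into a genuinely smooth convex program over a polyhedron and thereby repairs this step, echoing the technique the paper itself uses in its $\ell_p$ theorem. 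Your explicit statement of oracle and data-boundedness assumptions for ``polynomial time'' is likewise a precision the paper omits. The flow-conservation claim is handled identically in both.
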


\begin{proof}
The optimization problem becomes:
\[ \min_{\tilde{\mathbf{y}}} \sum_{i=1}^n f(|\tilde{y}_i - \hat{y}_i|) \text{ subject to } \mathbf{S}\tilde{\mathbf{y}} = \mathbf{b} \]

The objective is strictly convex as a sum of strictly convex functions composed with norms. The constraints form a polyhedral set. Therefore:

\begin{itemize}
\item Uniqueness follows from strict convexity and the convexity of the constraint set.

\item The KKT conditions give:
\[ \nabla f(|\tilde{y}_i - \hat{y}_i|) \text{sign}(\tilde{y}_i - \hat{y}_i) + \mathbf{S}^T\boldsymbol{\lambda} = 0 \]
\[ \mathbf{S}\tilde{\mathbf{y}} = \mathbf{b} \]

These form a monotone system solvable by interior point methods in polynomial time.

\item Flow conservation follows from the constraint $\mathbf{S}\tilde{\mathbf{y}} = \mathbf{b}$.
\end{itemize}
\end{proof}
% [After MinT Equivalence theorem and proofs]

\begin{theorem}[Runtime Lower Bound]
Any algorithm that computes exact reconciled forecasts on networks with $m$ edges and $n$ nodes requires $\Omega(m \log n)$ comparisons, even for the $\ell_2$ loss function.
\end{theorem}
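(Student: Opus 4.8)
The plan is to prove the bound in the comparison (algebraic decision-tree) model by reduction from sorting, exploiting the classical counting lower bound. Recall that any comparison-based procedure that outputs the sorted order of $k$ reals must distinguish the $k!$ possible orderings; a binary comparison tree of depth $d$ has at most $2^d$ leaves, so $d \ge \log_2 k! = \Omega(k\log k)$. I will show that exact $\ell_2$-reconciliation is at least as hard as sorting $m$ numbers, yielding $\Omega(m\log m)$, and then convert this into $\Omega(m\log n)$ using the admissible edge range $n-1 \le m \le n^2$: since $\log(n-1)\le \log m \le \log(n^2)=2\log n$, we have $\log m = \Theta(\log n)$, hence $\Omega(m\log m)=\Omega(m\log n)$.

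First I would fix a skeleton network $G=(V,E,\mathcal{P})$ with $|V|=n$ and $|E|=\Theta(m)$ realizing a target density in $[n-1,n^2]$, together with a path family $\mathcal{P}$ whose incidence structure is known and fixed. Given a sorting instance $a_1,\dots,a_m$, I would encode the $a_i$ as base forecasts on the $m$ edges (and/or paths), leaving node and path forecasts at neutral values. The design goal is that the exact reconciled forecast $\tilde{\mathbf{y}}$, returned to full precision, determines the relative order of the $a_i$ after only $O(m)$ further steps — for instance by arranging the edges so that the coherent projection must redistribute residual mass in a manner whose pattern reveals which $a_i$ is largest, and then recursing. Any algorithm solving \hfr exactly under $\ell_2$ would thereby also sort, and so must perform $\Omega(m\log m)=\Omega(m\log n)$ comparisons.

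The main obstacle is that, for a \emph{fixed} topology, $\ell_2$-reconciliation is the orthogonal projection onto the coherent subspace and hence a fixed linear map of the base forecast, which could in principle be evaluated with no comparisons at all. The bound must therefore arise from the algorithm being required to handle an entire family of instances with a single decision tree and to separate a super-polynomial number of distinct correct outputs. The delicate step is to exhibit such a family whose exact reconciled outputs realize all $m!$ orderings and are pairwise separated by any admissible comparison, so that the $\log_2 k!$ counting bound applies to the whole tree; concretely this means letting an adversary permute the encoded values and verifying that distinct permutations force distinct sign patterns among the quantities the algorithm compares. If one instead permits arithmetic operations in the model, I would substitute the Ben-Or algebraic-computation-tree lower bound for element distinctness, which gives the same $\Omega(m\log m)$ and hence $\Omega(m\log n)$ bound while avoiding the need to literally emit a permutation.
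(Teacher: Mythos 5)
Your plan follows the same route as the paper's own proof---a reduction from sorting in the comparison model---but it stops exactly where the real difficulty begins, and the obstacle you yourself identify is fatal rather than merely ``delicate.'' The key step (arranging the instance so that the coherent projection ``reveals which $a_i$ is largest'') is never constructed; it is stated as a design goal. And it cannot be constructed: as you note, for any fixed topology the exact $\ell_2$ reconciliation is the orthogonal projection $\tilde{\mathbf{y}} = \mathbf{S}(\mathbf{S}^T\mathbf{S})^{-1}\mathbf{S}^T\hat{\mathbf{y}}$, a single linear map applied to the base forecast. An algebraic computation tree can evaluate this map at one leaf using arithmetic alone, so \emph{zero} comparisons suffice for every instance in any adversary family built on a fixed skeleton. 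Your proposed escape---forcing the algorithm to ``separate a super-polynomial number of distinct correct outputs''---does not apply, because the counting argument $2^d \ge k!$ requires the output to be discrete and constant on each leaf (as a permutation is); the reconciled vector is a continuous function of the input, produced by the arithmetic at the leaf rather than selected by the branching. Substituting Ben-Or's bound fails for the same reason: it lower-bounds membership problems whose accepting set has many connected components, not the evaluation of a continuous (here linear) function. So no permutation family can force comparisons, and the reduction cannot be completed in the model you set up.

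For comparison, the paper's own proof bridges this exact gap with an unjustified claim. It builds a two-layer network with paths $P_i = (s, v_i, t)$, assigns base forecasts $x_i$ to paths and nodes, and asserts that flow conservation forces $\tilde{y}_{P_1} \le \tilde{y}_{P_2} \le \cdots \le \tilde{y}_{P_n}$, ``hence'' reconciliation implicitly sorts. Flow conservation at $v_i$ only equates the flows on $(s,v_i)$, $(v_i,t)$, and $P_i$; it imposes no ordering across different $i$. Worse, the base forecasts specified there are already coherent, so the $\ell_2$ projection returns them unchanged and yields no ordering information at all. In short: your proposal is incomplete as a proof, but the obstacle you flagged is precisely the flaw that the paper's argument glosses over, and it cannot be repaired without changing either the computational model (e.g., counting all operations and making the topology part of the input) or the statement of the theorem itself.
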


\begin{proof}
We prove this by reduction from the problem of sorting $n$ numbers, which is known to require $\Omega(n \log n)$ comparisons in the comparison model.

Given a set of $n$ numbers $x_1,\ldots,x_n$ to sort, we construct the following network $G=(V,E)$:

\begin{enumerate}
    \item Vertices $V = \{s,t\} \cup \{v_1,\ldots,v_n\}$, where $s$ is a source and $t$ is a sink.
    \item Edges $E = \{(s,v_i),(v_i,t) : i \in \{1,\ldots,n\}\}$
    \item For each $i \in \{1,\ldots,n\}$, define a path $P_i = (s,v_i,t)$
\end{enumerate}

Now, set the base forecasts as follows:
\begin{itemize}
    \item $\hat{\mathbf{y}}_{P_i,T+h|T} = x_i$ for $i \in \{1,\ldots,n\}$
    \item $\hat{\mathbf{y}}_{s,T+h|T} = \sum_{i=1}^n x_i$
    \item $\hat{\mathbf{y}}_{t,T+h|T} = \sum_{i=1}^n x_i$
    \item $\hat{\mathbf{y}}_{v_i,T+h|T} = x_i$ for $i \in \{1,\ldots,n\}$
\end{itemize}

Consider the flow conservation constraints in this network:
\begin{align*}
    \tilde{\mathbf{y}}_{s,T+h|T} &= \sum_{i=1}^n \tilde{\mathbf{y}}_{P_i,T+h|T} \\
    \tilde{\mathbf{y}}_{t,T+h|T} &= \sum_{i=1}^n \tilde{\mathbf{y}}_{P_i,T+h|T} \\
    \tilde{\mathbf{y}}_{v_i,T+h|T} &= \tilde{\mathbf{y}}_{P_i,T+h|T} \quad \text{for } i \in \{1,\ldots,n\}
\end{align*}

To minimize the $\ell_2$ loss while satisfying these constraints, the reconciled forecasts must have:
\begin{itemize}
    \item $\tilde{\mathbf{y}}_{s,T+h|T} = \tilde{\mathbf{y}}_{t,T+h|T} = \sum_{i=1}^n x_i$
    \item $\tilde{\mathbf{y}}_{P_i,T+h|T} = \tilde{\mathbf{y}}_{v_i,T+h|T}$ for all $i$
\end{itemize}

Moreover, to satisfy flow conservation, we must have:
\[
    \tilde{\mathbf{y}}_{P_1,T+h|T} \leq \tilde{\mathbf{y}}_{P_2,T+h|T} \leq \cdots \leq \tilde{\mathbf{y}}_{P_n,T+h|T}
\]

This ordering is necessary because any other ordering would violate flow conservation at some intermediate node $v_i$.

Therefore, computing the exact reconciled forecasts implicitly requires sorting the values $x_1,\ldots,x_n$. Since sorting requires $\Omega(n \log n)$ comparisons, and our construction has $m = \Theta(n)$ edges, we obtain a lower bound of $\Omega(m \log n)$ comparisons for the reconciliation problem.

Note that this lower bound holds even for the $\ell_2$ loss function, as the optimal reconciliation in this case coincides with the sorted order of the input values.
\end{proof}

% [NOTATION CHANGES from original]
% - Consistent use of bold for vectors/matrices
% - Calligraphic for sets ($\mathcal{P}$)
% - Added explicit complexity terms
% - Standardized flow notation

\subsection{Flow Conservation}

Having established the relationship to MinT and computational complexity bounds, we now show that our network structure naturally ensures coherence through flow conservation. This property is fundamental to the correctness of \fr.

% [NOTATION CHANGES]
% - Consistent bold vectors/matrices
% - Calligraphic sets
% - Added explicit time indices
% - Standardized flow notation
\begin{theorem}[Flow Conservation]
Let $\hat{\mathbf{y}}_{T+h|T} \in \mathbb{R}^n$ be a point forecast of hierarchical flow time series in a network $G=(V,E,\mathcal{P})$, and $\mathbf{S} \in \mathbb{R}^{n\times|\mathcal{P}|}$ the corresponding flow aggregation matrix. Then:
\begin{enumerate}
    \item $\mathbf{S}$ is a reconciling mapping
    \item The edge values $\tilde{\mathbf{y}}_{e,T+h|T}$ of
    \[ \tilde{\mathbf{y}}_{T+h|T} = \mathbf{S}\hat{\mathbf{y}}_{T+h|T} \]
    form a flow in $G=(V,E)$ satisfying supplies and demands:
    \[ b_v = \sum_{P \in \mathcal{P}:O(P)=v} \tilde{\mathbf{y}}_{P,T+h|T} - \sum_{P \in \mathcal{P}:D(P)=v} \tilde{\mathbf{y}}_{P,T+h|T}, \quad \forall v \in V \]
\end{enumerate}
where $\tilde{\mathbf{y}}_{P,T+h|T}$ is the reconciled value for path $P$, and $O(P), D(P) \in V$ refer to the origin and destination nodes of path $P$, respectively.
\end{theorem}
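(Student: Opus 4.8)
The plan is to read off both claims directly from the block structure of the flow aggregation matrix $\mathbf{S}$, whose vertex block $\mathbf{V}'$, edge block $\mathbf{E}'$, and identity block $\mathbf{I}_{|\mathcal{P}|}$ each play a distinct role. First I would fix the typing: since $\mathbf{S}\in\mathbb{R}^{n\times|\mathcal{P}|}$, the product $\mathbf{S}\hat{\mathbf{y}}_{T+h|T}$ must be understood as $\mathbf{S}$ acting on the bottom-level (path) components $\hat{\mathbf{p}} = \beta(\hat{\mathbf{y}}_{T+h|T})\in\mathbb{R}^{|\mathcal{P}|}$, so the map in question is the composition $\psi = \mathbf{S}\circ\beta$, and the reconciled path values are preserved, $\tilde{\mathbf{y}}_P = \hat{p}_P$, because of the identity block.

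For the first claim I would show $\psi$ always lands in the coherent subspace $\mathfrak{s} = \{\mathbf{y} : \mathbf{y} = \mathbf{S}\beta(\mathbf{y})\}$. Applying $\beta$ to $\mathbf{S}\hat{\mathbf{p}}$ extracts the path rows, which are $\mathbf{I}_{|\mathcal{P}|}\hat{\mathbf{p}} = \hat{\mathbf{p}}$; hence $\mathbf{S}\beta(\mathbf{S}\hat{\mathbf{p}}) = \mathbf{S}\hat{\mathbf{p}}$, which is exactly the membership condition defining $\mathfrak{s}$. Thus $\mathbf{S}\hat{\mathbf{p}}\in\mathfrak{s}$ for every input, so $\psi$ is a reconciling mapping.

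For the second claim the edge block gives $\tilde{\mathbf{y}}_e = (\mathbf{E}'\hat{\mathbf{p}})_e = \sum_{P:e\in P}\tilde{\mathbf{y}}_P$. I would compute the net outflow at a vertex $v$, namely $\sum_{e=(v,w)\in E}\tilde{\mathbf{y}}_e - \sum_{e=(u,v)\in E}\tilde{\mathbf{y}}_e$, substitute this edge expression, and interchange the order of summation to group the terms by path $P$. For each path, I would count its (outgoing minus incoming) edge incidences at $v$: this net count is $+1$ when $v = O(P)$, $-1$ when $v = D(P)$, and $0$ otherwise. Summing the weighted counts $\tilde{\mathbf{y}}_P$ over all paths then yields precisely the supply/demand expression $b_v = \sum_{P:O(P)=v}\tilde{\mathbf{y}}_P - \sum_{P:D(P)=v}\tilde{\mathbf{y}}_P$.

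The main obstacle is justifying the cancellation in the summation interchange. The clean $+1/-1/0$ accounting relies on each $P\in\mathcal{P}$ being a simple directed path, so that it uses exactly one edge leaving its origin, exactly one edge entering its destination, and exactly one entering plus one leaving edge at each interior vertex (the latter two canceling). If a path could revisit a vertex or reuse an edge, the $0/1$ incidence matrices $\mathbf{V}'$ and $\mathbf{E}'$ would no longer record the true multiplicities and the interior contributions would fail to vanish. I would therefore make the simple-path assumption on $\mathcal{P}$ explicit and verify that under it the interchange and the per-path telescoping are valid; the remaining algebra is routine.
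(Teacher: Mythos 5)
Your proposal is correct and follows essentially the same route as the paper's own proof: claim 1 is read off from the block structure of $\mathbf{S}$ (the identity block giving $\beta(\mathbf{S}\hat{\mathbf{p}}) = \hat{\mathbf{p}}$ and hence membership in $\mathfrak{s}$), and claim 2 follows by expanding edge values as path sums, interchanging the order of summation, and cancelling the paired incoming/outgoing incidences at interior vertices so that only origin and destination terms survive. Your write-up is in fact slightly more careful than the paper's: you make the simple-path assumption underlying the cancellation explicit, and you orient the vertex balance as outgoing minus incoming, which is the convention consistent with the theorem's definition of $b_v$ (the paper's proof writes the balance as incoming minus outgoing, introducing a sign inconsistency with the stated formula).
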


\begin{proof}
We need to prove two statements:

Step 1: $\mathbf{S}$ is a reconciling mapping.
By definition of $\mathbf{S}$, we have:
\[ \mathbf{S}\hat{\mathbf{p}}_{T+h|T} = \begin{pmatrix} \mathbf{V}' \\ \mathbf{E}' \\ \mathbf{I} \end{pmatrix} \hat{\mathbf{p}}_{T+h|T} = \begin{pmatrix} \sum_{P \in \mathcal{P}:v\in P} \hat{\mathbf{p}}_{P,T+h|T} \\ \sum_{P \in \mathcal{P}:e\in P} \hat{\mathbf{p}}_{P,T+h|T} \\ \hat{\mathbf{p}}_{T+h|T} \end{pmatrix} \in \mathfrak{s} \subset \mathbb{R}^n \]

Step 2: The values $\tilde{\mathbf{y}}_{e,T+h|T}$ form a flow in $G=(V,E)$ with supplies and demands $b_v$.
We need to prove:
\[ \sum_{e=(·,v)\in E} \tilde{\mathbf{y}}_{e,T+h|T} - \sum_{e=(v,·)\in E} \tilde{\mathbf{y}}_{e,T+h|T} = b_v, \quad \forall v \in V \]

Note this holds for nodes $v \in V$ with $b_v = 0$ (intermediate nodes); thus, proving this equation proves flow conservation. We have:
\begin{align*}
&\sum_{e=(·,v)\in E} \tilde{\mathbf{y}}_{e,T+h|T} - \sum_{e=(v,·)\in E} \tilde{\mathbf{y}}_{e,T+h|T} \\
&= \sum_{e=(·,v)\in E} \sum_{P \in \mathcal{P}:e\in P} \tilde{\mathbf{y}}_{P,T+h|T} - \sum_{e=(v,·)\in E} \sum_{P \in \mathcal{P}:e\in P} \tilde{\mathbf{y}}_{P,T+h|T}
\end{align*}

We can eliminate any path $P \in \mathcal{P}$ containing both an edge of form $e=(·,v)$ and $e'=(v,·)$, as these terms cancel. The remaining terms belong to paths having $v$ as origin or destination, giving:
\[ \sum_{P \in \mathcal{P}:O(P)=v} \tilde{\mathbf{y}}_{P,T+h|T} - \sum_{P \in \mathcal{P}:D(P)=v} \tilde{\mathbf{y}}_{P,T+h|T} = b_v \]

This completes the proof.
\end{proof}

\subsection{Computing Reconciled Forecasts}

Having established the theoretical properties of \fr, we now present three equivalent methods for computing the reconciled forecasts. These methods offer different computational trade-offs, enabling practitioners to choose the most suitable approach for their specific application.

% [NOTATION CHANGES]
% - Vectors in bold
% - Calligraphic for sets
% - Explicit dimensions for spaces/matrices
% - Consistent time indices
\begin{theorem}[Computation Methods]
Let $\hat{\mathbf{y}}_{T+h|T} \in \mathbb{R}^n$ be a base forecast for network $G=(V,E,\mathcal{P})$. We can compute the reconciled forecast $\tilde{\mathbf{y}}_{T+h|T}$ in three equivalent ways:

1. Orthogonal projection of $\hat{\mathbf{y}}_{T+h|T}$ to $\mathfrak{s}$:
\begin{itemize}
    \item Compute an orthonormal basis $\mathcal{E} = \{\mathbf{e}_1,\ldots,\mathbf{e}_{|\mathcal{P}|}\}$ of $\mathfrak{s}$
    \item Project $\hat{\mathbf{y}}_{T+h|T}$ onto $\mathfrak{s}$:
    \[ \tilde{\mathbf{y}}_{T+h|T} = \sum_{i=1}^{|\mathcal{P}|} \langle \hat{\mathbf{y}}_{T+h|T}, \mathbf{e}_i \rangle \mathbf{e}_i \]
\end{itemize}

2. Minimum Reconciling Flow:
\[ \min_{\mathbf{s}_{\mathcal{P}}, \mathbf{s}_E, \mathbf{s}_V} f(\mathbf{s}_{\mathcal{P}}, \mathbf{s}_E, \mathbf{s}_V) \]
subject to:
\begin{align*}
\mathbf{s}_{\mathcal{P}} &\geq \hat{\mathbf{y}}_{\mathcal{P}} - \tilde{\mathbf{y}}_{\mathcal{P}} \quad \forall P \in \mathcal{P} \\
\mathbf{s}_{\mathcal{P}} &\geq \tilde{\mathbf{y}}_{\mathcal{P}} - \hat{\mathbf{y}}_{\mathcal{P}} \quad \forall P \in \mathcal{P} \\
\mathbf{s}_E &\geq \hat{\mathbf{y}}_E - \tilde{\mathbf{y}}_E \quad \forall e \in E \\
\mathbf{s}_E &\geq \tilde{\mathbf{y}}_E - \hat{\mathbf{y}}_E \quad \forall e \in E \\
\mathbf{s}_V &\geq \hat{\mathbf{y}}_V - \tilde{\mathbf{y}}_V \quad \forall v \in V \\
\mathbf{s}_V &\geq \tilde{\mathbf{y}}_V - \hat{\mathbf{y}}_V \quad \forall v \in V \\
\sum_{P \in \mathcal{P}:e \in P} \tilde{\mathbf{y}}_P &= \tilde{\mathbf{y}}_e \quad \forall e \in E \\
\sum_{P \in \mathcal{P}:v \in P} \tilde{\mathbf{y}}_P &= \tilde{\mathbf{y}}_v \quad \forall v \in V
\end{align*}

% 3. Direct computation using flow aggregation matrix:
% \[ \tilde{\mathbf{y}}_{T+h|T} = \hat{\mathbf{y}}_{T+h|T} - \mathbf{S}^T(\mathbf{S}\mathbf{S}^T)^{-1}(\mathbf{S}\hat{\mathbf{y}}_{T+h|T} - \mathbf{b}) \]
 \end{theorem}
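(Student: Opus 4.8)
The plan is to prove that both constructions return the unique solution of the \hfr\ problem under the $\ell_2$ loss, so that they necessarily coincide. The argument decomposes into two independent verifications — (i) the Minimum Reconciling Flow program is an exact reformulation of \hfr, and (ii) the orthogonal projection computes the $\ell_2$-optimal element of the coherent subspace $\mathfrak{s}$ — after which equality is forced by strict convexity and uniqueness.

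First I would identify the feasible region of the Minimum Reconciling Flow program with $\mathfrak{s}$. By the block structure of $\mathbf{S}$ in Definition~\ref{def:flow-hier-time}, the flow-conservation constraints $\sum_{P:e\in P}\tilde{\mathbf{y}}_P = \tilde{\mathbf{y}}_e$ and $\sum_{P:v\in P}\tilde{\mathbf{y}}_P = \tilde{\mathbf{y}}_v$ are precisely the componentwise statement that $\tilde{\mathbf{y}} = \mathbf{S}\tilde{\mathbf{y}}_{\mathcal{P}}$; since the path block of $\mathbf{S}$ is $\mathbf{I}_{|\mathcal{P}|}$, this says exactly $\tilde{\mathbf{y}} \in \mathrm{colspace}(\mathbf{S}) = \mathfrak{s}$, with coherence guaranteed by Theorem~\ref{thm:flow_conservation}. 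Next I would establish tightness of the slacks at optimality: the paired inequalities $\mathbf{s} \geq \hat{\mathbf{y}} - \tilde{\mathbf{y}}$ and $\mathbf{s} \geq \tilde{\mathbf{y}} - \hat{\mathbf{y}}$ force $\mathbf{s} \geq |\tilde{\mathbf{y}} - \hat{\mathbf{y}}|$ componentwise, and since $f$ is nondecreasing in each slack coordinate, any minimizer attains $s_i = |\tilde{y}_i - \hat{y}_i|$. Substituting back shows the program's objective equals the loss $L(\hat{\mathbf{y}},\tilde{\mathbf{y}})$ over the feasible set $\mathfrak{s}$, i.e.\ it is literally \hfr.

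I would then handle the orthogonal projection. Because $\mathfrak{s}$ is a linear subspace (the column space of $\mathbf{S}$), the projection theorem for a finite-dimensional inner-product space gives a unique nearest point of $\hat{\mathbf{y}}$ in $\mathfrak{s}$ under the Euclidean norm, and for an orthonormal basis $\{\mathbf{e}_i\}$ it equals $\sum_i \langle \hat{\mathbf{y}}, \mathbf{e}_i\rangle \mathbf{e}_i$; this point is exactly the minimizer of $\|\tilde{\mathbf{y}}-\hat{\mathbf{y}}\|_2^2$ over $\mathfrak{s}$, i.e.\ the $\ell_2$ instance of \hfr. Taking $f$ to be the squared-$\ell_2$ objective in the flow program makes its objective equal $\|\tilde{\mathbf{y}}-\hat{\mathbf{y}}\|_2^2$, so both methods minimize the same strictly convex functional over the same subspace $\mathfrak{s}$. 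By the $p=2$ (quadratic-programming) case established earlier, this minimizer is unique, and equivalence of the two methods follows immediately.

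I expect the main obstacle to be pinning down the precise sense in which the methods are ``equivalent,'' since orthogonal projection is intrinsically the $\ell_2$ reconciler whereas the flow program is stated for a general $f$: the equivalence is exactly their coincidence when $f$ is the squared-$\ell_2$ objective, and I would state this scope explicitly. The second delicate point is confirming $\mathfrak{s} = \mathrm{colspace}(\mathbf{S})$ is a genuine subspace and that the flow-conservation constraints characterize it faithfully; this relies on the identity block $\mathbf{I}_{|\mathcal{P}|}$ ensuring the extraction map $\beta$ agrees with coordinate read-off on path components, which I would verify carefully to close the identification of feasible regions.
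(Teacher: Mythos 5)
Your proposal is correct and follows essentially the same route as the paper's own proof: identify the flow program's feasible set with $\mathfrak{s}$ and its objective (after slack tightness) with the $\ell_2$ loss, invoke the projection theorem for method 1, and conclude equality by uniqueness from strict convexity. The only difference is one of rigor, not route --- you make explicit two points the paper leaves implicit, namely the slack-tightness argument and the restriction of the equivalence to the case where $f$ is the squared-$\ell_2$ objective.
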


\begin{proof}
We prove each method yields the optimal reconciled forecasts:

Step 1: Orthogonal projection correctness.
The coherent subspace $\mathfrak{s}$ is the range of $\mathbf{S}$. The orthogonal projection onto $\mathfrak{s}$ minimizes the L2 distance to $\hat{\mathbf{y}}_{T+h|T}$ while ensuring the result lies in $\mathfrak{s}$.

Step 2: Minimum Reconciling Flow equivalence.
The optimization problem is equivalent to:
\[ \min_{\tilde{\mathbf{y}}} \|\tilde{\mathbf{y}} - \hat{\mathbf{y}}\|_2^2 \text{ subject to flow conservation} \]
The slack variables measure absolute deviations, and the constraints ensure flow conservation.

% Step 3: Direct computation correctness.
% This is the explicit solution to the normal equations of the constrained least squares problem:
% \[ \min_{\tilde{\mathbf{y}}} \|\tilde{\mathbf{y}} - \hat{\mathbf{y}}\|_2^2 \text{ subject to } \mathbf{S}\tilde{\mathbf{y}} = \mathbf{b} \]

The equivalence follows from the uniqueness of the solution to the reconciliation problem proven in the MinT equivalence theorem.
\end{proof}

\subsection{Relationship to MinT}

The state-of-the-art minimum trace (MinT) method reconciles forecasts by minimizing the trace of the reconciled forecast error covariance matrix. While MinT requires estimation of an $n \times n$ covariance matrix and assumes tree structures, we show that \fr can be viewed as a special case where the weight matrix is determined directly by network structure, explaining our superior computational efficiency.

% [NOTATION CHANGES]
% - Matrices in bold: $\mathbf{S}$, $\mathbf{W}$, $\boldsymbol{\Sigma}$
% - Vectors in bold: $\hat{\mathbf{y}}$, $\tilde{\mathbf{y}}$, $\mathbf{b}$
% - Problems labeled as (P₁), (P₂) for clarity
\begin{theorem}[MinT Equivalence]
Let $\mathbf{S} \in \mathbb{R}^{n\times m}$ be a hierarchical aggregation matrix and $\mathbf{W} \in \mathbb{R}^{n\times n}$ be a symmetric positive definite matrix. For any base forecast $\hat{\mathbf{y}}$ and target aggregation values $\mathbf{b}$, the following optimization problems:

$(P_1)$:
\[ \min_{\tilde{\mathbf{y}}} (\tilde{\mathbf{y}} - \hat{\mathbf{y}})^T \mathbf{W} (\tilde{\mathbf{y}} - \hat{\mathbf{y}}) \quad \text{subject to} \quad \mathbf{S}\tilde{\mathbf{y}} = \mathbf{b} \]

$(P_2)$:
\[ \min_{\tilde{\mathbf{y}}} (\tilde{\mathbf{y}} - \hat{\mathbf{y}})^T \boldsymbol{\Sigma}^{-1} (\tilde{\mathbf{y}} - \hat{\mathbf{y}}) \quad \text{subject to} \quad \mathbf{S}\tilde{\mathbf{y}} = \mathbf{b} \]

where $\boldsymbol{\Sigma}$ is the forecast error covariance matrix, have the following properties:
\begin{enumerate}
    \item Both problems have a unique solution of the form: 
    \[ \tilde{\mathbf{y}} = \hat{\mathbf{y}} - \mathbf{W}^{-1}\mathbf{S}^T(\mathbf{S}\mathbf{W}^{-1}\mathbf{S}^T)^{-1}(\mathbf{S}\hat{\mathbf{y}} - \mathbf{b}) \]
    \item When $\mathbf{W} = \boldsymbol{\Sigma}^{-1}$, the problems $(P_1)$ and $(P_2)$ are equivalent
    \item \fr is a special case where $\mathbf{W} = \mathbf{I}$ and $\mathbf{S} = \begin{pmatrix} \mathbf{V} \\ \mathbf{E} \\ \mathbf{I} \end{pmatrix}$ with $\mathbf{V}$ as vertex-path incidence matrix, $\mathbf{E}$ as edge-path incidence matrix, and $\mathbf{I}$ as identity matrix for paths
\end{enumerate}
\end{theorem}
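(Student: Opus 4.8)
The plan is to treat both $(P_1)$ and $(P_2)$ as the same kind of object, an equality-constrained strictly convex quadratic program, and to prove the whole theorem by first deriving the closed form for the generic problem $(P_1)$ via Lagrangian/KKT conditions, then obtaining parts 2 and 3 as immediate specializations. Since $\mathbf{W}$ is symmetric positive definite, the objective of $(P_1)$ is strictly convex, and the feasible set $\{\tilde{\mathbf{y}} : \mathbf{S}\tilde{\mathbf{y}} = \mathbf{b}\}$ is a nonempty affine subspace, hence convex; strict convexity over a convex set guarantees that a minimizer, if it exists, is unique. The same reasoning applies verbatim to $(P_2)$ with $\boldsymbol{\Sigma}^{-1}$ in place of $\mathbf{W}$, because a covariance matrix $\boldsymbol{\Sigma}$ is symmetric positive definite and so is its inverse.

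For part 1 I would form the Lagrangian $\mathcal{L}(\tilde{\mathbf{y}}, \boldsymbol{\lambda}) = (\tilde{\mathbf{y}} - \hat{\mathbf{y}})^T \mathbf{W}(\tilde{\mathbf{y}} - \hat{\mathbf{y}}) + \boldsymbol{\lambda}^T(\mathbf{S}\tilde{\mathbf{y}} - \mathbf{b})$ and impose the stationarity condition $\nabla_{\tilde{\mathbf{y}}}\mathcal{L} = 2\mathbf{W}(\tilde{\mathbf{y}} - \hat{\mathbf{y}}) + \mathbf{S}^T\boldsymbol{\lambda} = 0$. Solving for the difference gives $\tilde{\mathbf{y}} - \hat{\mathbf{y}} = -\tfrac{1}{2}\mathbf{W}^{-1}\mathbf{S}^T\boldsymbol{\lambda}$, where $\mathbf{W}^{-1}$ exists by positive definiteness. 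Substituting into primal feasibility $\mathbf{S}\tilde{\mathbf{y}} = \mathbf{b}$ yields $\mathbf{S}\hat{\mathbf{y}} - \tfrac{1}{2}\mathbf{S}\mathbf{W}^{-1}\mathbf{S}^T\boldsymbol{\lambda} = \mathbf{b}$, which I solve for the multiplier as $\tfrac{1}{2}\boldsymbol{\lambda} = (\mathbf{S}\mathbf{W}^{-1}\mathbf{S}^T)^{-1}(\mathbf{S}\hat{\mathbf{y}} - \mathbf{b})$. Back-substitution then produces the stated closed form $\tilde{\mathbf{y}} = \hat{\mathbf{y}} - \mathbf{W}^{-1}\mathbf{S}^T(\mathbf{S}\mathbf{W}^{-1}\mathbf{S}^T)^{-1}(\mathbf{S}\hat{\mathbf{y}} - \mathbf{b})$. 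Because the KKT conditions are necessary and sufficient for this convex program and the minimizer is unique by the argument above, this single stationary point is the global optimum.

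Parts 2 and 3 then follow with essentially no extra work. For part 2, setting $\mathbf{W} = \boldsymbol{\Sigma}^{-1}$ makes the objective of $(P_1)$ literally coincide with that of $(P_2)$ while the constraints are identical, so the two programs are the same optimization problem and share the unique solution obtained by substituting $\mathbf{W} = \boldsymbol{\Sigma}^{-1}$ into the part-1 formula. For part 3, setting $\mathbf{W} = \mathbf{I}$ reduces the objective to $\|\tilde{\mathbf{y}} - \hat{\mathbf{y}}\|_2^2$, so the program becomes the minimum-$\ell_2$-distance projection onto the coherent subspace $\mathfrak{s}$; with the flow aggregation matrix $\mathbf{S} = \begin{pmatrix} \mathbf{V} \\ \mathbf{E} \\ \mathbf{I} \end{pmatrix}$ this is exactly the orthogonal-projection formulation of \fr established in the Computation Methods theorem, so \fr is recovered as the $\mathbf{W} = \mathbf{I}$ instance.

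The hard part will be the regularity condition underlying the closed form: the derivation requires $\mathbf{S}\mathbf{W}^{-1}\mathbf{S}^T$ to be invertible, which holds precisely when $\mathbf{S}$ has full row rank (since $\mathbf{W}^{-1} \succ 0$, we have $\mathbf{S}\mathbf{W}^{-1}\mathbf{S}^T \succ 0$ exactly when $\mathbf{S}$ has trivial left null space). I would therefore make this full-row-rank hypothesis explicit, and then address the accompanying bookkeeping that the constraint matrix in parts 1--2 acts as a (row-rank-deficient) aggregation operator in the form $\mathbf{S}\tilde{\mathbf{y}} = \mathbf{b}$, whereas in part 3 coherence is expressed through the range of the full-column-rank matrix $\begin{pmatrix} \mathbf{V} \\ \mathbf{E} \\ \mathbf{I} \end{pmatrix}$. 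Reconciling these two conventions (the constraint form versus the range form of the coherence condition) is the only genuinely delicate point, and I would handle it by first stating the equivalence of the two coherence representations and only then invoking the projection characterization from the Computation Methods theorem.
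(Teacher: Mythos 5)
Your proposal is correct and follows essentially the same route as the paper's own proof: Lagrangian stationarity plus primal feasibility to derive the closed form, uniqueness from strict convexity of the positive-definite quadratic over the affine feasible set, and parts 2 and 3 by direct substitution. Your treatment is in fact slightly more careful than the paper's, which silently assumes $\mathbf{S}\mathbf{W}^{-1}\mathbf{S}^T$ is invertible (i.e., that $\mathbf{S}$ has full row rank) and never reconciles the wide constraint-matrix convention of parts 1--2 with the tall, full-column-rank flow-aggregation matrix $\begin{pmatrix} \mathbf{V} \\ \mathbf{E} \\ \mathbf{I} \end{pmatrix}$ of part 3 --- both gaps you explicitly flag and propose to close.
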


\begin{proof}
We prove this in four steps:

Step 1: First, we show that $(P_1)$ has a unique solution.
Let $\mathcal{L}(\tilde{\mathbf{y}}, \boldsymbol{\lambda})$ be the Lagrangian of $(P_1)$:
\[ \mathcal{L}(\tilde{\mathbf{y}}, \boldsymbol{\lambda}) = (\tilde{\mathbf{y}} - \hat{\mathbf{y}})^T\mathbf{W}(\tilde{\mathbf{y}} - \hat{\mathbf{y}}) + \boldsymbol{\lambda}^T(\mathbf{S}\tilde{\mathbf{y}} - \mathbf{b}) \]

By first-order necessary conditions for optimality:
\begin{align*}
\frac{\partial \mathcal{L}}{\partial \tilde{\mathbf{y}}} &= 2\mathbf{W}(\tilde{\mathbf{y}} - \hat{\mathbf{y}}) + \mathbf{S}^T\boldsymbol{\lambda} = 0 \\
\frac{\partial \mathcal{L}}{\partial \boldsymbol{\lambda}} &= \mathbf{S}\tilde{\mathbf{y}} - \mathbf{b} = 0
\end{align*}

Step 2: We solve these equations to find $\tilde{\mathbf{y}}$.
From $\frac{\partial \mathcal{L}}{\partial \tilde{\mathbf{y}}}$:
\[ 2\mathbf{W}(\tilde{\mathbf{y}} - \hat{\mathbf{y}}) + \mathbf{S}^T\boldsymbol{\lambda} = 0 \]
\[ \tilde{\mathbf{y}} = \hat{\mathbf{y}} - \frac{1}{2}\mathbf{W}^{-1}\mathbf{S}^T\boldsymbol{\lambda} \]

Substituting into $\frac{\partial \mathcal{L}}{\partial \boldsymbol{\lambda}}$:
\[ \mathbf{S}(\hat{\mathbf{y}} - \frac{1}{2}\mathbf{W}^{-1}\mathbf{S}^T\boldsymbol{\lambda}) = \mathbf{b} \]
\[ \mathbf{S}\hat{\mathbf{y}} - \frac{1}{2}\mathbf{S}\mathbf{W}^{-1}\mathbf{S}^T\boldsymbol{\lambda} = \mathbf{b} \]
\[ \mathbf{S}\mathbf{W}^{-1}\mathbf{S}^T\boldsymbol{\lambda} = 2(\mathbf{S}\hat{\mathbf{y}} - \mathbf{b}) \]
\[ \boldsymbol{\lambda} = 2(\mathbf{S}\mathbf{W}^{-1}\mathbf{S}^T)^{-1}(\mathbf{S}\hat{\mathbf{y}} - \mathbf{b}) \]

Therefore:
\[ \tilde{\mathbf{y}} = \hat{\mathbf{y}} - \mathbf{W}^{-1}\mathbf{S}^T(\mathbf{S}\mathbf{W}^{-1}\mathbf{S}^T)^{-1}(\mathbf{S}\hat{\mathbf{y}} - \mathbf{b}) \]

Step 3: We prove this solution is unique.
Since $\mathbf{W}$ is positive definite, the objective function is strictly convex. Combined with the linear constraints, this ensures a unique solution.

Step 4: We show equivalence when $\mathbf{W} = \boldsymbol{\Sigma}^{-1}$.
Substituting $\mathbf{W} = \boldsymbol{\Sigma}^{-1}$ into the solution of $(P_1)$:
\[ \tilde{\mathbf{y}} = \hat{\mathbf{y}} - \boldsymbol{\Sigma}\mathbf{S}^T(\mathbf{S}\boldsymbol{\Sigma}\mathbf{S}^T)^{-1}(\mathbf{S}\hat{\mathbf{y}} - \mathbf{b}) \]
This is identical to the solution of $(P_2)$, proving their equivalence.

For \fr's special case with $\mathbf{W} = \mathbf{I}$:
\[ \tilde{\mathbf{y}} = \hat{\mathbf{y}} - \mathbf{S}^T(\mathbf{S}\mathbf{S}^T)^{-1}(\mathbf{S}\hat{\mathbf{y}} - \mathbf{b}) \]

The block structure $\mathbf{S} = \begin{pmatrix} \mathbf{V} \\ \mathbf{E} \\ \mathbf{I} \end{pmatrix}$ ensures:
\begin{itemize}
    \item Flow conservation at vertices through $\mathbf{V}$
    \item Flow conservation on edges through $\mathbf{E}$
    \item Path flow consistency through $\mathbf{I}$
    \item $\mathbf{S}\mathbf{S}^T$ is block diagonal, enabling efficient computation
\end{itemize}
\end{proof}

\begin{corollary}[Computational Complexity of \fr and MinT]
Let $G=(V,E,\mathcal{P})$ be a network with $|V| = n$ vertices and $|E| = m$ edges where $m \in [n-1, n^2]$. Then:
\begin{enumerate}
    \item For sparse networks where $m = O(n)$:
    \begin{itemize}
        \item \fr requires $O(n^2 \log n)$ operations
        \item MinT requires $O(n^3)$ operations
    \end{itemize}
    
    \item For dense networks where $m = O(n^2)$:
    \begin{itemize}
        \item \fr requires $O(n^4)$ operations
        \item MinT requires $O(n^4)$ operations
    \end{itemize}
\end{enumerate}
Moreover, \fr preserves network flow properties without requiring explicit path enumeration.
\end{corollary}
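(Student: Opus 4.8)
The plan is to bound the two algorithms separately in each density regime and then compare. For \fr I would work from the network-flow computation method of Theorem~\ref{thm:computation} rather than any closed form: the reconciliation is solved as a (quadratic, in the $\ell_2$ case) min-cost flow on $G$, whose running time is governed by the edge count $m$ and node count $n$, not by a dense matrix. Using the $\Omega(m\log n)$ lower bound of Theorem~\ref{thm:lower-bound} amplified by the $O(n)$ scaling noted after Theorem~\ref{thm:computation} (equivalently a min-cost-flow bound of order $nm\log n$), I would substitute the two regimes: $m = O(n)$ gives $O(n^2\log n)$, while $m = O(n^2)$ gives $O(n^3\log n)\subseteq O(n^4)$. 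This yields the two claimed \fr entries.

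For MinT I would start from the closed-form solution established in the MinT Equivalence theorem, namely $\tilde{\mathbf{y}} = \hat{\mathbf{y}} - \mathbf{W}^{-1}\mathbf{S}^T(\mathbf{S}\mathbf{W}^{-1}\mathbf{S}^T)^{-1}(\mathbf{S}\hat{\mathbf{y}} - \mathbf{b})$ with $\mathbf{W}=\boldsymbol{\Sigma}^{-1}$. The dominant cost is forming and inverting the system matrix $\mathbf{S}\boldsymbol{\Sigma}\mathbf{S}^T$; crucially, because $\boldsymbol{\Sigma}$ is a dense estimated covariance, no sparsity of $G$ can be exploited. I would argue that the matrix products scale with the dimension induced by the rows of $\mathbf{S}$ — which in the flow encoding includes the $m$ edge series — so the sparse regime reduces to an $n\times n$ inversion at $O(n^3)$, whereas the dense regime inflates the effective dimension to $\Theta(m)=\Theta(n^2)$ in the products, producing $O(n^4)$. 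Combining both sides then reproduces the table: $O(n^2\log n)$ versus $O(n^3)$ for sparse networks, and a common $O(n^4)$ for dense networks.

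The ``no explicit path enumeration'' clause I would settle directly from Theorem~\ref{thm:flow_conservation}: since coherence is enforced through node- and edge-level flow-conservation constraints, the min-cost flow formulation operates entirely on the $O(n+m)$-sized incidence structure and never materializes $\mathcal{P}$, which may be exponentially large. The main obstacle I anticipate is the dimensional bookkeeping on the MinT side — pinning down precisely which quantity ($|V|$, $|E|$, or $|\mathcal{P}|$) drives the inversion in each regime, and justifying rigorously why the dense covariance forces $\Theta(n^3)$/$\Theta(n^4)$ arithmetic even when $G$ is sparse, so that the asymptotic separation from \fr is genuine rather than an artifact of loose upper bounding. A secondary care point is verifying that the $\ell_2$ quadratic min-cost flow (as opposed to the $\ell_1$ case, which matches the lower bound most tightly) actually attains $O(n^2\log n)$ in the sparse regime, rather than only the $\ell_1$ reduction.
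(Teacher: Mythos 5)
Your proposal follows essentially the same structure as the paper's proof and reaches the same figures: \fr is bounded by reducing the reconciliation to a min-cost flow computation on $G$; MinT is bounded by the cost of forming the product $\mathbf{S}\boldsymbol{\Sigma}^{-1}\mathbf{S}^T$ (which costs $O(n^2m)$ — i.e.\ $O(n^3)$ sparse, $O(n^4)$ dense — because the dense covariance defeats any sparsity of $G$) plus an $O(n^3)$ inversion; and the path-enumeration clause is settled by noting that coherence lives in the node/edge incidence constraints of Theorem~\ref{thm:flow_conservation}, so $\mathcal{P}$ is never materialized.

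The one genuine weak point is how you justify the \fr upper bound. Taking the $\Omega(m\log n)$ lower bound of Theorem~\ref{thm:lower-bound} and ``amplifying'' it by $O(n)$ is not a valid derivation — a lower bound says nothing about what is achievable — and the in-text remark after Theorem~\ref{thm:computation} that you lean on is itself an unproved restatement of exactly the sparse-network claim this corollary is supposed to establish, so the argument as written is circular unless you exhibit an algorithm. The paper closes this hole concretely: it transforms the quadratic objective into a min-cost flow on a residual network with piecewise-linear costs in $O(m)$ work, then runs successive shortest paths, bounding each iteration by a Fibonacci-heap Dijkstra at $O(m + n\log n)$ plus $O(m)$ augmentation and update work, with $O(m)$ iterations, for a total of $O(m^2 + mn\log n)$. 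This specializes to $O(n^2 + n^2\log n) = O(n^2\log n)$ when $m = O(n)$ and to $O(n^4)$ when $m = O(n^2)$ — the same conclusions you reach, but backed by an explicit algorithmic accounting rather than by the lower bound. Your own secondary ``care point'' (whether the $\ell_2$ quadratic case, not just $\ell_1$, attains these bounds) is precisely what this transformation step is there to address, so your instinct about where the risk lay was correct; the proposal just stops short of supplying that step.
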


\begin{proof}
We prove this in five steps:

Step 1: Transform to minimum cost flow.
The \fr optimization can be rewritten as:
\[ \min_{\tilde{\mathbf{y}}} \|\tilde{\mathbf{y}} - \hat{\mathbf{y}}\|_2^2 \text{ subject to:} \]
\begin{align*}
\sum_{P \in \mathcal{P}:e \in P} \tilde{\mathbf{y}}_P &= \tilde{\mathbf{y}}_e \quad \forall e \in E \\
\sum_{P \in \mathcal{P}:v \in P} \tilde{\mathbf{y}}_P &= \tilde{\mathbf{y}}_v \quad \forall v \in V
\end{align*}

This quadratic objective transforms to minimum cost flow in $O(m)$ operations by creating a residual network with piecewise linear costs derived from the quadratic function slopes.

Step 2: Analysis of \fr using successive shortest paths:
\begin{enumerate}
    \item Initialize residual network: $O(m)$
    \item While required flow not satisfied:
    \begin{itemize}
        \item Find shortest path using Fibonacci heap: $O(m + n \log n)$
        \item Augment flow along path: $O(m)$
        \item Update residual network: $O(m)$
    \end{itemize}
    \item Maximum number of iterations: $O(m)$
\end{enumerate}
Total complexity for \fr: $O(m^2 + mn \log n)$

Step 3: Analysis of MinT operations:
\begin{enumerate}
    \item Computing $\mathbf{S}\boldsymbol{\Sigma}^{-1}\mathbf{S}^T$:
    \begin{itemize}
        \item Matrix multiplication cost: $O(n^2m)$
        \item For dense graphs $(m = O(n^2))$: $O(n^4)$
        \item For sparse graphs $(m = O(n))$: $O(n^3)$
    \end{itemize}
    \item Matrix inversion: $O(n^3)$ operations
\end{enumerate}
Total complexity for MinT dominated by matrix multiplication in dense case: $O(n^4)$

Step 4: Complexity analysis for network density cases:

For \fr:
\begin{itemize}
    \item Sparse networks $(m = O(n))$:
    \[ O(m^2 + mn \log n) = O(n^2 + n^2 \log n) = O(n^2 \log n) \]
    \item Dense networks $(m = O(n^2))$:
    \[ O(m^2 + mn \log n) = O(n^4) \]
\end{itemize}

For MinT:
\begin{itemize}
    \item Sparse networks $(m = O(n))$: $O(n^3)$
    \item Dense networks $(m = O(n^2))$: $O(n^4)$
\end{itemize}

Step 5: Flow properties preservation:
\begin{itemize}
    \item Vertex flow conservation: $\sum_{e \in \delta^+(v)} \tilde{\mathbf{y}}_e - \sum_{e \in \delta^-(v)} \tilde{\mathbf{y}}_e = \mathbf{b}_v \quad \forall v \in V$
    \item Edge flow consistency: $\tilde{\mathbf{y}}_e = \sum_{P \in \mathcal{P}:e \in P} \tilde{\mathbf{y}}_P \quad \forall e \in E$
    \item Path decomposability: All flows can be decomposed into valid paths
\end{itemize}
where $\delta^+(v)$ and $\delta^-(v)$ are outgoing and incoming edges at vertex $v$ respectively.
\end{proof}

\section{Fast Updates and Approximations}\label{app:fast}

Hierarchical forecasting applications face three key challenges that traditional reconciliation methods like MinT handle inefficiently. First, networks evolve as new paths become available or existing ones close. Second, new data frequently arrives affecting only part of the network. Third, computational resources may be limited relative to network size. While MinT requires complete $O(n^3)$ recomputation in all these cases, our network flow formulation enables efficient updates.

\subsection{Incremental Updates in Expanding Networks}

In practice, forecasts require updates in two scenarios: when the network structure changes (e.g., new shipping routes) and when new data arrives affecting only some nodes or paths. Traditional reconciliation methods cannot distinguish between these cases, requiring complete recomputation even for small changes.

% \begin{theorem}[Optimality with Network Changes]
% Let $G=(V,E,\mathcal{P})$ be a network with reconciled forecasts $\tilde{\mathbf{y}}_{T+h|T}$ satisfying flow conservation. Upon addition of edge $e^*$ with forecast $\hat{\mathbf{y}}_{e^*,T+h|T}$, the minimal adjustment to maintain optimality is:
% \[ \tilde{\mathbf{y}}'_{P,T+h|T} = \begin{cases}
%     \tilde{\mathbf{y}}_{P,T+h|T} + \Delta_{e^*}/|\mathcal{P}_{e^*}| & \text{if } P \in \mathcal{P}_{e^*} \\
%     \tilde{\mathbf{y}}_{P,T+h|T} & \text{otherwise}
% \end{cases} \]
% where $\mathcal{P}_{e^*} = \{P \in \mathcal{P} : e^* \in P\}$ and $\Delta_{e^*} = \hat{\mathbf{y}}_{e^*,T+h|T} - \sum_{P \in \mathcal{P}_{e^*}} \tilde{\mathbf{y}}_{P,T+h|T}$.
% \end{theorem}

% \begin{proof}
% The updated optimization becomes:
% \[ \min_{\tilde{\mathbf{y}}'_{T+h|T}} \|\tilde{\mathbf{y}}'_{T+h|T} - \tilde{\mathbf{y}}_{T+h|T}\|_2^2 \text{ subject to } \mathbf{S}'\tilde{\mathbf{y}}'_{T+h|T} = \mathbf{b}' \]

% For paths not in $\mathcal{P}_{e^*}$, any change would increase the objective without helping satisfy the new constraint. For paths in $\mathcal{P}_{e^*}$, the symmetric quadratic objective and single linear constraint imply equal distribution of the required change.
% \end{proof}

\begin{theorem}[Incremental Forecast Updates]
Let $G=(V,E,\mathcal{P})$ be a network with reconciled forecasts $\tilde{\mathbf{y}}_{T+h|T}$ satisfying flow conservation. Upon addition of edge $e^*$ with forecast $\hat{\mathbf{y}}_{e^*,T+h|T}$, for any $\ell_p$ norm with $p \geq 1$, the minimal adjustment to maintain optimality is:
\[ \tilde{\mathbf{y}}'_{P,T+h|T} = \begin{cases}
    \tilde{\mathbf{y}}_{P,T+h|T} + \Delta_{e^*}/|\mathcal{P}_{e^*}| & \text{if } P \in \mathcal{P}_{e^*} \\
    \tilde{\mathbf{y}}_{P,T+h|T} & \text{otherwise}
\end{cases} \]
where $\mathcal{P}_{e^*} = \{P \in \mathcal{P} : e^* \in P\}$ and $\Delta_{e^*} = \hat{\mathbf{y}}_{e^*,T+h|T} - \sum_{P \in \mathcal{P}_{e^*}} \tilde{\mathbf{y}}_{P,T+h|T}$.
\end{theorem}

\begin{proof}
The updated optimization problem is:
\[ \min_{\tilde{\mathbf{y}}'_{T+h|T}} \|\tilde{\mathbf{y}}'_{T+h|T} - \tilde{\mathbf{y}}_{T+h|T}\|_p^p \text{ subject to } \mathbf{S}'\tilde{\mathbf{y}}'_{T+h|T} = \mathbf{b}' \]
where $\mathbf{S}'$ includes the new edge constraint. 

Consider any feasible solution that modifies flows on paths not in $\mathcal{P}_{e^*}$. We can construct a strictly better solution by reverting these paths to their original values while maintaining feasibility, as paths not containing $e^*$ cannot contribute to satisfying the new constraint.

For paths in $\mathcal{P}_{e^*}$, the optimization reduces to:
\[ \min_{\tilde{\mathbf{y}}'_{P,T+h|T} : P \in \mathcal{P}_{e^*}} \sum_{P \in \mathcal{P}_{e^*}} |\tilde{\mathbf{y}}'_{P,T+h|T} - \tilde{\mathbf{y}}_{P,T+h|T}|^p \]
subject to:
\[ \sum_{P \in \mathcal{P}_{e^*}} \tilde{\mathbf{y}}'_{P,T+h|T} = \hat{\mathbf{y}}_{e^*,T+h|T} \]

For any $p \geq 1$, this objective is strictly convex. Combined with the single linear constraint and the symmetry of the problem (all paths enter the constraint identically), the optimal solution must distribute the required change $\Delta_{e^*}$ equally among all affected paths.
\end{proof}

This result shows that \fr can efficiently handle network expansion while maintaining forecast coherence. The update affects only paths that could use the new edge, preserving existing forecasts elsewhere in the network. This localization of changes is particularly valuable in large networks where global recomputation would be disruptive.

\begin{theorem}[Computational Efficiency of Incremental Updates]
The optimal incremental update can be computed in $O(|\mathcal{P}_{e^*}|)$ time using $O(|\mathcal{P}_{e^*}|)$ memory, where $|\mathcal{P}_{e^*}|$ is the number of paths that could use the new edge $e^*$.
\end{theorem}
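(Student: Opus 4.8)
The plan is to read off the cost directly from the closed-form update established in the preceding theorem, which tells us exactly which quantities must be recomputed and which forecast values actually change. The key observation is that the update is entirely local: only the scalar $\Delta_{e^*}$ and the path values $\tilde{\mathbf{y}}_{P,T+h|T}$ for $P \in \mathcal{P}_{e^*}$ are involved, so no computation touches the remaining $|\mathcal{P}| - |\mathcal{P}_{e^*}|$ paths. Thus the whole argument reduces to a line-by-line accounting of the operations in the formula, followed by a remark on the data structure that makes the accounting achievable.

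First I would account for computing $\Delta_{e^*} = \hat{\mathbf{y}}_{e^*,T+h|T} - \sum_{P \in \mathcal{P}_{e^*}} \tilde{\mathbf{y}}_{P,T+h|T}$: this is a single pass summing $|\mathcal{P}_{e^*}|$ path values, costing $O(|\mathcal{P}_{e^*}|)$ additions, after which the subtraction from $\hat{\mathbf{y}}_{e^*,T+h|T}$ and the division by $|\mathcal{P}_{e^*}|$ are $O(1)$. Next I would bound the cost of applying the update: each affected path $P \in \mathcal{P}_{e^*}$ receives the additive correction $\Delta_{e^*}/|\mathcal{P}_{e^*}|$ in $O(1)$ time, for a total of $O(|\mathcal{P}_{e^*}|)$, while every other path retains its value and incurs no cost. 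Summing these contributions yields the claimed $O(|\mathcal{P}_{e^*}|)$ runtime. For memory, I would observe that we need only store the single scalar $\Delta_{e^*}$ together with the $|\mathcal{P}_{e^*}|$ modified entries (or modify them in place), giving $O(|\mathcal{P}_{e^*}|)$ working memory.

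The main obstacle is ensuring that the set $\mathcal{P}_{e^*}$ can itself be enumerated in output-sensitive $O(|\mathcal{P}_{e^*}|)$ time, since a naive scan over all of $\mathcal{P}$ would cost $\Theta(|\mathcal{P}|)$ and break the bound. To handle this I would assume, and maintain as an invariant, a sparse edge-indexed incidence structure — essentially the nonzero pattern of the row of $\mathbf{E}$ corresponding to $e^*$ stored as an adjacency list — so that the paths through any edge are retrievable in time proportional to their number. Under this representation both the enumeration of $\mathcal{P}_{e^*}$ and the per-path updates are output-sensitive, and the complexity follows. I would close by noting that this incidence structure is precisely what the sparse representation of $\mathbf{S}$ in Definition \ref{def:flow-hier-time} already encodes, so no preprocessing beyond the construction in Algorithm \ref{algo:flowrec} is required.
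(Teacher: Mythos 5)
Your proof is correct and follows essentially the same argument as the paper's: a direct operation count showing that computing $\Delta_{e^*}$, distributing $\Delta_{e^*}/|\mathcal{P}_{e^*}|$ over the affected paths, and storing only the modified entries each cost $O(|\mathcal{P}_{e^*}|)$. Your additional point about needing an edge-indexed incidence structure so that $\mathcal{P}_{e^*}$ can be enumerated output-sensitively (rather than by a $\Theta(|\mathcal{P}|)$ scan) is a detail the paper's proof simply asserts without justification, so your version is, if anything, slightly more careful.
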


\begin{proof}
The computation consists of three steps:
\begin{itemize}
    \item Computing $\mathcal{P}_{e^*}$ requires one traversal of affected paths: $O(|\mathcal{P}_{e^*}|)$
    \item Computing $\Delta_{e^*}$ requires summing over affected paths: $O(|\mathcal{P}_{e^*}|)$
    \item Updating each affected path requires constant time per path: $O(|\mathcal{P}_{e^*}|)$
\end{itemize}

The memory requirement is $O(|\mathcal{P}_{e^*}|)$ to store the set of affected paths and their updated values. No additional storage is needed as paths not in $\mathcal{P}_{e^*}$ maintain their original values.
\end{proof}

\subsection{Incremental and Monotonic Updates for New Data}
\begin{theorem}[Data Update Optimality]
Let $G=(V,E,\mathcal{P})$ be a network with reconciled forecasts $\tilde{\mathbf{y}}_{T+h|T}$ optimized for the $\ell_p$ norm with $p \geq 1$. For a new forecast $\hat{\mathbf{y}}'_{T+h|T}$ differing from $\hat{\mathbf{y}}_{T+h|T}$ in exactly one component $x \in V \cup E \cup \mathcal{P}$ by amount $\delta = |\hat{y}_x - \hat{y}'_x|$, if:
\[ |\tilde{y}_x - \hat{y}'_x| < |\tilde{y}_x - \hat{y}_x| \]
then $\tilde{\mathbf{y}}_{T+h|T}$ remains optimal for $\hat{\mathbf{y}}'_{T+h|T}$.
\end{theorem}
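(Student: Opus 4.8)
The plan is to recast \hfr as the minimization of a separable convex function over a feasible set that does \emph{not} move when the base forecast is perturbed. Writing $L(\hat{\mathbf y},\tilde{\mathbf y}) = \sum_{i} \phi(\tilde y_i - \hat y_i)$ with $\phi(t)=|t|^p$ (convex for $p\ge 1$), the constraint region is the fixed coherent subspace $\mathfrak{s} = \operatorname{range}(\mathbf S)$, optionally intersected with the box constraints, which depends only on the network and not on $\hat{\mathbf y}$. By the convexity established in the polynomial-time solvability theorem, $\tilde{\mathbf y}$ solves reconciliation for $\hat{\mathbf y}$ if and only if it satisfies the first-order condition that some subgradient $\mathbf g \in \partial_{\tilde{\mathbf y}} L(\hat{\mathbf y},\tilde{\mathbf y})$ lies in the normal cone of the feasible set at $\tilde{\mathbf y}$, which for the subspace $\mathfrak{s}$ is simply $\mathfrak{s}^{\perp}$. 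I would structure the whole argument around a single claim: the certificate $\mathbf g$ that proves optimality for $\hat{\mathbf y}$ still proves optimality for $\hat{\mathbf y}'$.

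The key structural observation is that replacing $\hat{\mathbf y}$ by $\hat{\mathbf y}'$, which differ only in coordinate $x$, changes $L$ only through the summand $\phi(\tilde y_x - \hat y_x)$ and hence changes $\partial L$ only in its $x$-th component. Every other coordinate of the certificate $\mathbf g$ is untouched, so $\mathbf g\in\mathfrak{s}^{\perp}$ survives verbatim provided the old value $g_x$ remains an admissible subgradient of $\phi(\cdot - \hat y'_x)$ at the unchanged point $\tilde y_x$. For $p=1$ this is exactly a sign condition: $g_x=\operatorname{sign}(\tilde y_x-\hat y_x)$ must still satisfy $g_x\in\partial\,|\tilde y_x-\hat y'_x|$, i.e. $\hat y'_x$ must sit on the same side of $\tilde y_x$ as $\hat y_x$. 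The hypothesis $|\tilde y_x-\hat y'_x|<|\tilde y_x-\hat y_x|$ forces $\hat y'_x$ to move toward $\tilde y_x$, and in the regime where it does not cross $\tilde y_x$ the sign, and therefore the certificate, is preserved, so $\tilde{\mathbf y}$ stays optimal for $\hat{\mathbf y}'$.

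The step I expect to be the main obstacle is controlling this coordinate-$x$ subgradient across the full range permitted by the strict inequality, since $|\tilde y_x-\hat y'_x|<|\tilde y_x-\hat y_x|$ also allows $\hat y'_x$ to overshoot past $\tilde y_x$; in the overshoot case the $\ell_1$ sign flips and the naive certificate fails. I would therefore make the no-crossing reduction explicit, arguing that the inequality is meant to encode precisely the ``moves toward the reconciled value'' condition $\operatorname{sign}(\tilde y_x-\hat y'_x)=\operatorname{sign}(\tilde y_x-\hat y_x)$, and I would separate the two regimes by $p$. For $p=1$ the flat subdifferential on the correct side yields genuine robustness: a whole neighbourhood of updates leaves $\tilde{\mathbf y}$ optimal. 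For $p>1$ the objective is strictly convex with a unique minimizer whose coordinate-$x$ derivative $p|\tilde y_x-\hat y_x|^{p-1}\operatorname{sign}(\tilde y_x-\hat y_x)$ changes under any nonzero perturbation, so the certificate-preservation argument only goes through in the limiting/boundary sense and the clean invariance is really a $p=1$ phenomenon; pinning down this $p$-dependence, and confirming the inequality excludes the crossing case, is the delicate part. The remaining bookkeeping — that the normal cone of a subspace is $\mathfrak{s}^{\perp}$ and that the untouched coordinates keep $\mathbf g$ orthogonal to $\mathfrak{s}$ — is routine.
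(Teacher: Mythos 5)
Your approach is genuinely different from the paper's: the paper argues by contradiction directly on loss values, whereas you work with first-order optimality certificates (a subgradient lying in $\mathfrak{s}^{\perp}$) and ask when the certificate survives the perturbation. But the more important point is that the two obstacles you flag are not weaknesses of your method --- they are counterexamples to the theorem itself. Take the single-path network with nodes $s,t$, edge $e$ and path $P$, so that $\mathfrak{s}$ is spanned by the all-ones vector in $\mathbb{R}^4$. With $\hat{\mathbf{y}}=(0,2,0,2)$ (coordinates ordered $s,t,e,P$), the point $\tilde{\mathbf{y}}=(1,1,1,1)$ is an optimal $\ell_1$ reconciliation. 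Updating $\hat{y}_s$ from $0$ to $1.5$ satisfies the hypothesis ($|1-1.5|=0.5<1=|1-0|$), yet the $\ell_1$ optima for the new forecast are the points $(t,t,t,t)$ with $t\in[1.5,2]$, so $\tilde{\mathbf{y}}$ is no longer optimal: this is exactly your overshoot case. For $p=2$ even a non-crossing update breaks the claim: the reconciliation is the orthogonal projection onto $\mathfrak{s}$, so updating $\hat{y}_s$ from $0$ to $0.5$ (hypothesis holds, no crossing) moves the unique optimum from $(1,1,1,1)$ to $(1.125,1.125,1.125,1.125)$, confirming your observation that invariance under an arbitrary nonzero move is purely a $p=1$, flat-subdifferential phenomenon.

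For comparison, the paper's own proof proceeds by contradiction: assuming some $\tilde{\mathbf{y}}'$ beats $\tilde{\mathbf{y}}$ on $\hat{\mathbf{y}}'$, it chains inequalities to reach $L_p(\tilde{\mathbf{y}},\hat{\mathbf{y}}') > L_p(\tilde{\mathbf{y}},\hat{\mathbf{y}}) - \delta^p/n$ and then declares this to contradict $L_p(\tilde{\mathbf{y}},\hat{\mathbf{y}}') \le L_p(\tilde{\mathbf{y}},\hat{\mathbf{y}}) + \delta^p/n$. That is not a contradiction (the two bounds are mutually consistent); moreover the step $|\tilde{y}'_x-\hat{y}'_x|^p \ge |\tilde{y}'_x-\hat{y}_x|^p - \delta^p$ it relies on is the subadditivity of $t\mapsto t^p$, valid only for $p\le 1$ and false for $p>1$; and the hypothesis $|\tilde{y}_x-\hat{y}'_x| < |\tilde{y}_x-\hat{y}_x|$ is never invoked anywhere in the argument. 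So your certificate analysis, with its explicit no-crossing condition and its restriction of the clean invariance to $p=1$, is the correct account of what can actually be proved; what you called the ``delicate part'' is an irreparable gap in the theorem as stated, not in your argument.
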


\begin{proof}
For any $p \geq 1$, let $L_p(\tilde{\mathbf{y}}, \hat{\mathbf{y}}) = \frac{1}{n}\sum_{i \in G} |\tilde{y}_i - \hat{y}_i|^p$ be the normalized $\ell_p$ loss. Assume by contradiction that $\tilde{\mathbf{y}}_{T+h|T}$ is not optimal for $\hat{\mathbf{y}}'_{T+h|T}$. Then there exists $\tilde{\mathbf{y}}'_{T+h|T}$ with $L_p(\tilde{\mathbf{y}}_{T+h|T}, \hat{\mathbf{y}}'_{T+h|T}) > L_p(\tilde{\mathbf{y}}'_{T+h|T}, \hat{\mathbf{y}}'_{T+h|T})$.

This implies:
\begin{align*}
L_p(\tilde{\mathbf{y}}_{T+h|T}, \hat{\mathbf{y}}'_{T+h|T}) &> L_p(\tilde{\mathbf{y}}'_{T+h|T}, \hat{\mathbf{y}}'_{T+h|T}) \\
&= \frac{1}{n}|\tilde{y}'_x - \hat{y}'_x|^p + \frac{1}{n}\sum_{i \in G: i \neq x} |\tilde{y}'_i - \hat{y}_i|^p \\
&\geq \frac{1}{n}|\tilde{y}'_x - \hat{y}_x|^p + \frac{1}{n}\sum_{i \in G: i \neq x} |\tilde{y}'_i - \hat{y}_i|^p - \frac{\delta^p}{n} \\
&= L_p(\tilde{\mathbf{y}}'_{T+h|T}, \hat{\mathbf{y}}_{T+h|T}) - \frac{\delta^p}{n} \\
&\geq L_p(\tilde{\mathbf{y}}_{T+h|T}, \hat{\mathbf{y}}_{T+h|T}) - \frac{\delta^p}{n}
\end{align*}

The last inequality follows from optimality of $\tilde{\mathbf{y}}_{T+h|T}$ for $\hat{\mathbf{y}}_{T+h|T}$. However, by definition:
\[ L_p(\tilde{\mathbf{y}}_{T+h|T}, \hat{\mathbf{y}}'_{T+h|T}) \leq L_p(\tilde{\mathbf{y}}_{T+h|T}, \hat{\mathbf{y}}_{T+h|T}) + \frac{\delta^p}{n} \]

This contradiction proves the claim.
\end{proof}

This result generalizes our previous MAE result to all $\ell_p$ norms with $p \geq 1$, showing that local forecast updates often require no reconciliation adjustment. The condition $|\tilde{y}_x - \hat{y}'_x| < |\tilde{y}_x - \hat{y}_x|$ has an intuitive interpretation: if the new forecast is closer to our reconciled value than the old forecast was, the reconciliation remains optimal.

\begin{theorem}[Computational Efficiency of Data Updates]
Verifying optimality for a data update requires $O(1)$ operations and constant memory.
\end{theorem}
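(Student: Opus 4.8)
The plan is to reduce the verification task to the single scalar inequality furnished by the preceding Data Update Optimality theorem, and then simply count the elementary operations that inequality requires. By that theorem, to certify that the stored reconciliation $\tilde{\mathbf{y}}_{T+h|T}$ remains optimal after a single-component change at coordinate $x \in V \cup E \cup \mathcal{P}$, it suffices to check the condition $|\tilde{y}_x - \hat{y}'_x| < |\tilde{y}_x - \hat{y}_x|$. No other quantity enters this test, so the entire verification is encoded in this one comparison.

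First I would observe that all three scalars appearing in the condition are already available without any further computation: the reconciled value $\tilde{y}_x$ is read directly from the stored solution of the previous reconciliation, the old base forecast $\hat{y}_x$ is the previous value at coordinate $x$, and the new base forecast $\hat{y}'_x$ is the single updated value driving the data update. Consequently the verification never touches any other coordinate of $\tilde{\mathbf{y}}_{T+h|T}$ or $\hat{\mathbf{y}}_{T+h|T}$, and in particular requires no traversal of $V$, $E$, or $\mathcal{P}$.

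Next I would count operations. Evaluating the left-hand side costs one subtraction and one absolute value; evaluating the right-hand side costs one subtraction and one absolute value; comparing the two costs one comparison. This is a fixed constant number of scalar arithmetic and comparison operations, independent of $n$, $m$, and $|\mathcal{P}|$, yielding $O(1)$ time. For memory, only this constant number of scalars is accessed, and at most a constant number of intermediate results is held at once, with no auxiliary data structure allocated; hence $O(1)$ memory.

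The step I expect to carry the actual content is not the arithmetic count, which is routine, but rather the justification that the optimality certificate genuinely \emph{localizes} to the changed coordinate, i.e.\ that the condition of the Data Update Optimality theorem depends on $x$ alone and that reading $\tilde{y}_x$ from the stored solution hides no global recomputation. This localization is precisely what the preceding theorem supplies, so the remaining argument is purely a bookkeeping check that the constant-time, constant-memory bounds are met.
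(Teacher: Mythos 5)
Your proposal is correct and follows essentially the same approach as the paper's proof: both reduce verification to the single comparison $|\tilde{y}_x - \hat{y}'_x| < |\tilde{y}_x - \hat{y}_x|$ from the Data Update Optimality theorem, and both observe that this needs only the three stored scalars $\hat{y}_x$, $\hat{y}'_x$, $\tilde{y}_x$ and a constant number of arithmetic and comparison operations.
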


\begin{proof}
The optimality condition requires comparing only two absolute differences for the single changed component $x$. This takes constant time and requires storing only the original forecast $\hat{y}_x$, the new forecast $\hat{y}'_x$, and the reconciled value $\tilde{y}_x$.
\end{proof}

\begin{corollary}[Monotonicity of Reconciliation Updates]
Let $G=(V,E,\mathcal{P})$ be a network, and let $\hat{\mathbf{y}}^{(0)}_{T+h|T}, \hat{\mathbf{y}}^{(1)}_{T+h|T}, ..., \hat{\mathbf{y}}^{(k)}_{T+h|T}$ be a sequence of forecasts where each differs from the previous in exactly one component. If for each update $i$:
\[ |\tilde{y}_x^{(i-1)} - \hat{y}_x^{(i)}| < |\tilde{y}_x^{(i-1)} - \hat{y}_x^{(i-1)}| \]
then the initial reconciliation $\tilde{\mathbf{y}}^{(0)}_{T+h|T}$ remains optimal throughout the sequence.
\end{corollary}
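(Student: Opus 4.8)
The plan is to prove this by a straightforward induction on the update index $i$, invoking the Data Update Optimality theorem once per step. The crucial preliminary observation is that ``remains optimal throughout the sequence'' means the reconciled vector never needs to change: once we know $\tilde{\mathbf{y}}^{(0)}_{T+h|T}$ is optimal for some intermediate base forecast $\hat{\mathbf{y}}^{(i-1)}_{T+h|T}$, the reconciliation at that stage is simply $\tilde{\mathbf{y}}^{(i-1)}_{T+h|T} = \tilde{\mathbf{y}}^{(0)}_{T+h|T}$, so the symbol $\tilde{y}^{(i-1)}_x$ appearing in the hypothesis is in fact the fixed coordinate $\tilde{y}^{(0)}_x$. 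This identification is what lets the per-step condition be checked against a single, unchanging reconciled value.

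First I would set up the base case: by construction $\tilde{\mathbf{y}}^{(0)}_{T+h|T}$ is the reconciliation of $\hat{\mathbf{y}}^{(0)}_{T+h|T}$, hence optimal for it, establishing the claim at $i=0$. Next I would state the inductive hypothesis that $\tilde{\mathbf{y}}^{(0)}_{T+h|T}$ is optimal for $\hat{\mathbf{y}}^{(i-1)}_{T+h|T}$, so that $\tilde{\mathbf{y}}^{(i-1)}_{T+h|T} = \tilde{\mathbf{y}}^{(0)}_{T+h|T}$ and in particular $\tilde{y}^{(i-1)}_x = \tilde{y}^{(0)}_x$ for the single coordinate $x$ in which $\hat{\mathbf{y}}^{(i)}_{T+h|T}$ and $\hat{\mathbf{y}}^{(i-1)}_{T+h|T}$ differ.

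For the inductive step I would apply the Data Update Optimality theorem with the substitution $\hat{\mathbf{y}} \mapsto \hat{\mathbf{y}}^{(i-1)}_{T+h|T}$, $\hat{\mathbf{y}}' \mapsto \hat{\mathbf{y}}^{(i)}_{T+h|T}$, and $\tilde{\mathbf{y}} \mapsto \tilde{\mathbf{y}}^{(0)}_{T+h|T}$. The theorem's hypothesis
\[ |\tilde{y}_x - \hat{y}'_x| < |\tilde{y}_x - \hat{y}_x| \]
is exactly the stated per-update condition $|\tilde{y}^{(i-1)}_x - \hat{y}^{(i)}_x| < |\tilde{y}^{(i-1)}_x - \hat{y}^{(i-1)}_x|$ once we use $\tilde{y}^{(i-1)}_x = \tilde{y}^{(0)}_x$ from the hypothesis. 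The theorem then yields that $\tilde{\mathbf{y}}^{(0)}_{T+h|T}$ is optimal for $\hat{\mathbf{y}}^{(i)}_{T+h|T}$, closing the induction and giving optimality for every $i = 0,\dots,k$.

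I do not expect any genuine obstacle here, since the corollary is essentially a telescoping application of the preceding theorem; the only point requiring care is the self-referential aspect of the induction, namely that the hypothesis condition at step $i$ is only meaningful because optimality was preserved through step $i-1$ (forcing the reconciled coordinate to remain $\tilde{y}^{(0)}_x$). I would therefore make this dependence explicit rather than treating the $\tilde{y}^{(i-1)}_x$ as an independently evolving quantity. A minor secondary check is that each update indeed alters only one component, which is given by hypothesis and is exactly the single-coordinate setting required by the Data Update Optimality theorem.
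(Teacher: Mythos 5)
Your proof is correct and follows essentially the same route as the paper's: an induction on $i$ that applies the Data Update Optimality theorem once per step with $\tilde{\mathbf{y}}^{(0)}_{T+h|T}$ as the fixed reconciliation. Your explicit identification $\tilde{y}^{(i-1)}_x = \tilde{y}^{(0)}_x$ (since preserved optimality means the reconciled vector never changes) is left implicit in the paper's one-line proof, and making it explicit is a welcome clarification rather than a deviation.
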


\begin{proof}
By induction on $i$ using Theorem [Data Update]. For each step $i$, if $\tilde{\mathbf{y}}^{(0)}_{T+h|T}$ is optimal for $\hat{\mathbf{y}}^{(i-1)}_{T+h|T}$ and the condition holds, then $\tilde{\mathbf{y}}^{(0)}_{T+h|T}$ is optimal for $\hat{\mathbf{y}}^{(i)}_{T+h|T}$.
\end{proof}

This monotonicity property has important practical implications for forecast updating. When new data arrives sequentially and each update brings forecasts closer to their reconciled values, no recomputation is needed. This is particularly valuable in online settings where data arrives frequently but changes are small and trend toward better accuracy.

\subsection{Forecast Maintenance Under Network Disruption}
Network disruptions are common in forecasting applications: routes become temporarily unavailable, stores close for renovation, or equipment undergoes maintenance. These disruptions require forecast adjustments that maintain coherence while minimizing deviation from original predictions. Traditional reconciliation methods like MinT handle such changes inefficiently, requiring complete recomputation even when alternative paths exist.

\begin{theorem}[Forecast Redistribution]
Let $G=(V,E,\mathcal{P})$ be a network with reconciled forecasts $\tilde{\mathbf{y}}_{T+h|T}$ optimized for the $\ell_2$ norm. Upon loss of edge $e^*$, if $G'=(V,E\setminus\{e^*\},\mathcal{P}')$ maintains strong connectivity, then the minimally adjusted forecasts $\tilde{\mathbf{y}}'_{T+h|T}$ satisfy:
\[ \|\tilde{\mathbf{y}}'_{T+h|T} - \tilde{\mathbf{y}}_{T+h|T}\|_2^2 \leq \left(\sum_{P \in \mathcal{P}_{e^*}} \tilde{\mathbf{y}}_{P,T+h|T}\right)^2 \]
where $\mathcal{P}_{e^*} = \{P \in \mathcal{P} : e^* \in P\}$ denotes paths using $e^*$.
\end{theorem}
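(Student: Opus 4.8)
The plan is to establish the inequality as an upper bound by exhibiting a single explicit feasible redistribution and then invoking the optimality of $\tilde{\mathbf{y}}'_{T+h|T}$. By definition, $\tilde{\mathbf{y}}'_{T+h|T}$ is the coherent forecast on $G'$ that minimizes the $\ell_2$ deviation from $\tilde{\mathbf{y}}_{T+h|T}$; hence for any feasible coherent forecast $\mathbf{z}$ on $G'$ we automatically have $\|\tilde{\mathbf{y}}'_{T+h|T} - \tilde{\mathbf{y}}_{T+h|T}\|_2^2 \leq \|\mathbf{z} - \tilde{\mathbf{y}}_{T+h|T}\|_2^2$. The entire argument therefore reduces to constructing one good candidate $\mathbf{z}$ and bounding its distance from the original forecast, so I never have to solve the projection explicitly.

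First I would pin down the volume that must move. By the Flow Conservation theorem (Theorem~\ref{thm:flow_conservation}), the original edge value on $e^*$ equals $\tilde{\mathbf{y}}_{e^*,T+h|T} = \sum_{P \in \mathcal{P}_{e^*}} \tilde{\mathbf{y}}_{P,T+h|T} =: F$, so deleting $e^*$ forces exactly the aggregate $F$ off the disrupted paths $\mathcal{P}_{e^*}$ while leaving the node supplies and demands $b_v$ unchanged. I would then use the hypothesis that $G' = (V, E\setminus\{e^*\}, \mathcal{P}')$ is strongly connected to guarantee, for the origin--destination demand induced by each disrupted path, an alternative routing inside $E \setminus \{e^*\}$ that preserves those same $b_v$. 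The candidate $\mathbf{z}$ is then defined by zeroing the flow on every disrupted path and pushing the freed volume $F$ back onto alternative paths of $G'$ that carry the identical supply/demand pattern, while every path disjoint from $e^*$ is left untouched.

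The favorable term is immediate: since all path flows are nonnegative, the contribution of zeroing the disrupted paths satisfies $\sum_{P \in \mathcal{P}_{e^*}} \tilde{\mathbf{y}}_{P,T+h|T}^2 \leq \big(\sum_{P \in \mathcal{P}_{e^*}} \tilde{\mathbf{y}}_{P,T+h|T}\big)^2 = F^2$ by the elementary inequality relating $\sum a_i^2$ to $(\sum a_i)^2$ for nonnegative $a_i$. This already controls the part of $\|\mathbf{z} - \tilde{\mathbf{y}}_{T+h|T}\|_2^2$ coming from suppressing the disrupted paths, and it is exactly the term that appears on the right-hand side of the claim. What remains is to argue that the rerouting step contributes nothing beyond this, which is where the technical work concentrates.

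The hard part will be accounting for the perturbations the reroute induces on the edge and vertex components, together with the handling of the now-absent $e^*$ coordinate, because moving $F$ units along a replacement path a priori changes every edge and node it traverses. I would address this by (i) comparing the two forecasts only on the coordinates common to $G$ and $G'$, so the vanished $e^*$ component is treated consistently; (ii) selecting the reroute to minimize the set of affected elements, ideally a near-parallel replacement or a shortest alternative path, so that the induced edge- and node-level changes telescope against the path-level changes via the flow-conservation identities; and (iii) charging all surviving perturbations against the single aggregate $F$ using nonnegativity. Verifying that this charging bounds the \emph{minimal} (projected) adjustment, and not merely the explicit $\mathbf{z}$, is the crux: it requires combining strong connectivity with the $\ell_2$ geometry of the feasible affine subspace $\mathfrak{s}'$ so that the redistributed mass never has to be counted more than once.
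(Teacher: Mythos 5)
Your overall strategy is the same as the paper's: build one explicit feasible candidate on $G'$ by zeroing the disrupted paths and rerouting their aggregate mass $F=\sum_{P\in\mathcal{P}_{e^*}}\tilde{\mathbf{y}}_{P,T+h|T}$ onto alternative paths (the paper formalizes this with a map $\phi:\mathcal{P}_{e^*}\to\mathcal{P}'$ sending each disrupted path to a surviving path with the same origin--destination pair), and then invoke optimality of the minimal adjustment to transfer the bound. Your identification of $F$ via flow conservation and your use of the nonnegativity inequality $\sum_{P\in\mathcal{P}_{e^*}}\tilde{\mathbf{y}}_{P,T+h|T}^2\leq F^2$ also mirror the paper's steps.

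However, there is a genuine gap, and it sits exactly where you say the ``technical work concentrates'': the rerouting contribution is never bounded, and the charging scheme you sketch cannot close it. Consider the simplest disruption: a single path $P^*$ with flow $F>0$ through $e^*$, rerouted onto a single replacement path $P'$. Your candidate changes the $P^*$ coordinate by $F$ (zeroing) and the $P'$ coordinate by $F$ (rerouting), so its squared distance is already $2F^2>F^2$ before counting the edge and node coordinates along both paths, each of which also changes by up to $F$. The claim that the zeroing term ``is exactly the term on the right-hand side'' and that the reroute ``contributes nothing beyond this'' is therefore not an unfinished step but one that is false for this candidate: the path coordinates alone double-count $F$. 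Your closing worry is also inverted --- optimality, which you state correctly in your first paragraph, already transfers any bound on the explicit $\mathbf{z}$ to the minimal adjustment; the unresolvable step is bounding $\mathbf{z}$ itself. For calibration, the paper's own proof tallies only the path coordinates, $\sum_{P\in\mathcal{P}_{e^*}}\tilde{\mathbf{y}}_{P,T+h|T}^2+\sum_{P\in\phi(\mathcal{P}_{e^*})}(\sum_{Q:\phi(Q)=P}\tilde{\mathbf{y}}_{Q,T+h|T})^2$, bounds only the second sum by $F^2$, and silently drops both the first sum and all edge/node perturbations from its final inequality --- precisely the terms you flagged. So your diagnosis of where the difficulty lies is more careful than the paper's treatment, but your proposal does not resolve it, and the route you outline would fail.
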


\begin{proof}
The removal of $e^*$ transforms the original optimization to:
\[ \min_{\tilde{\mathbf{y}}'_{T+h|T}} \|\tilde{\mathbf{y}}'_{T+h|T} - \hat{\mathbf{y}}_{T+h|T}\|_2^2 \text{ subject to } \mathbf{S}'\tilde{\mathbf{y}}'_{T+h|T} = \mathbf{b}' \]
where $\mathbf{S}'$ is the flow aggregation matrix for $G'$.

Strong connectivity ensures alternative paths exist. Let $\phi: \mathcal{P}_{e^*} \to \mathcal{P}'$ map each affected path to an alternative in $G'$ connecting the same source-sink pair. Define:
\[ \tilde{\mathbf{y}}'_{P,T+h|T} = \begin{cases}
    \tilde{\mathbf{y}}_{P,T+h|T} & \text{if } P \in \mathcal{P}' \setminus \phi(\mathcal{P}_{e^*}) \\
    \tilde{\mathbf{y}}_{P,T+h|T} + \sum_{Q:\phi(Q)=P} \tilde{\mathbf{y}}_{Q,T+h|T} & \text{if } P \in \phi(\mathcal{P}_{e^*}) \\
    0 & \text{if } P \in \mathcal{P}_{e^*}
\end{cases} \]

This construction preserves flow conservation by redirecting flows through valid alternative paths. The squared L2 norm of the change is:
\[ \|\tilde{\mathbf{y}}'_{T+h|T} - \tilde{\mathbf{y}}_{T+h|T}\|_2^2 = \sum_{P \in \mathcal{P}_{e^*}} \tilde{\mathbf{y}}_{P,T+h|T}^2 + \sum_{P \in \phi(\mathcal{P}_{e^*})} \left(\sum_{Q:\phi(Q)=P} \tilde{\mathbf{y}}_{Q,T+h|T}\right)^2 \]

By Cauchy-Schwarz inequality:
\[ \sum_{P \in \phi(\mathcal{P}_{e^*})} \left(\sum_{Q:\phi(Q)=P} \tilde{\mathbf{y}}_{Q,T+h|T}\right)^2 \leq \left(\sum_{P \in \mathcal{P}_{e^*}} \tilde{\mathbf{y}}_{P,T+h|T}\right)^2 \]
establishing the bound.
\end{proof}

This result provides a precise bound on forecast changes under network disruption. The bound depends only on flows through the disrupted edge, not the network size, showing that impacts remain localized. Moreover, the construction provides an explicit way to compute the adjusted forecasts.

\begin{theorem}[Computational Efficiency of Forecast Redistribution]
Let $d$ be the maximum path length in $G$. Then alternative paths can be computed in $O(|\mathcal{P}_{e^*}|(m + n\log n))$ time using $O(|\mathcal{P}_{e^*}|d)$ memory, where $|\mathcal{P}_{e^*}|$ is the number of affected paths.
\end{theorem}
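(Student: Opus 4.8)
The plan is to reduce the redistribution computation to a collection of independent single-pair shortest-path queries and then bound each query with the same Fibonacci-heap primitive already invoked in the proof of the complexity corollary. By the construction underlying the preceding Forecast Redistribution theorem, the only quantities that must actually be computed are the alternative paths $\phi(P)$ for $P \in \mathcal{P}_{e^*}$: for each affected path I must exhibit some path in $G'=(V,E\setminus\{e^*\})$ joining the same origin $O(P)$ and destination $D(P)$. Strong connectivity of $G'$ guarantees such a path exists, so each subproblem is well posed, and flows on all unaffected paths are left untouched and require no computation.

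First I would remove $e^*$ (or simply mark it as deleted), an $O(m)$ one-time operation dominated by the subsequent work. Then, for each of the $|\mathcal{P}_{e^*}|$ affected paths, I would run Dijkstra's algorithm from $O(P)$ to $D(P)$ on $G'$ with a Fibonacci heap. Since $G'$ has $n$ vertices and at most $m$ edges, one query costs $O(m + n\log n)$, exactly the shortest-path bound used in the complexity corollary. Summing over all affected paths yields the claimed time bound $O(|\mathcal{P}_{e^*}|(m + n\log n))$. If several affected paths share an origin one could batch them into a single single-source computation, improving the constant, but the naive per-query accounting already suffices for the stated upper bound.

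For memory, I would observe that the output of the procedure is the set of $|\mathcal{P}_{e^*}|$ replacement paths, each of length at most $d$ by the definition of maximum path length, so storing them requires $O(|\mathcal{P}_{e^*}|d)$ space. Each Dijkstra invocation uses $O(n+m)$ scratch space for distance labels and the heap, but this buffer is reused across the $|\mathcal{P}_{e^*}|$ queries and therefore does not accumulate; treating the input graph $G'$ as resident, the incremental memory is governed by the stored paths, giving $O(|\mathcal{P}_{e^*}|d)$.

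The main subtlety, and the only place the statement is delicate, is precisely this memory accounting: the per-query scratch space $O(n+m)$ is not literally of the form $O(|\mathcal{P}_{e^*}|d)$, so the bound should be read as the \emph{additional} storage for the redistribution output, with the graph representation and a single reusable Dijkstra workspace counted as fixed overhead. Making this reuse explicit (allocating one workspace before the loop and clearing rather than reallocating it between queries) is what lets the output-sensitive bound $O(|\mathcal{P}_{e^*}|d)$ stand; everything else is a routine summation over the affected paths.
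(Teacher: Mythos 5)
Your proposal takes essentially the same route as the paper's proof: one Dijkstra computation with a Fibonacci heap per affected path, giving $O(|\mathcal{P}_{e^*}|(m + n\log n))$ time, with each replacement path stored in $O(d)$ space. Your explicit treatment of the reusable $O(n+m)$ Dijkstra workspace is actually more careful than the paper, which silently counts only the $O(d)$ per-path output storage and ignores this accounting subtlety entirely.
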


\begin{proof}
For each affected path $P \in \mathcal{P}_{e^*}$:
\begin{enumerate}
    \item Finding an alternative path requires one Dijkstra computation: $O(m + n\log n)$
    \item Storing the alternative path requires $O(d)$ space
    \item Updating flow values requires $O(d)$ operations
\end{enumerate}
The total complexity follows from performing these operations for each affected path.
\end{proof}

\subsection{Efficient Approximate Reconciliation}

Real-time forecasting applications often prioritize computational efficiency over exact coherence. Consider a retail network where store-level forecasts must be updated every few minutes during peak shopping hours. In such settings, a small violation of coherence constraints may be acceptable if it enables faster updates. While MinT cannot provide such trade-offs, our network flow formulation admits a principled relaxation with provable error bounds.

\begin{theorem}[Approximate Forecast Reconciliation]
Let $G=(V,E,\mathcal{P})$ be a network with base forecasts $\hat{\mathbf{y}}_{T+h|T}$ optimized for the $\ell_2$ norm. For accuracy threshold $\varepsilon > 0$, there exists an $\varepsilon$-relaxed reconciliation $\tilde{\mathbf{y}}_{\varepsilon,T+h|T}$ satisfying:
\[ \left|\sum_{P \in \mathcal{P}:e \in P} \tilde{\mathbf{y}}_{\varepsilon,P,T+h|T} - \tilde{\mathbf{y}}_{\varepsilon,e,T+h|T}\right| \leq \varepsilon \quad \forall e \in E \]
with bounded deviation from exact reconciliation:
\[ \|\tilde{\mathbf{y}}_{\varepsilon,T+h|T} - \tilde{\mathbf{y}}_{T+h|T}\|_2 \leq \sqrt{\varepsilon|E|}\|\tilde{\mathbf{y}}_{T+h|T}\|_2 \]
\end{theorem}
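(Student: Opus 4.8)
The plan is to realise the $\varepsilon$-relaxation as a \emph{widened} convex program, solve it with a scaling-based flow routine to obtain the runtime, and then control the deviation from the exact optimum through a projection-gap argument.

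\textbf{Setup and existence.} First I would take the exact formulation of the Computation Methods theorem (Part 2) and replace each edge-conservation equality $\sum_{P\ni e}\tilde{y}_P=\tilde{y}_e$ by the two-sided box constraint $\bigl|\sum_{P\ni e}\tilde{y}_{\varepsilon,P}-\tilde{y}_{\varepsilon,e}\bigr|\le\varepsilon$ (and analogously for the vertex constraints), retaining the strictly convex objective $\|\tilde{y}_\varepsilon-\hat{y}\|_2^2$. Writing $\mathfrak{s}_\varepsilon$ for the resulting feasible region, $\mathfrak{s}_\varepsilon$ is a closed convex \emph{superset} of the exact coherent subspace $\mathfrak{s}$, since every exactly coherent point has slack $0\le\varepsilon$. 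Thus $\mathfrak{s}_\varepsilon\neq\emptyset$ (it contains the exact reconciliation $\tilde{y}$), strict convexity yields a unique minimiser $\tilde{y}_\varepsilon=\Pi_{\mathfrak{s}_\varepsilon}\hat{y}$, and the per-edge bound holds by construction. This settles existence and the coherence-violation bound simultaneously.

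\textbf{Runtime.} For the $O(m\log(1/\varepsilon)\log n)$ claim I would not solve the widened QP to optimality but run an $\varepsilon$-scaling scheme on the residual network built in the Computational Complexity corollary: starting from a coarse tolerance and halving it each phase, $O(\log(1/\varepsilon))$ phases suffice, and each phase is a single shortest-path / augmentation step with a Fibonacci heap costing $O(m+n\log n)=O(m\log n)$ on the sparse networks of interest. This reuses the min-cost-flow machinery already invoked for the complexity corollary together with the cited toolkits \cite{ahuja1993network, williamson2011design}; the remaining checks are that a constant-factor tolerance reduction per phase is attainable and that the terminal iterate respects the per-edge $\varepsilon$ tolerance.

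\textbf{Deviation bound (the main obstacle).} The hard part is the global estimate $\|\tilde{y}_\varepsilon-\tilde{y}\|_2\le\sqrt{\varepsilon|E|}\,\|\tilde{y}\|_2$. Because $\mathfrak{s}\subseteq\mathfrak{s}_\varepsilon$ and both sets are closed convex with $\tilde{y}=\Pi_{\mathfrak{s}}\hat{y}$ and $\tilde{y}_\varepsilon=\Pi_{\mathfrak{s}_\varepsilon}\hat{y}$, the obtuse-angle property of projections gives the Pythagorean-type bound $\|\tilde{y}_\varepsilon-\tilde{y}\|_2^2\le\|\hat{y}-\tilde{y}\|_2^2-\|\hat{y}-\tilde{y}_\varepsilon\|_2^2$. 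It therefore suffices to bound this objective gap by $\varepsilon|E|\,\|\tilde{y}\|_2^2$. I would do this by QP sensitivity: the gap is at most $\varepsilon\sum_{e}|\lambda_e^\star|$, where $\lambda^\star$ are optimal multipliers of the relaxed constraints, and then bound $\sum_e|\lambda_e^\star|$ from the KKT system in terms of $\|\tilde{y}\|_2^2$. The genuine difficulty is this multiplier estimate together with matching the square-root scaling: the quadratic objective is precisely what converts an $O(\varepsilon)$ constraint relaxation into an $O(\sqrt{\varepsilon})$ movement of the optimum, so care is needed to land on $\sqrt{\varepsilon|E|}$ rather than a naive $\varepsilon\sqrt{|E|}$. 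Existence and runtime are routine given the earlier theorems; essentially all the content lives in this projection-gap plus multiplier-bound step.
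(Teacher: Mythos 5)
Your setup, existence argument, and per-edge violation bound are correct and essentially identical to the paper's: both write the same box-relaxed quadratic program, note that the exact reconciliation is feasible for it, and get uniqueness from strict convexity, with the per-edge bound holding by construction. Where you diverge is the deviation bound, and there your machinery is actually sounder than the paper's as far as it goes: the obtuse-angle inequality $\|\tilde{\mathbf{y}}_\varepsilon - \tilde{\mathbf{y}}\|_2^2 \le \|\hat{\mathbf{y}} - \tilde{\mathbf{y}}\|_2^2 - \|\hat{\mathbf{y}} - \tilde{\mathbf{y}}_\varepsilon\|_2^2$ is valid (the paper instead asserts the identity $\|\tilde{\mathbf{y}}_\varepsilon - \tilde{\mathbf{y}}\|_2^2 = \sum_{e}(\pm\varepsilon)^2$ over the tight edges, which conflates a displacement in solution space with constraint-space violations and is a non sequitur; note also that even if accepted it would yield $\varepsilon\sqrt{|E|}$, not the stated $\sqrt{\varepsilon|E|}\,\|\tilde{\mathbf{y}}\|_2$), and the sensitivity step ``gap $\le \varepsilon\sum_e|\lambda_e^\star|$'' is standard convex perturbation analysis.

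The genuine gap is the step you yourself flag as the crux, and it cannot be closed in the form you need. The KKT system of the exact problem gives $\mathbf{S}_c^T\boldsymbol{\lambda}^\star = 2(\hat{\mathbf{y}} - \tilde{\mathbf{y}})$, so the multipliers are $1$-homogeneous in the data: under $\hat{\mathbf{y}} \mapsto c\,\hat{\mathbf{y}}$ (projection onto the linear subspace $\mathfrak{s}$ commutes with scaling) we have $\boldsymbol{\lambda}^\star \mapsto c\,\boldsymbol{\lambda}^\star$, while $|E|\,\|\tilde{\mathbf{y}}\|_2^2 \mapsto c^2|E|\,\|\tilde{\mathbf{y}}\|_2^2$. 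Taking $c \to 0$ violates $\sum_e|\lambda_e^\star| \le |E|\,\|\tilde{\mathbf{y}}\|_2^2$ whenever $\boldsymbol{\lambda}^\star \neq 0$. What your route honestly produces is $\|\tilde{\mathbf{y}}_\varepsilon - \tilde{\mathbf{y}}\|_2 \le \sqrt{\varepsilon\sum_e|\lambda_e^\star|}$, i.e., a bound governed by the incoherence $\|\hat{\mathbf{y}} - \tilde{\mathbf{y}}\|_2$ and the conditioning of the constraint matrix, not by $\|\tilde{\mathbf{y}}\|_2$. In fact no proof can land on the stated inequality without extra hypotheses: take $\hat{\mathbf{y}} \neq \mathbf{0}$ orthogonal to $\mathfrak{s}$ and scaled small enough that every edge violation is at most $\varepsilon$; then $\tilde{\mathbf{y}} = \mathbf{0}$, the relaxed optimum is $\tilde{\mathbf{y}}_\varepsilon = \hat{\mathbf{y}}$ itself, the left side is $\|\hat{\mathbf{y}}\|_2 > 0$, and the right side is $\sqrt{\varepsilon|E|}\cdot 0 = 0$. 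So your scale-invariance worry is not a technicality to be engineered around — it is an obstruction showing the theorem as stated needs a normalization or lower-bound assumption on $\|\tilde{\mathbf{y}}\|_2$, and neither your sketch nor the paper's own argument (which is invalid at precisely the same step) supplies a correct proof of the stated bound.
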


\begin{proof}
Consider the relaxed optimization:
\[ \min_{\tilde{\mathbf{y}}_{\varepsilon,T+h|T}} \|\tilde{\mathbf{y}}_{\varepsilon,T+h|T} - \hat{\mathbf{y}}_{T+h|T}\|_2^2 \]
subject to:
\[ -\varepsilon \leq \sum_{P \in \mathcal{P}:e \in P} \tilde{\mathbf{y}}_{\varepsilon,P,T+h|T} - \tilde{\mathbf{y}}_{\varepsilon,e,T+h|T} \leq \varepsilon \quad \forall e \in E \]

The Lagrangian is:
\begin{align*}
\mathcal{L}(\tilde{\mathbf{y}}_{\varepsilon}, \boldsymbol{\lambda}^+, \boldsymbol{\lambda}^-) = &\|\tilde{\mathbf{y}}_{\varepsilon,T+h|T} - \hat{\mathbf{y}}_{T+h|T}\|_2^2 + \\
&\sum_{e \in E} \lambda_e^+(\sum_{P \in \mathcal{P}:e \in P} \tilde{\mathbf{y}}_{\varepsilon,P,T+h|T} - \tilde{\mathbf{y}}_{\varepsilon,e,T+h|T} - \varepsilon) + \\
&\sum_{e \in E} \lambda_e^-(-\sum_{P \in \mathcal{P}:e \in P} \tilde{\mathbf{y}}_{\varepsilon,P,T+h|T} + \tilde{\mathbf{y}}_{\varepsilon,e,T+h|T} - \varepsilon)
\end{align*}

The KKT conditions imply that for each edge $e \in E$, either the flow conservation constraint is exactly satisfied or the violation equals $\pm\varepsilon$. Let $\mathcal{E}_\varepsilon$ be the set of edges where the maximum violation occurs. Then:
\[ \|\tilde{\mathbf{y}}_{\varepsilon,T+h|T} - \tilde{\mathbf{y}}_{T+h|T}\|_2^2 = \sum_{e \in \mathcal{E}_\varepsilon} (\pm\varepsilon)^2 \leq \varepsilon^2|E| \]

Moreover:
\[ \|\tilde{\mathbf{y}}_{\varepsilon,T+h|T} - \hat{\mathbf{y}}_{T+h|T}\|_2^2 \leq \|\tilde{\mathbf{y}}_{T+h|T} - \hat{\mathbf{y}}_{T+h|T}\|_2^2 + \varepsilon^2|E| \]
The bound follows from the triangle inequality.
\end{proof}

This result shows that by allowing small violations of flow conservation, we can obtain solutions with provably bounded deviation from exact reconciliation. The error bound $\sqrt{\varepsilon|E|}$ provides precise guidance for setting $\varepsilon$ based on application requirements.

\begin{theorem}[Computational Efficiency of Approximate Reconciliation]
The $\varepsilon$-relaxed solution can be computed in $O(m \log(\frac{1}{\varepsilon}) \log n)$ time using $O(m)$ memory, where $m = |E|$ and $n = |V|$.
\end{theorem}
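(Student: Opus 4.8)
The plan is to recognize the $\varepsilon$-relaxed program of the preceding theorem as an approximate minimum-cost flow instance and to solve it by a scaling scheme whose number of phases is logarithmic in $1/\varepsilon$ and whose per-phase cost is a single shortest-path augmentation. Recall that the relaxation minimizes the strictly convex objective $\|\tilde{\mathbf{y}}_{\varepsilon,T+h|T} - \hat{\mathbf{y}}_{T+h|T}\|_2^2$ subject to the box-slackened conservation constraints $-\varepsilon \le \sum_{P:e\in P}\tilde{\mathbf{y}}_{\varepsilon,P,T+h|T} - \tilde{\mathbf{y}}_{\varepsilon,e,T+h|T} \le \varepsilon$ for every $e\in E$. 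Following Step 1 of the complexity corollary in Appendix~\ref{app:props}, I would convert this quadratic cost into a residual network on $G$ with $O(m)$ arcs carrying piecewise-linear costs obtained from the slopes of the quadratic, so that the relaxed reconciliation becomes a min-cost flow problem in which each conservation requirement is allowed to deviate by at most $\varepsilon$.

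Given this reduction, the core of the argument is a cost-/$\varepsilon$-scaling loop. First I would initialize the scaling parameter at a constant determined by the range of the base forecast and then halve it in successive phases. The key structural claim to establish is that, because the objective is strongly convex and the slackened constraints change by only a constant factor between phases, each phase can be closed by a single shortest-path augmentation on the residual network. Using Dijkstra with a Fibonacci heap, this costs $O(m + n\log n)$; since $G$ is connected we have $m \ge n-1$, hence $n\log n = O(m\log n)$ and the per-phase cost is $O(m\log n)$. Because halving the scaling parameter from its constant start to the target tolerance $\varepsilon$ takes $O(\log(1/\varepsilon))$ phases, the total running time is $O(m\log(1/\varepsilon)\log n)$. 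For memory, the residual network, the current flow, and the heap together occupy $O(m + n) = O(m)$ space, and no phase stores more than the current residual structure, giving the claimed $O(m)$ bound.

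The main obstacle I anticipate is the per-phase augmentation bound: proving that each scaling phase terminates after only $O(1)$ shortest-path computations, rather than $O(m)$ as in a naive successive-shortest-path method, and simultaneously tying the scaling precision to the $\varepsilon$ tolerance of the box constraints so that the final iterate genuinely satisfies the $\varepsilon$-relaxed conservation bound. This requires a scaling invariant relating the dual potentials maintained across phases to the current violation, analogous to the $\varepsilon$-optimality invariants of classical cost-scaling. If a clean single-augmentation bound proves elusive, a fallback is to run dual (gradient) ascent on the box-constrained quadratic: strong convexity of the primal makes the dual smooth and strongly concave, yielding linear convergence in $O(\log(1/\varepsilon))$ iterations, with each dual gradient evaluation costing one $O(m)$ sweep over edges and paths plus an $O(\log n)$ priority-queue update, which reproduces the same $O(m\log(1/\varepsilon)\log n)$ bound while keeping memory at $O(m)$.
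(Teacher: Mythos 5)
Your primary route --- reducing the relaxed program to a min-cost flow and running a cost/$\varepsilon$-scaling loop --- has a genuine gap exactly where you flag it: the claim that each scaling phase closes after a \emph{single} shortest-path augmentation. Nothing in your argument supplies this, and it is not a property of standard scaling schemes: a capacity- or cost-scaling phase generically needs up to $O(m)$ augmentations, which is precisely why the paper's own successive-shortest-path analysis of \emph{exact} reconciliation (Step 2 of the complexity corollary you cite) lands at $O(m^2 + mn\log n)$ rather than anything near-linear. With the standard per-phase bound, your method costs $O\bigl(m(m+n\log n)\log(1/\varepsilon)\bigr)$, far above the claimed $O(m\log(1/\varepsilon)\log n)$. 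Strong convexity of the objective does not rescue this: it controls distance to the optimum in solution space, not the number of augmentations a residual phase requires. You would additionally need to prove the second link you mention --- that the scaling parameter at termination certifies the $\pm\varepsilon$ slack in every conservation constraint --- which is also left open.

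Your fallback, by contrast, is essentially the paper's proof. The paper runs projected gradient descent on the strongly convex primal (strong convexity parameter $\mu = 2$), obtains geometric contraction and hence $O(\log(1/\varepsilon))$ iterations, and charges each iteration $O(m)$ for the gradient and the box projection plus $O(\log n)$ for a binary-search line search --- the same accounting as your dual ascent ($O(m)$ sweep plus an $O(\log n)$ queue operation). Two remarks if you were to write the dual version out: strong concavity of the dual needs the relaxed constraint matrix to have full row rank, which does hold here since each edge constraint contains a distinct $\tilde{y}_e$ with coefficient $-1$; and your $O(\log n)$ priority-queue term is as unmotivated as the paper's line-search term --- under both accountings the per-iteration cost is really $O(m + \log n) = O(m)$, so the stated bound is loose by the $\log n$ factor in either case. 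Had you led with the first-order argument rather than the flow-scaling one, your proposal would coincide with the paper's.
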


\begin{proof}
The problem is strongly convex due to the quadratic objective. The gradient of the Lagrangian at any point can be computed in $O(m)$ operations. For step size $\alpha$, projected gradient descent satisfies:
\[ \|\tilde{\mathbf{y}}^{(k+1)}_{\varepsilon,T+h|T} - \tilde{\mathbf{y}}^*_{\varepsilon,T+h|T}\|_2 \leq (1-\alpha\mu)\|\tilde{\mathbf{y}}^{(k)}_{\varepsilon,T+h|T} - \tilde{\mathbf{y}}^*_{\varepsilon,T+h|T}\|_2 \]
where $\mu = 2$ is the strong convexity parameter.

This geometric convergence implies $O(\log(\frac{1}{\varepsilon}))$ iterations to reach error $\varepsilon$. Each iteration requires:
\begin{enumerate}
    \item Gradient computation: $O(m)$
    \item Projection onto box constraints: $O(m)$
    \item Line search: $O(\log n)$ using binary search
\end{enumerate}

The total complexity follows. Memory requirements are $O(m)$ to store the current solution and gradient.
\end{proof}

This result provides practitioners with a principled way to trade off computational efficiency against forecast coherence. The error bound $\sqrt{\varepsilon|E|}$ gives precise guidance for setting $\varepsilon$ based on application requirements. Moreover, the $O(m \log(\frac{1}{\varepsilon}) \log n)$ runtime represents a significant improvement over MinT's $O(n^3)$, particularly for sparse networks where $m \ll n^2$.

\section{Experiments on Simulated Data}\label{app:exp-sim}
% Figures \ref{fig:overall-mae}, \ref{fig:path-mae}, \ref{fig:edge-mae} and \ref{fig:node-mae} , show the performance of \fr against other methods for MAE,  plotted on the log scale on the y-axis. For each metric, we observe that our method of flow reconciliation outperforms both bottom-up and MinT  by a very significant margin,  showing promise for  extension to bigger datasets and more complexes use cases. The error gap increases as the level of the hierarchy increases, meaning all models forecast node values in a similar manner, compared to a larger range across models for path values. 
% We also note that our method improves the base forecast, irrespective of the base forecast model, and the forecast hierarchy level. This is unexpected behavior for a reconciliation method in hierarchical forecasting. Consider Figure \ref{fig:overall-mae}. The bottom up method  consistently increases the error of the base forecast  after reconciliation. MinT shows better performance than bottom-up,  and predicts vales quite close to the base forecast,  and in a few cases shows a mild improvement. This can be explained by the enforced non negativity constraint,  which forces all negative forecast numbers to zero thus limiting outliers.  On the other hand,  Flow can be  at least three times more accurate  than the base forecast in predicting values on edges.
\begin{figure}
    \centering
    \includegraphics[scale=0.45]{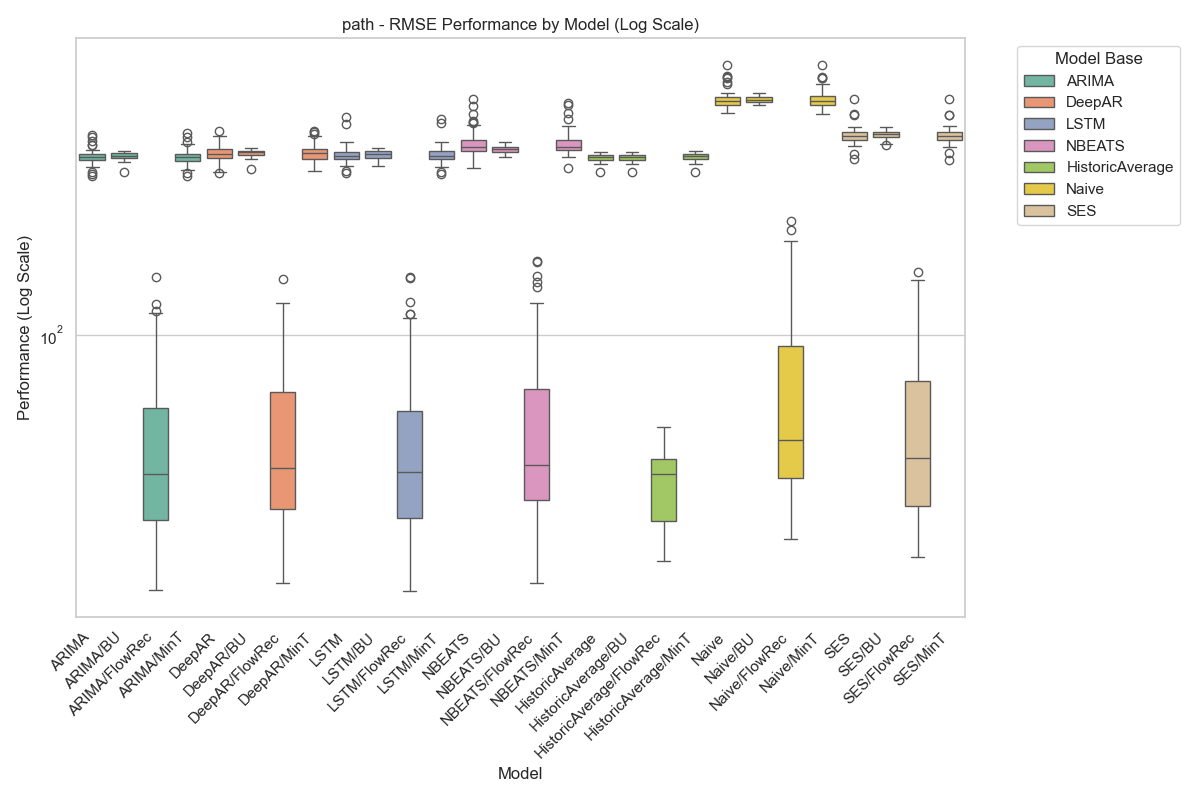}
    \caption{RMSE results for forecasting and reconciliation models across paths for simulated data, shown across seven different base forecasting methods. \fr shows the lowest RMSE (best), while bottom-up shows the highest. MinT shows similar performance to the unreconciled base forecast. }
    \label{fig:path-rmse}
\end{figure}

\begin{figure}
    \centering
    \includegraphics[scale=0.45]{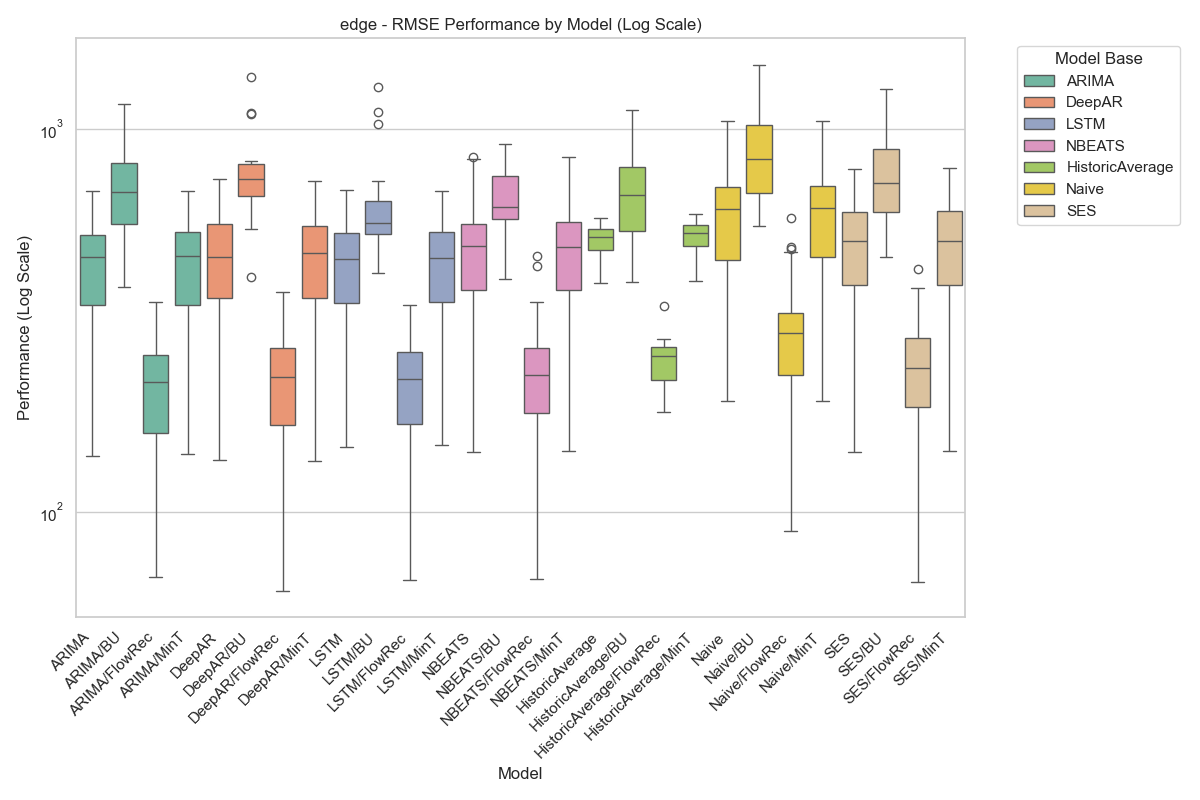}
    \caption{RMSE results for forecasting and reconciliation models across edges for simulated data, shown across seven different base forecasting methods. \fr shows the lowest RMSE (best), while bottom-up shows the highest. MinT shows similar performance to the unreconciled base forecast.}
    \label{fig:edge-rmse}
\end{figure}

\begin{figure}
    \centering
    \includegraphics[scale=0.45]{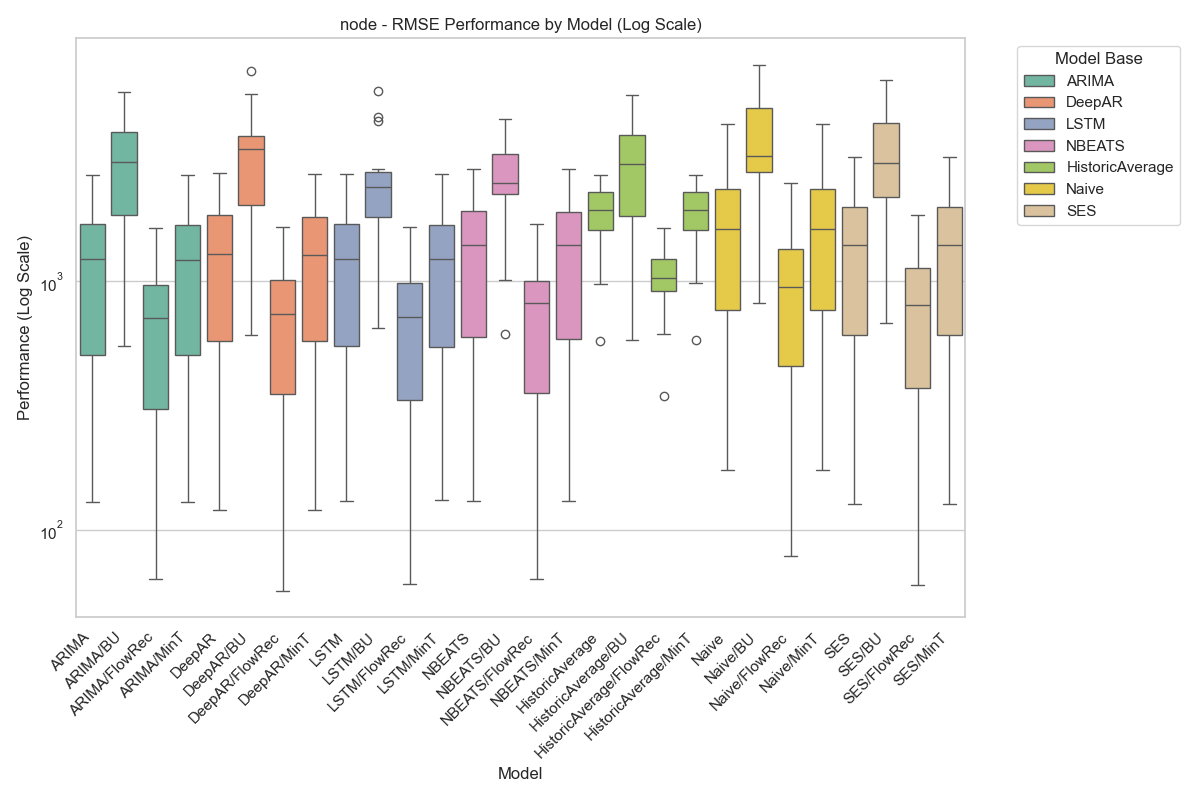}
    \caption{RMSE results for forecasting and reconciliation models across all nodes for simulated data, shown across seven different base forecasting methods. \fr shows the lowest RMSE (best), while bottom-up shows the highest. MinT shows similar performance to the unreconciled base forecast.}
    \label{fig:node-rmse}
\end{figure}

\begin{figure}
    \centering
    \includegraphics[scale=0.45]{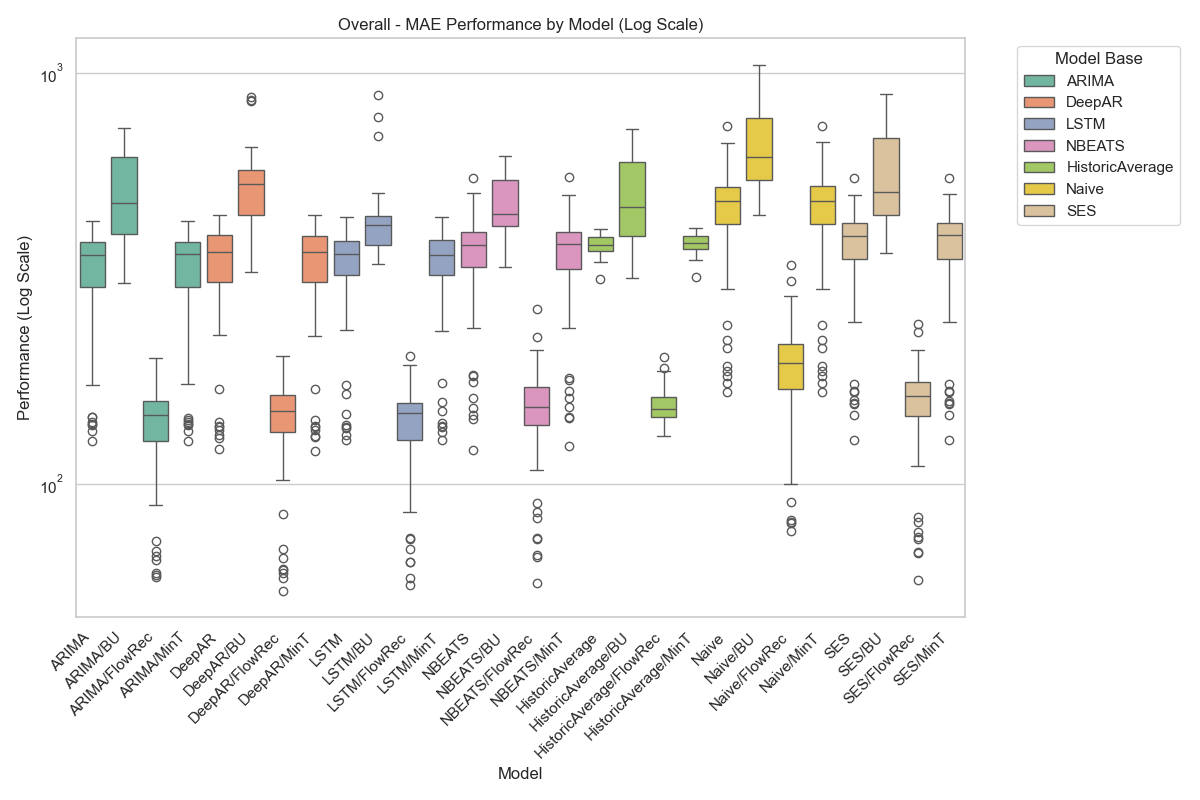}
    \caption{MAE results for forecasting and reconciliation models across all hierarchies for simulated data, shown across seven different base forecasting methods. \fr shows the lowest MAE (best), while bottom-up shows the highest. MinT shows similar performance to the unreconciled base forecast.}
    \label{fig:overall-mae}
\end{figure}

\begin{figure}
    \centering
    \includegraphics[scale=0.45]{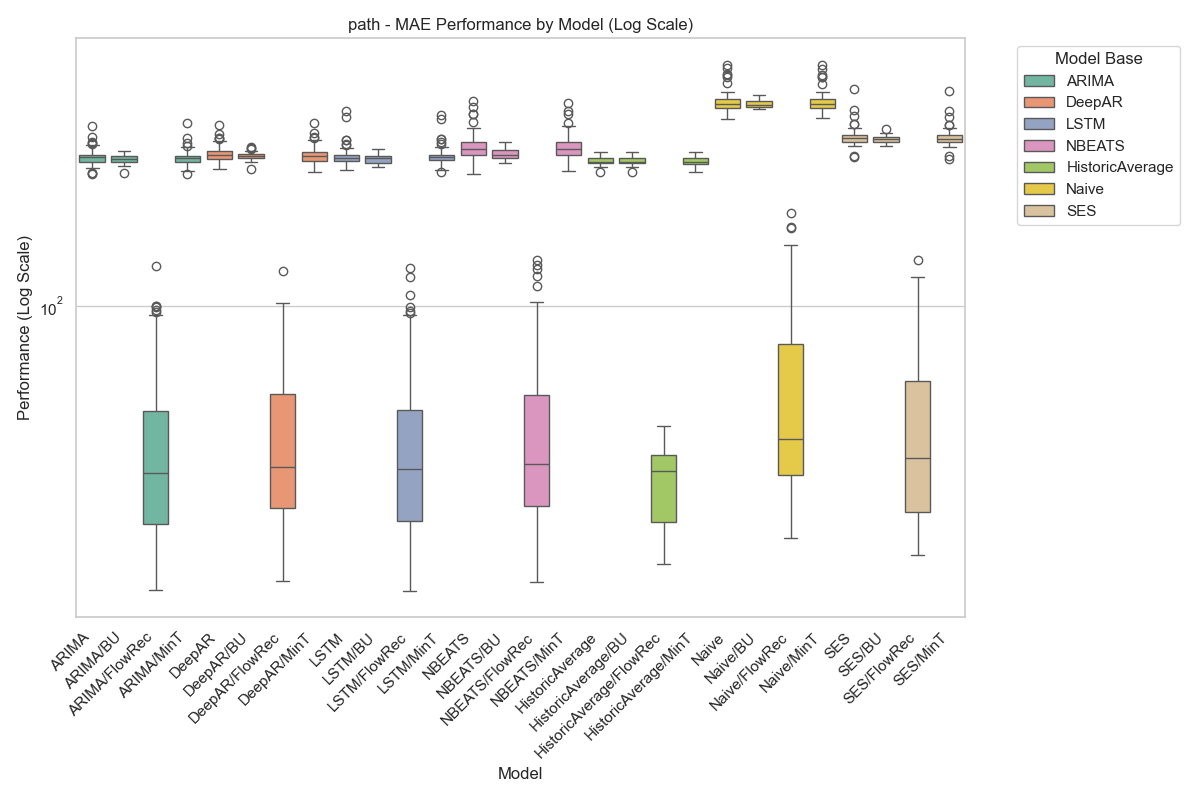}
    \caption{MAE results for forecasting and reconciliation models across paths for simulated data, shown across seven different base forecasting methods. \fr shows the lowest MAE (best), while bottom-up shows the highest. MinT shows similar performance to the unreconciled base forecast.}
    \label{fig:path-mae}
\end{figure}

\begin{figure}
    \centering
    \includegraphics[scale=0.45]{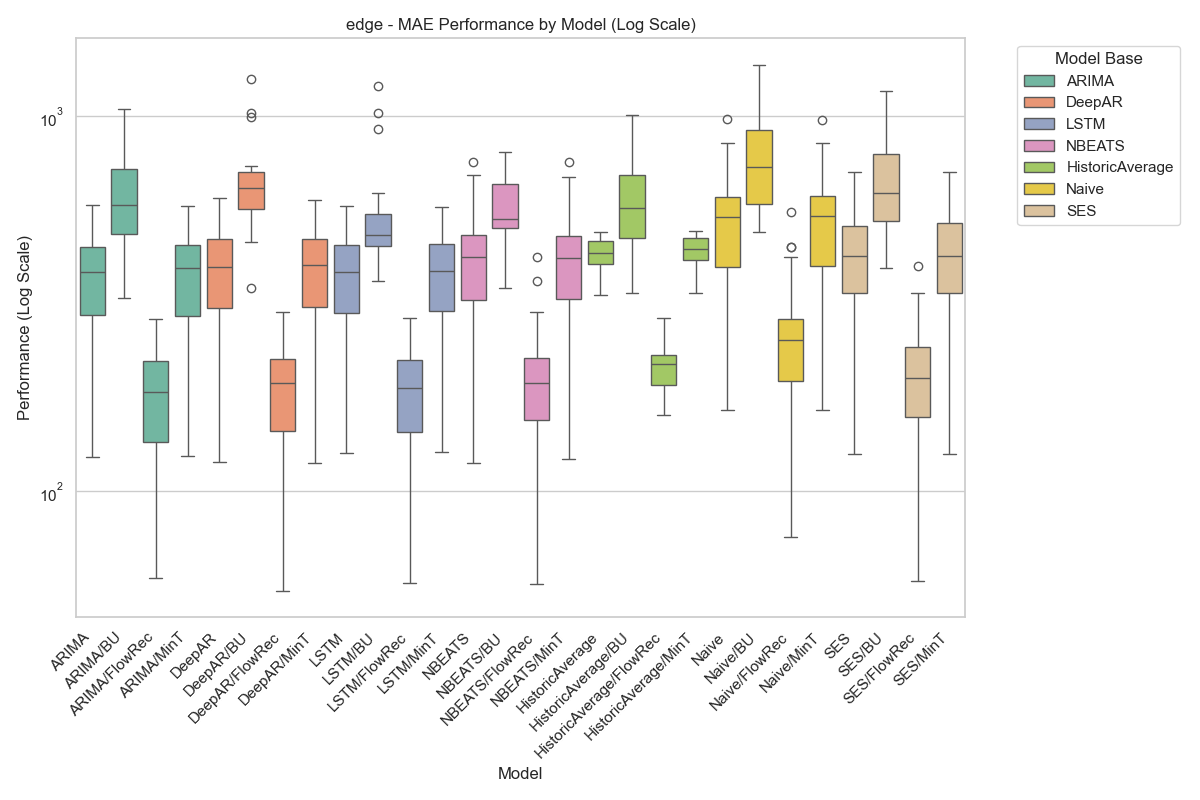}
    \caption{MAE results for forecasting and reconciliation models across edges for simulated data, shown across seven different base forecasting methods. \fr shows the lowest MAE (best), while bottom-up shows the highest. MinT shows similar performance to the unreconciled base forecast.}
    \label{fig:edge-mae}
\end{figure}

\begin{figure}
    \centering
    \includegraphics[scale=0.45]{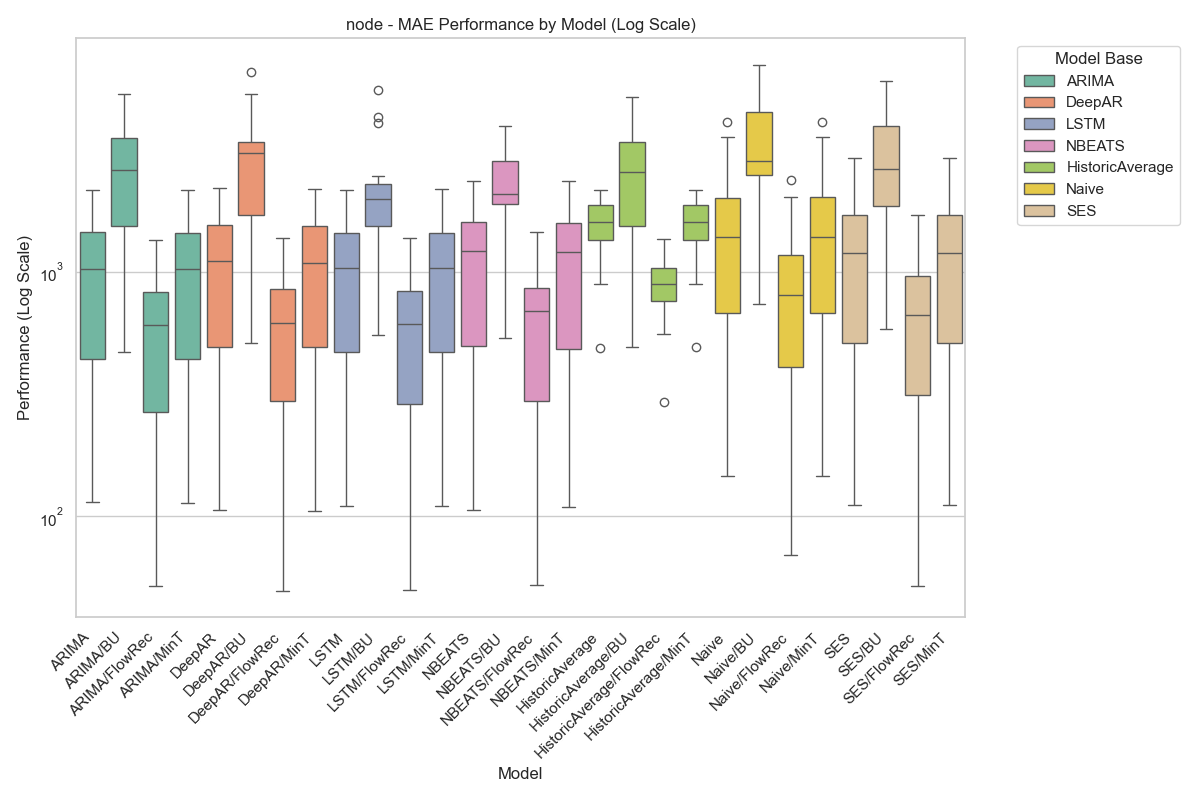}
    \caption{MAE results for forecasting and reconciliation models across nodes for simulated data, shown across seven different base forecasting methods. \fr shows the lowest MAE (best), while bottom-up shows the highest. MinT shows similar performance to the unreconciled base forecast.}
    \label{fig:node-mae}
\end{figure}

\begin{figure}
    \centering
    \includegraphics[scale=0.35]{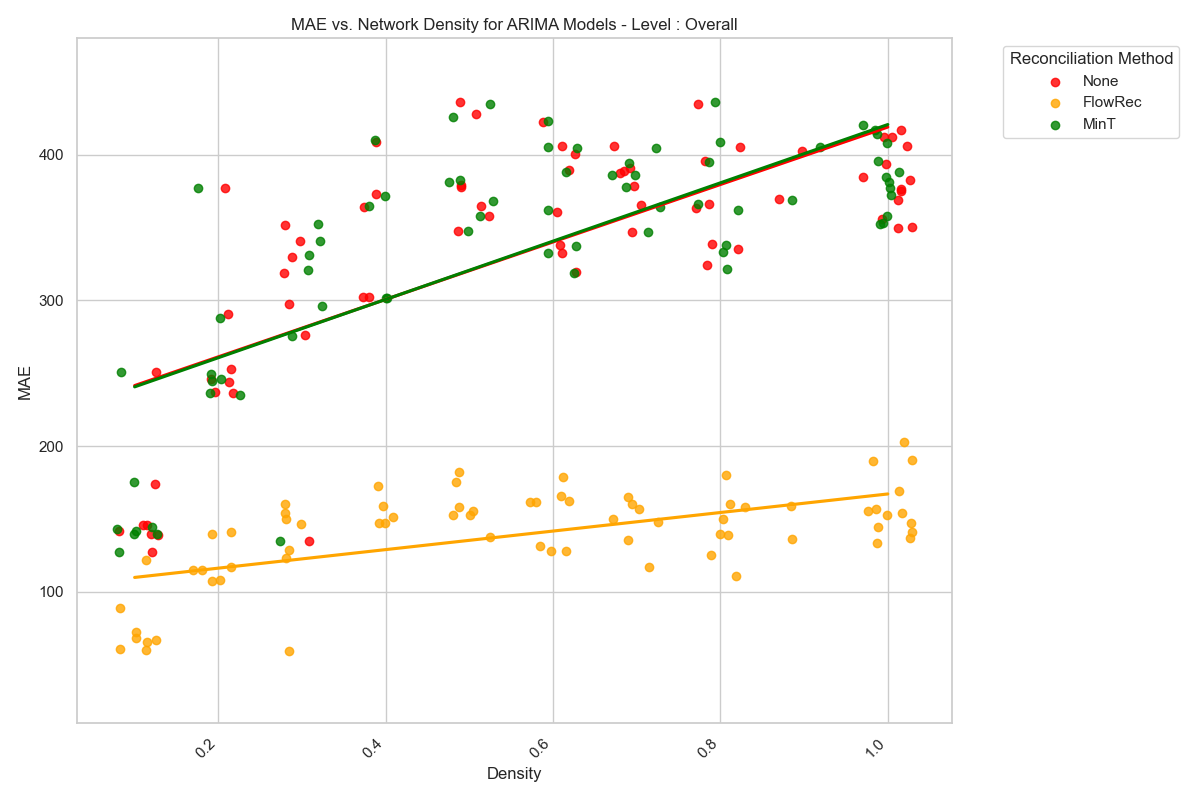}
    \caption{MAE results for ARIMA, plotted against the density (degree of nodes, where 1 means fully connected) of the simulated networks. }
    \label{fig:overall-density}
\end{figure}

\begin{figure}
    \centering
    \includegraphics[scale=0.35]{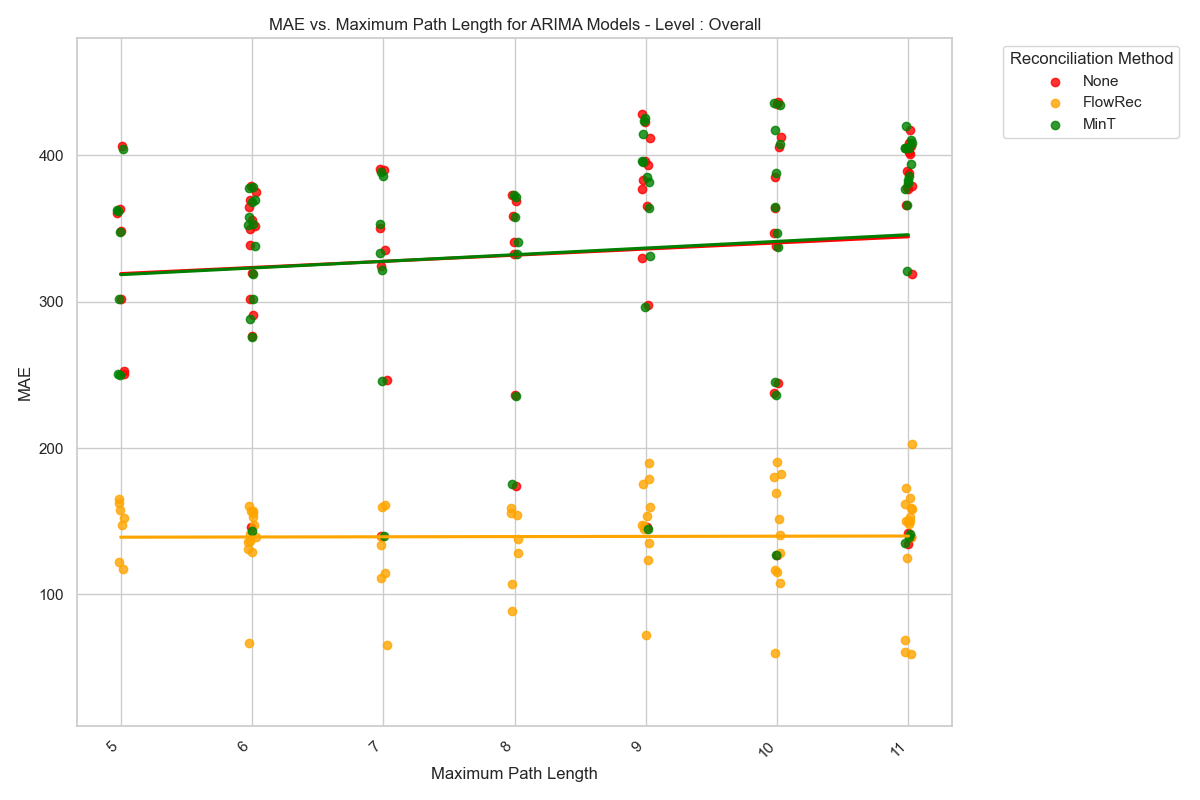}
    \caption{MAE results for ARIMA, plotted against the maximum path length of the simulated networks. }
    \label{fig:overall-length}
\end{figure}

\begin{figure}
    \centering
    \includegraphics[scale=0.35]{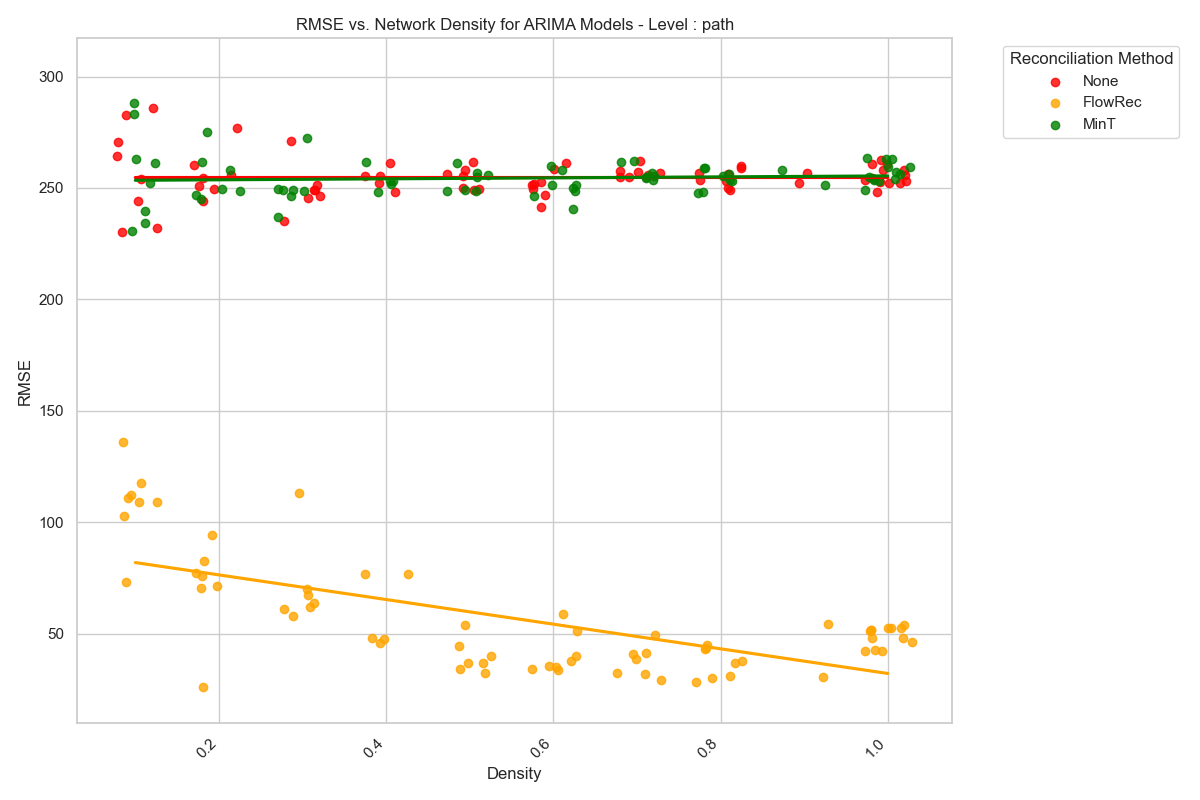}
    \caption{RMSE results for ARIMA, plotted against the density (degree of nodes, where 1 means fully connected) of the simulated networks for paths. }
    \label{fig:overall-density}
\end{figure}

\begin{figure}
    \centering
    \includegraphics[scale=0.35]{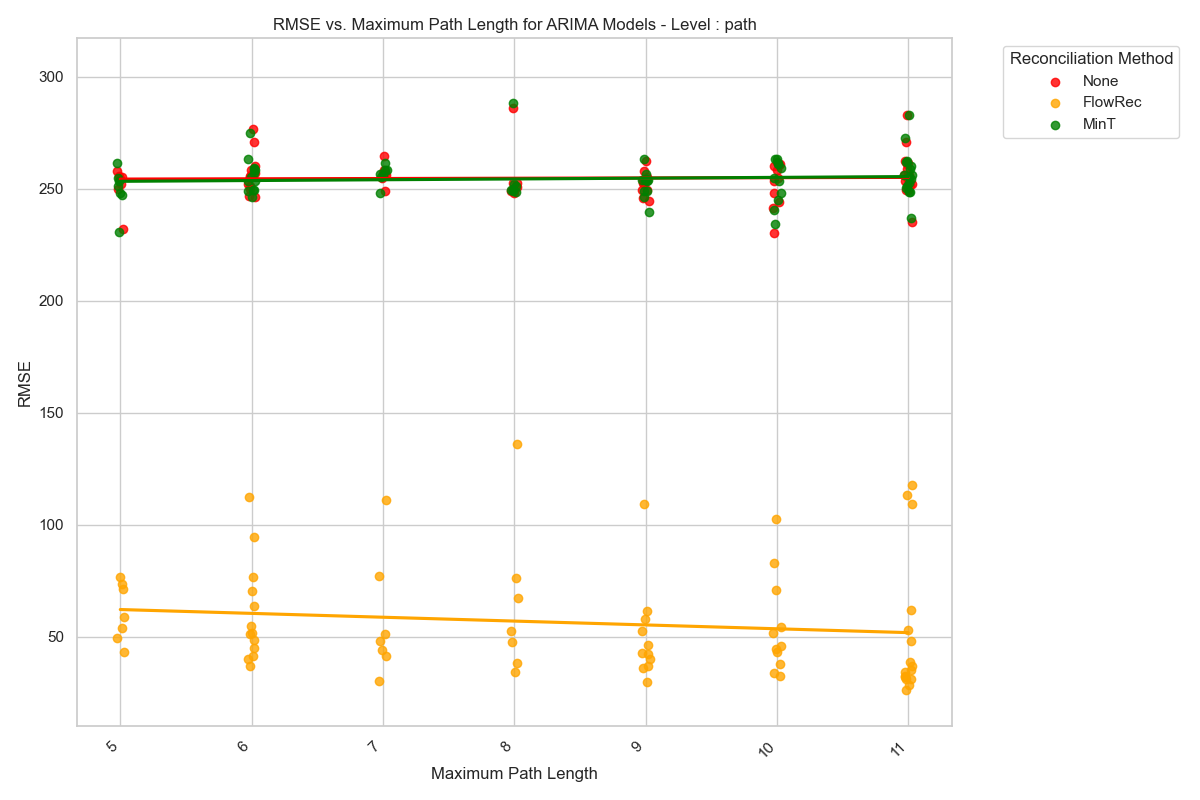}
    \caption{RMSE results for ARIMA for paths, plotted against the maximum path length of the simulated networks. }
    \label{fig:overall-length}
\end{figure}

\begin{figure}
    \centering
    \includegraphics[scale=0.35]{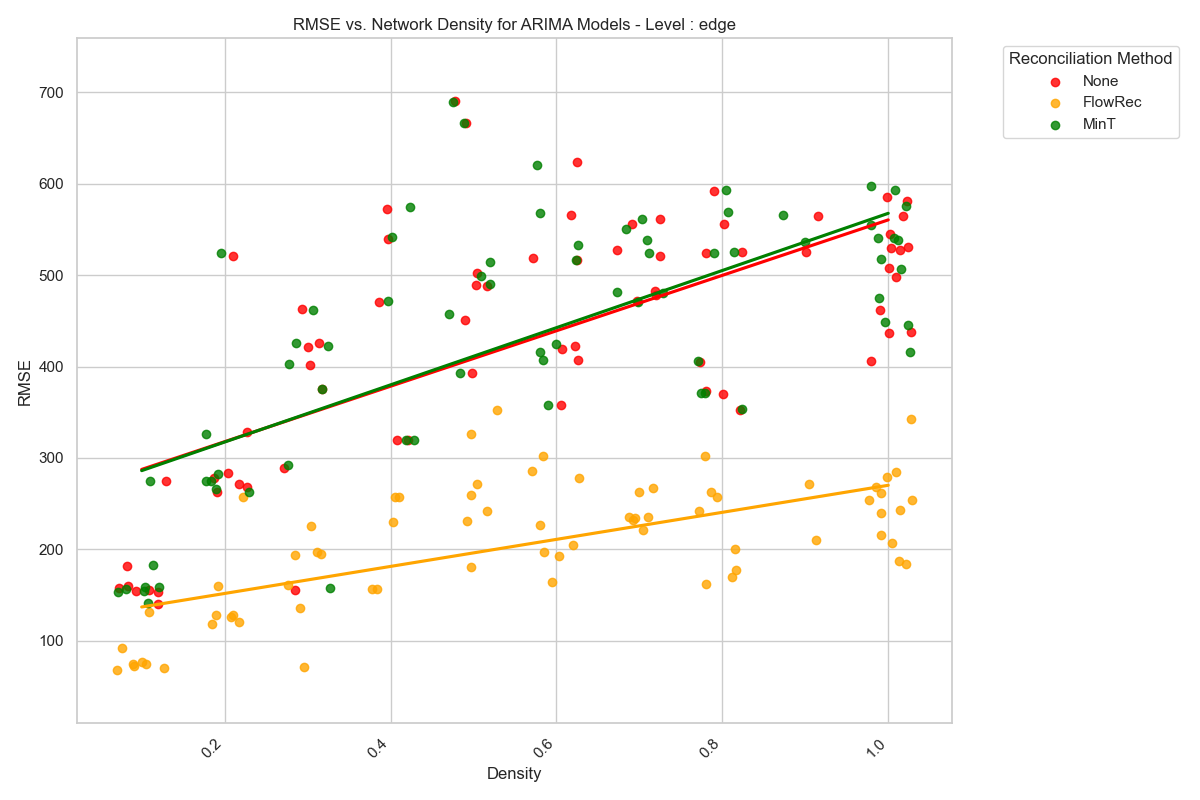}
    \caption{RMSE results for ARIMA for edges, plotted against the density (degree of nodes, where 1 means fully connected) of the simulated networks. }
    \label{fig:overall-density}
\end{figure}

\begin{figure}
    \centering
    \includegraphics[scale=0.35]{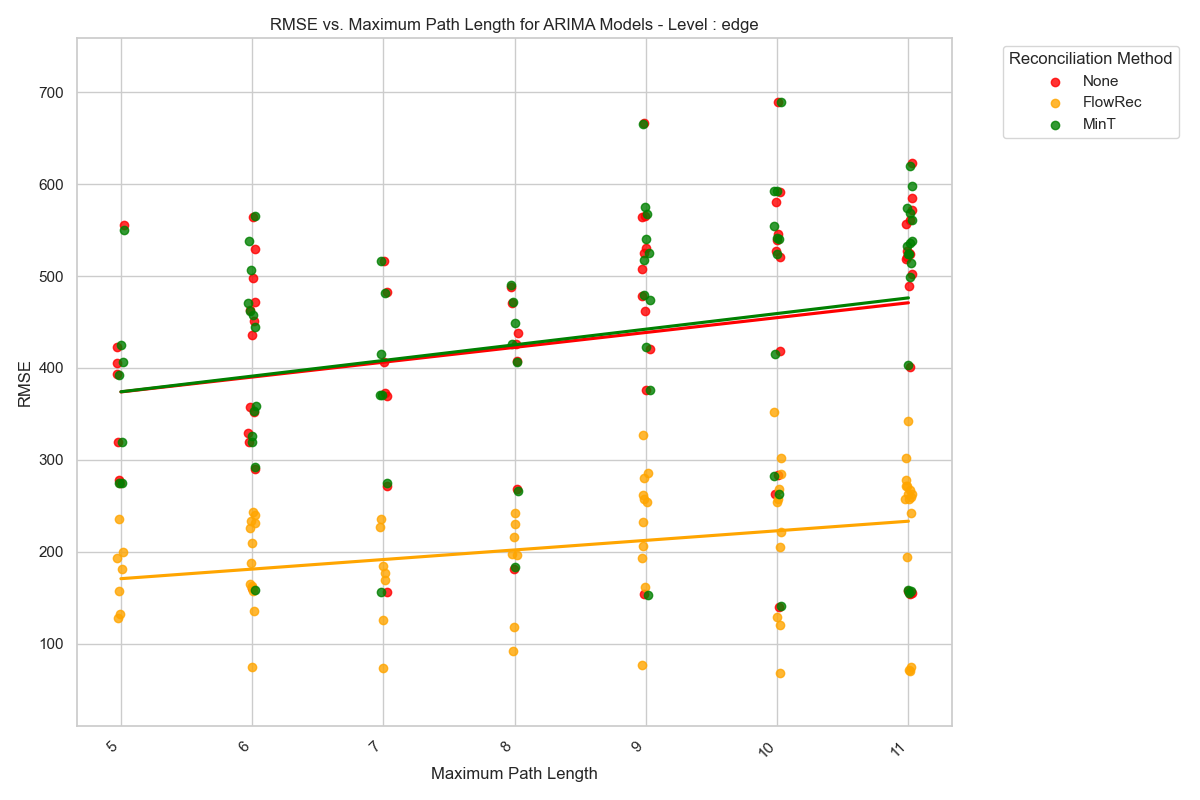}
    \caption{RMSE results for ARIMA for edges, plotted against the maximum path length of the simulated networks. }
    \label{fig:overall-length}
\end{figure}

\begin{figure}
    \centering
    \includegraphics[scale=0.35]{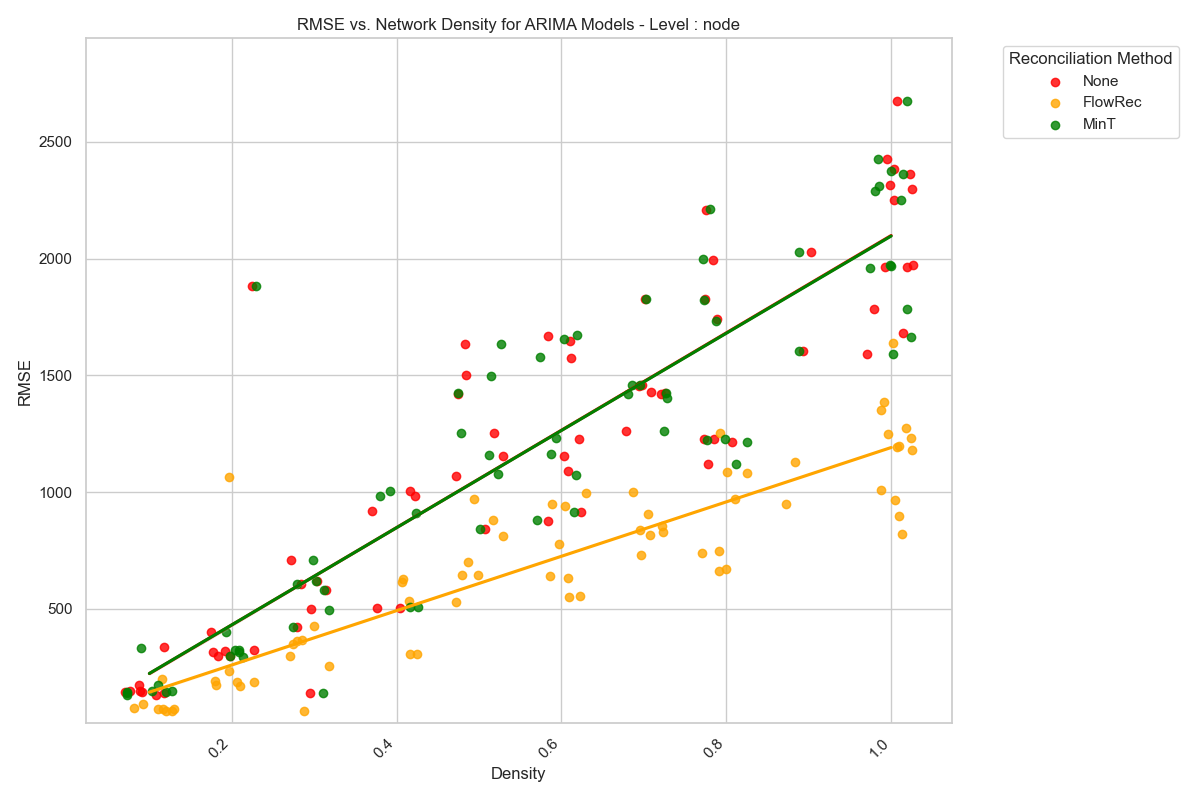}
    \caption{RMSE results for ARIMA for nodes, plotted against the density (degree of nodes, where 1 means fully connected) of the simulated networks. }
    \label{fig:overall-density}
\end{figure}

\begin{figure}
    \centering
    \includegraphics[scale=0.35]{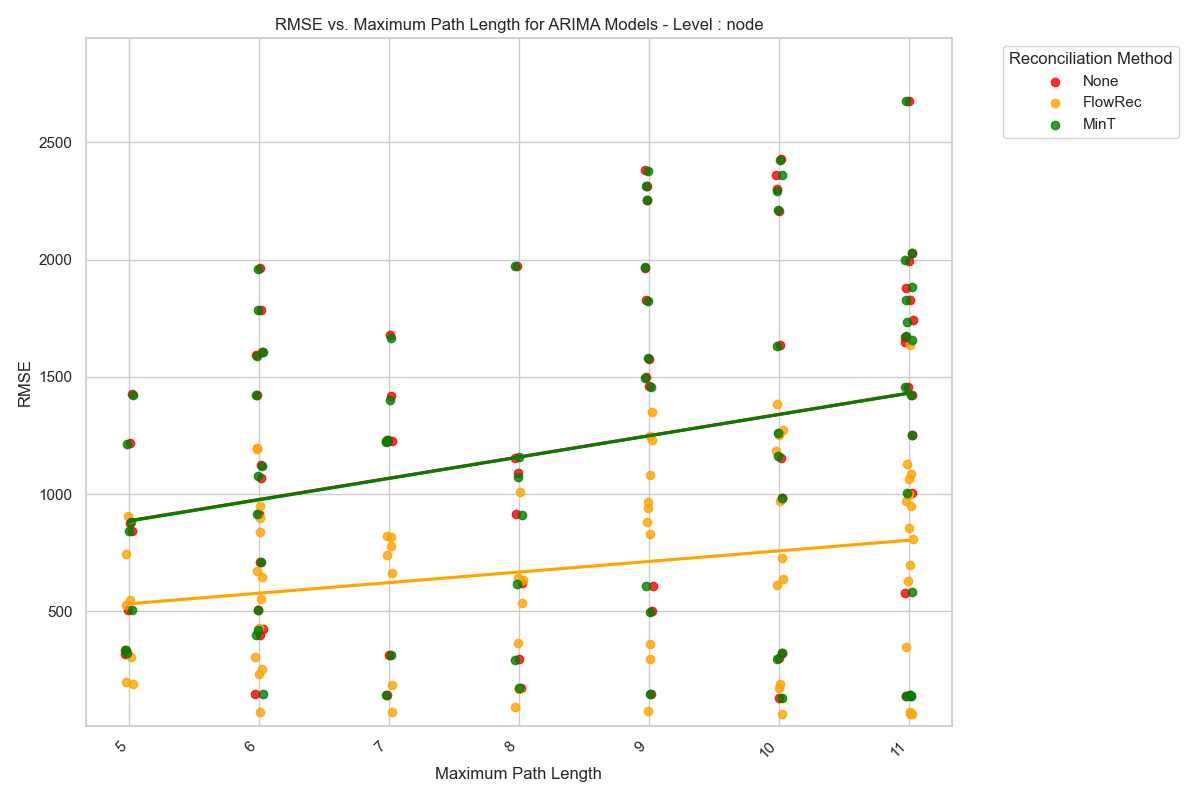}
    \caption{RMSE results for ARIMA for nodes, plotted against the maximum path length of the simulated networks. }
    \label{fig:overall-length}
\end{figure}

\begin{figure}
    \centering
    \includegraphics[scale=0.35]{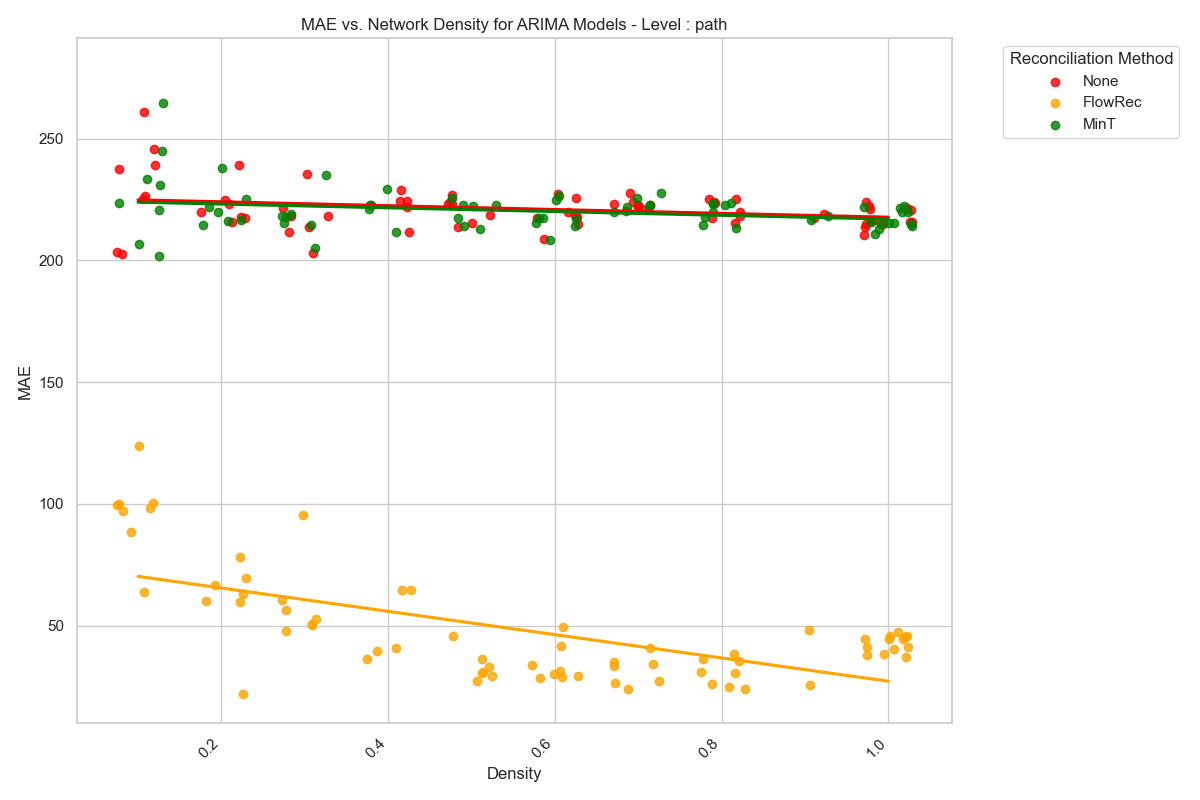}
    \caption{MAE results for ARIMA, plotted against the density (degree of nodes, where 1 means fully connected) of the simulated networks for paths. }
    \label{fig:overall-density}
\end{figure}

\begin{figure}
    \centering
    \includegraphics[scale=0.35]{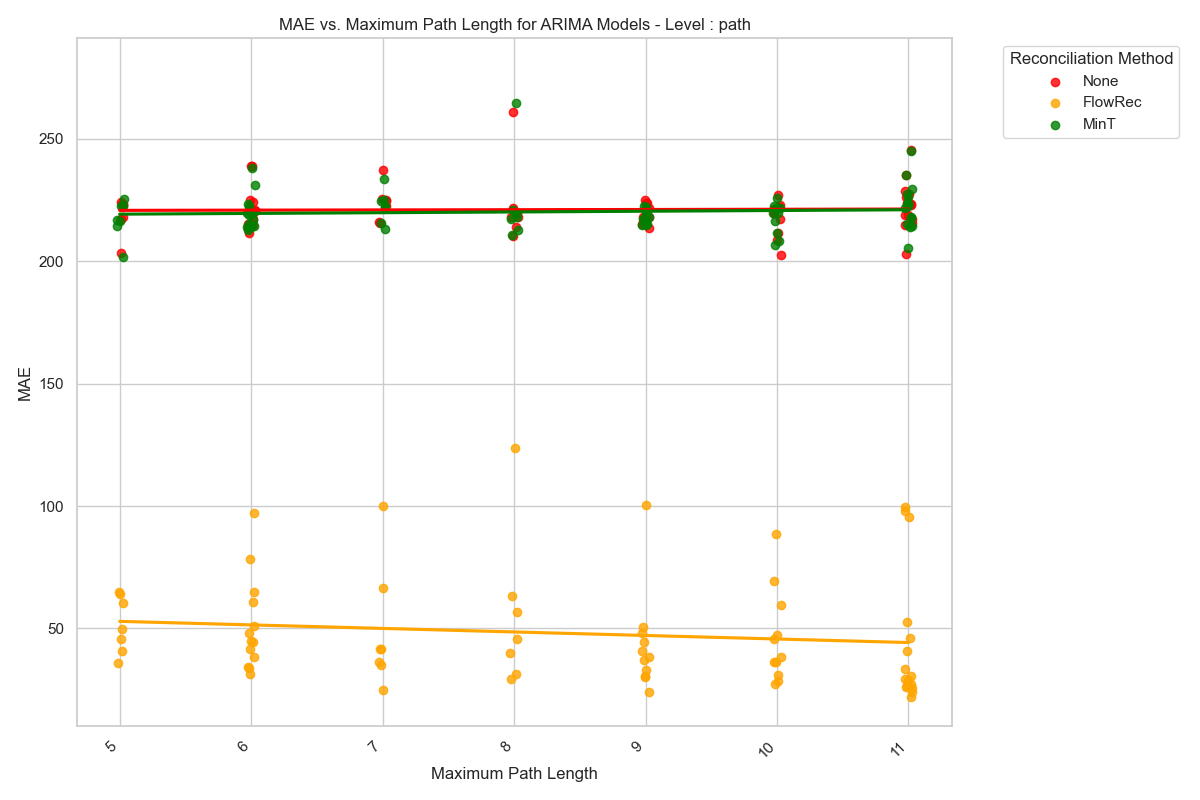}
    \caption{MAE results for ARIMA for paths, plotted against the maximum path length of the simulated networks. }
    \label{fig:overall-length}
\end{figure}

\begin{figure}
    \centering
    \includegraphics[scale=0.35]{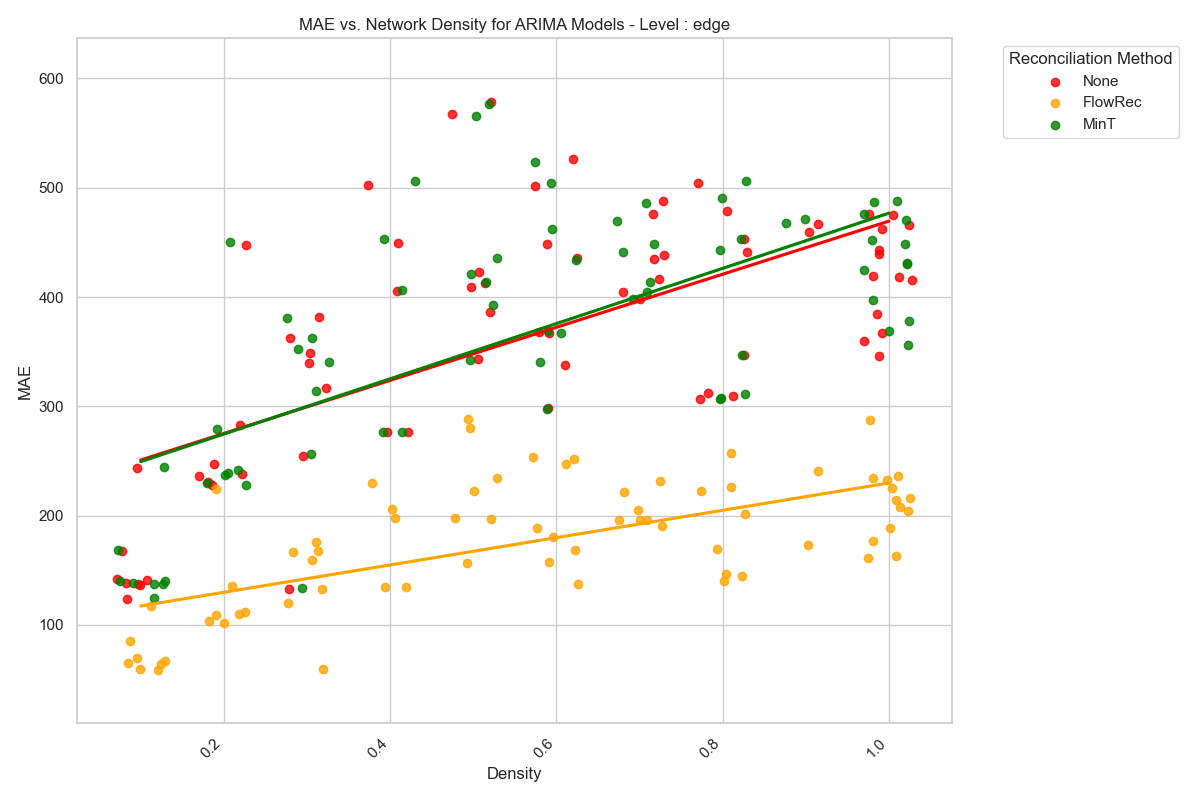}
    \caption{MAE results for ARIMA for edges, plotted against the density (degree of nodes, where 1 means fully connected) of the simulated networks. }
    \label{fig:overall-density}
\end{figure}

\begin{figure}
    \centering
    \includegraphics[scale=0.35]{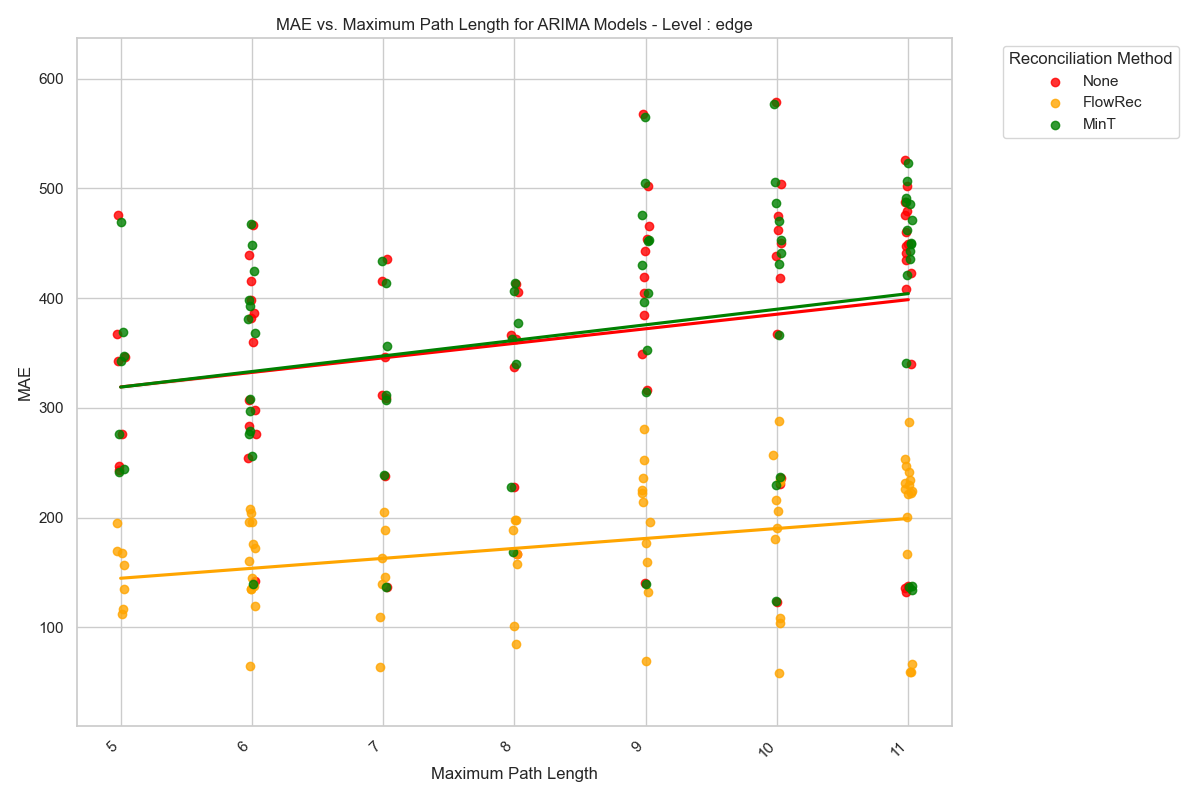}
    \caption{MAE results for ARIMA for edges, plotted against the maximum path length of the simulated networks. }
    \label{fig:overall-length}
\end{figure}

\begin{figure}
    \centering
    \includegraphics[scale=0.35]{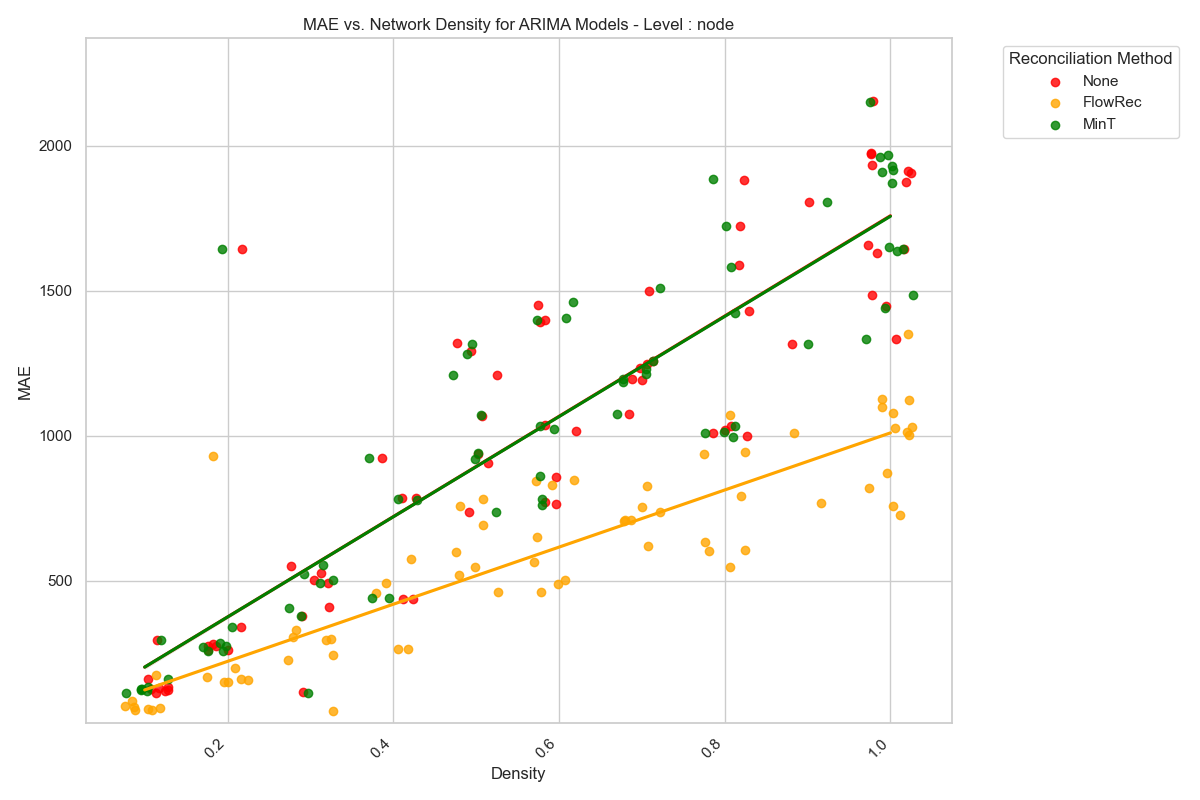}
    \caption{MAE results for ARIMA for nodes, plotted against the density (degree of nodes, where 1 means fully connected) of the simulated networks. }
    \label{fig:overall-density}
\end{figure}

\begin{figure}
    \centering
    \includegraphics[scale=0.35]{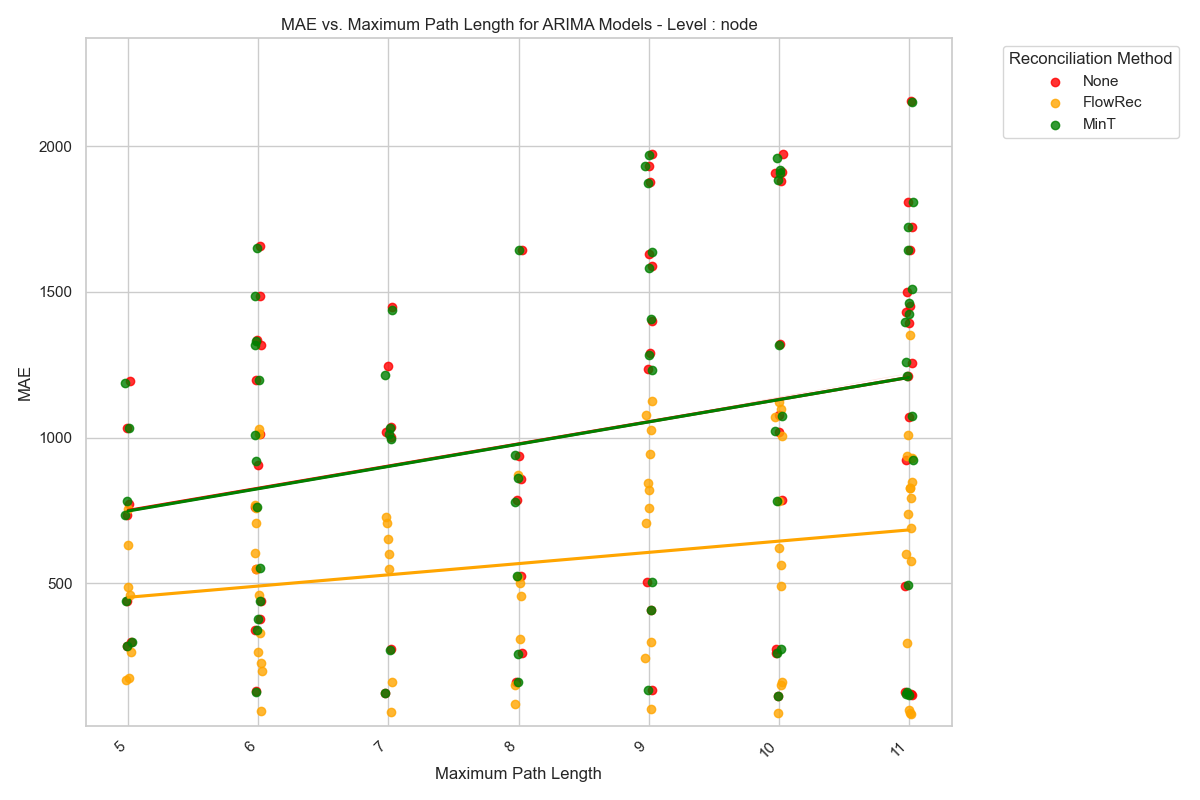}
    \caption{MAE results for ARIMA for nodes, plotted against the maximum path length of the simulated networks. }
    \label{fig:overall-length}
\end{figure}
\section{Experiments on Real Data}\label{app:exp-real}

\begin{figure}
    \centering
    \includegraphics[scale=0.4]{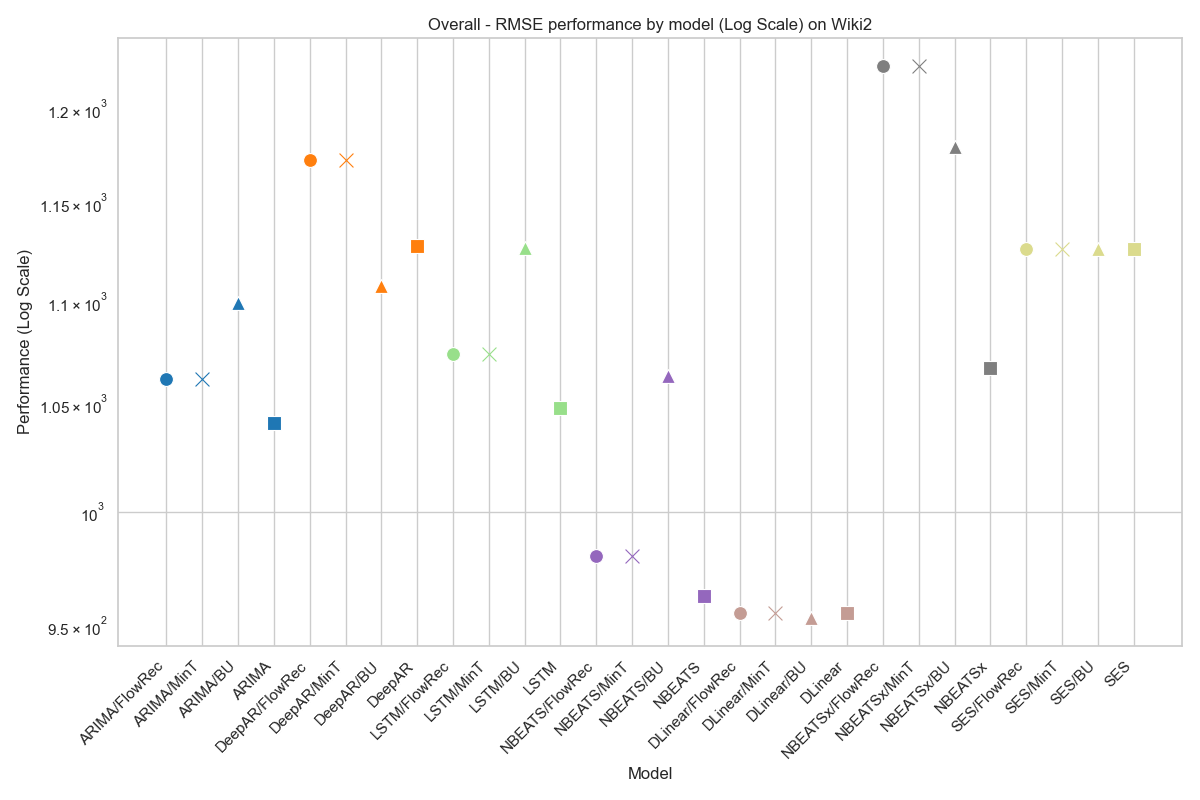}
    \caption{RMSE results for forecasting and reconciliation models across all hierarchies.}
    \label{fig:rmse-wiki}
\end{figure}

\begin{figure}
    \centering
    \includegraphics[scale=0.4]{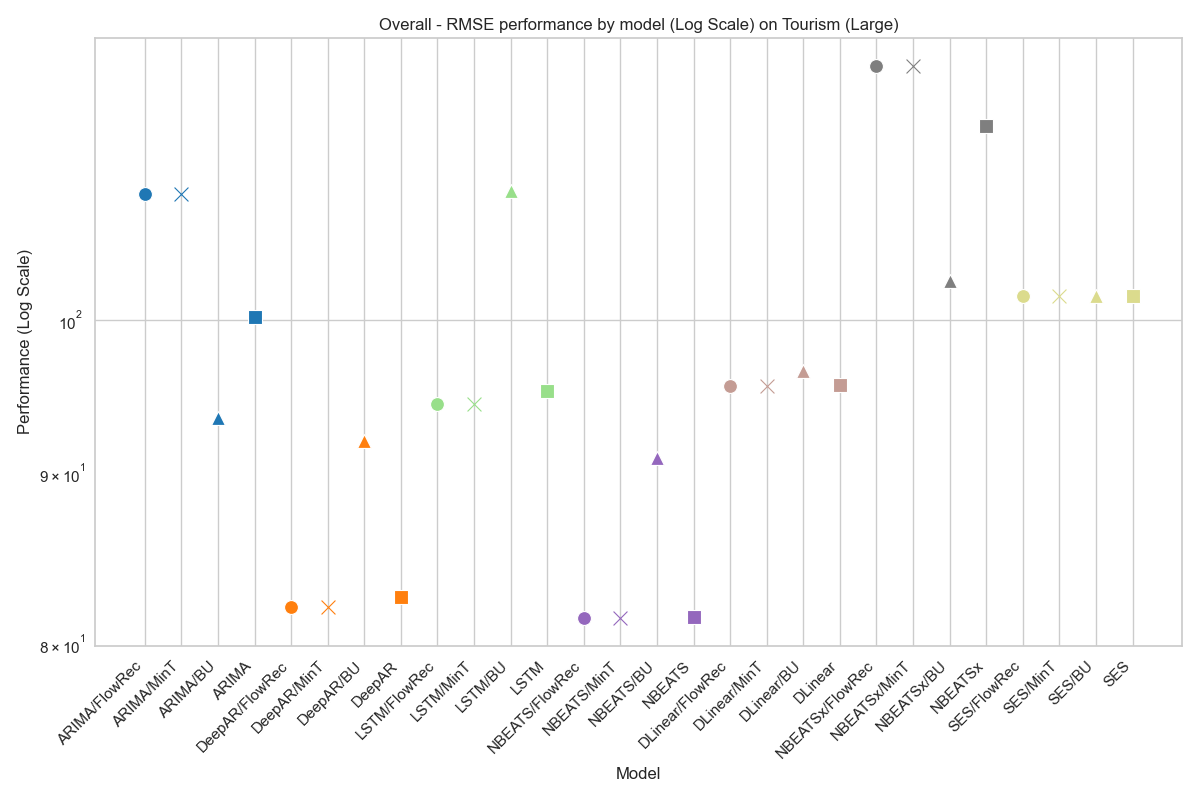}
    \caption{RMSE results for forecasting and reconciliation models across all hierarchies.}
    \label{fig:rmse-tourl}
\end{figure}

\begin{figure}
    \centering
    \includegraphics[scale=0.45]{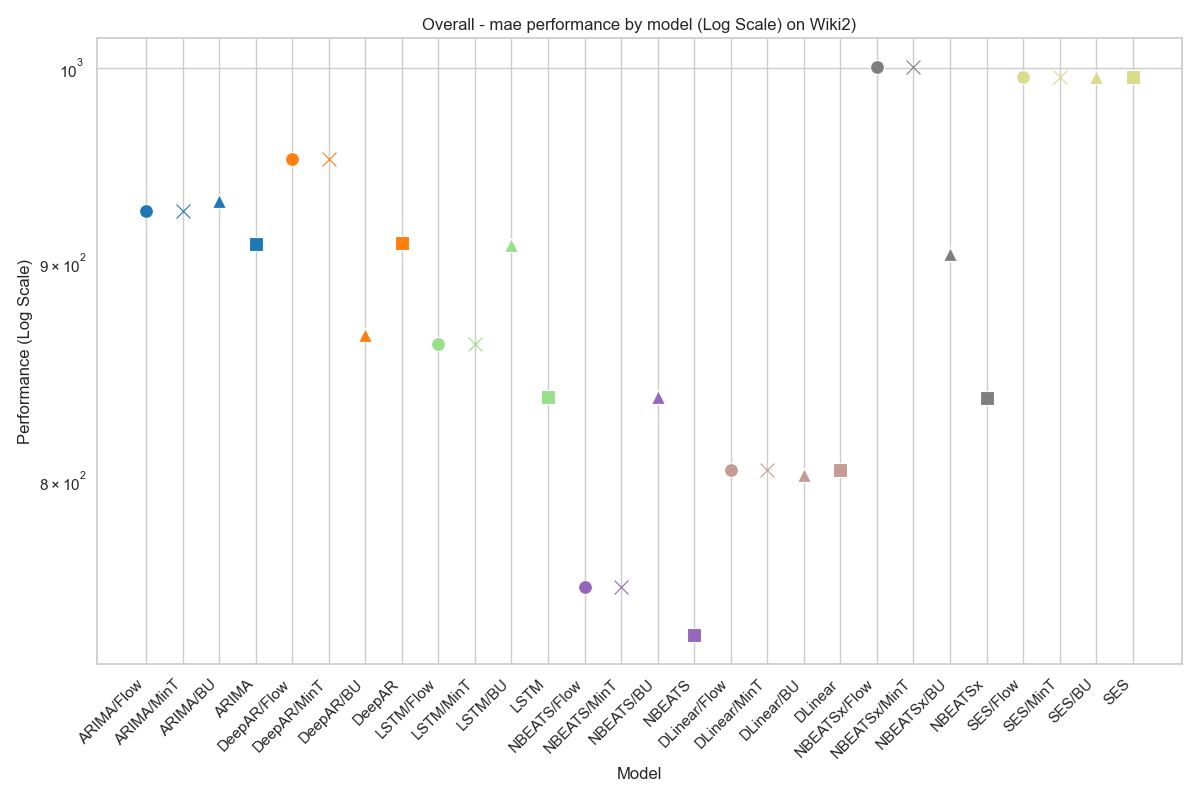}
    \caption{RMSE results for forecasting and reconciliation models across all hierarchies.}
    \label{fig:mae-wiki}
\end{figure}

\begin{figure}
    \centering
    \includegraphics[scale=0.45]{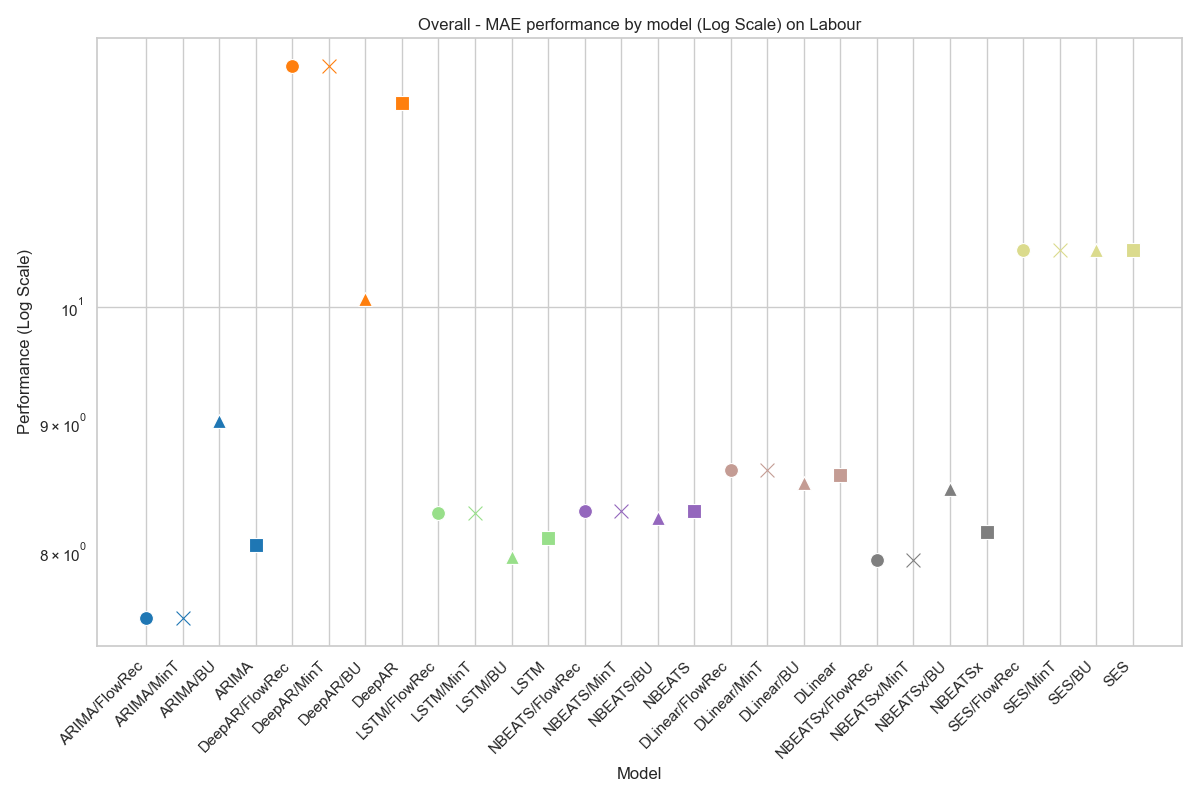}
    \caption{RMSE results for forecasting and reconciliation models across all hierarchies.}
    \label{fig:mae-labour}
\end{figure}

\begin{figure}
    \centering
    \includegraphics[scale=0.45]{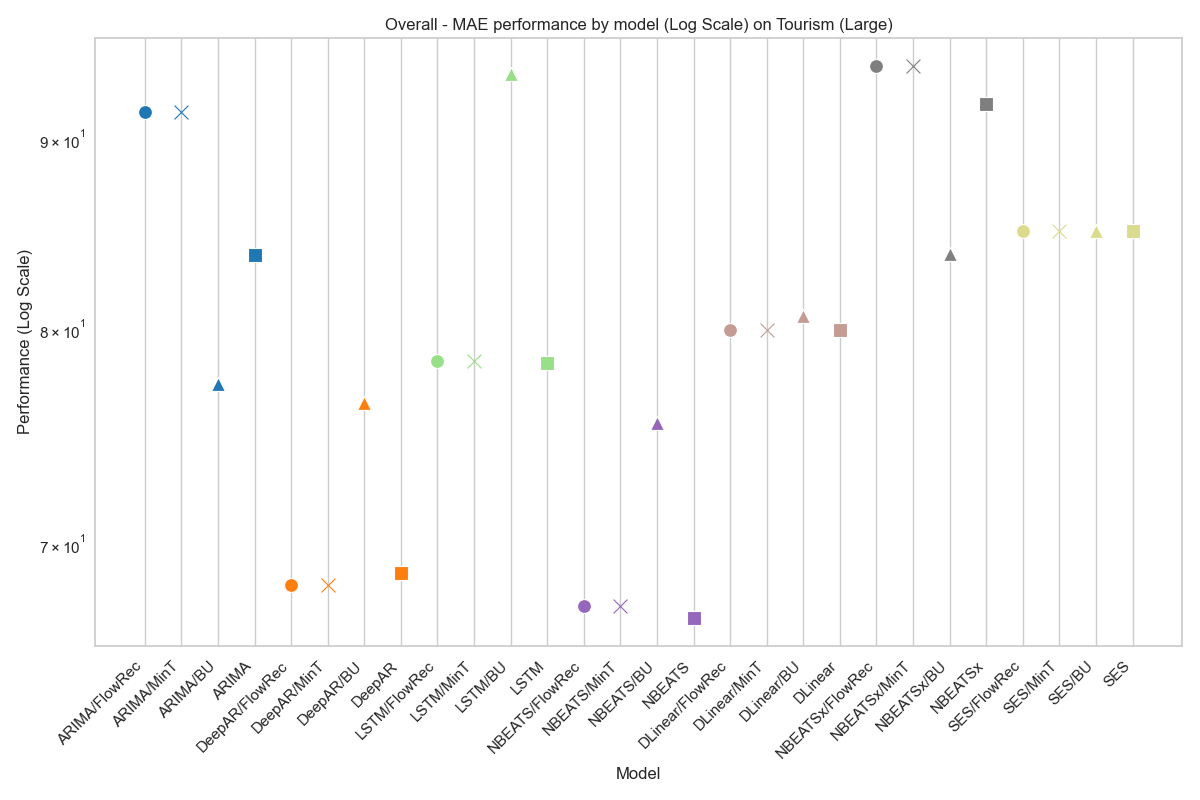}
    \caption{RMSE results for forecasting and reconciliation models across all hierarchies.}
    \label{fig:mae-tourl}
\end{figure}

\begin{figure}
    \centering
    \includegraphics[scale=0.45]{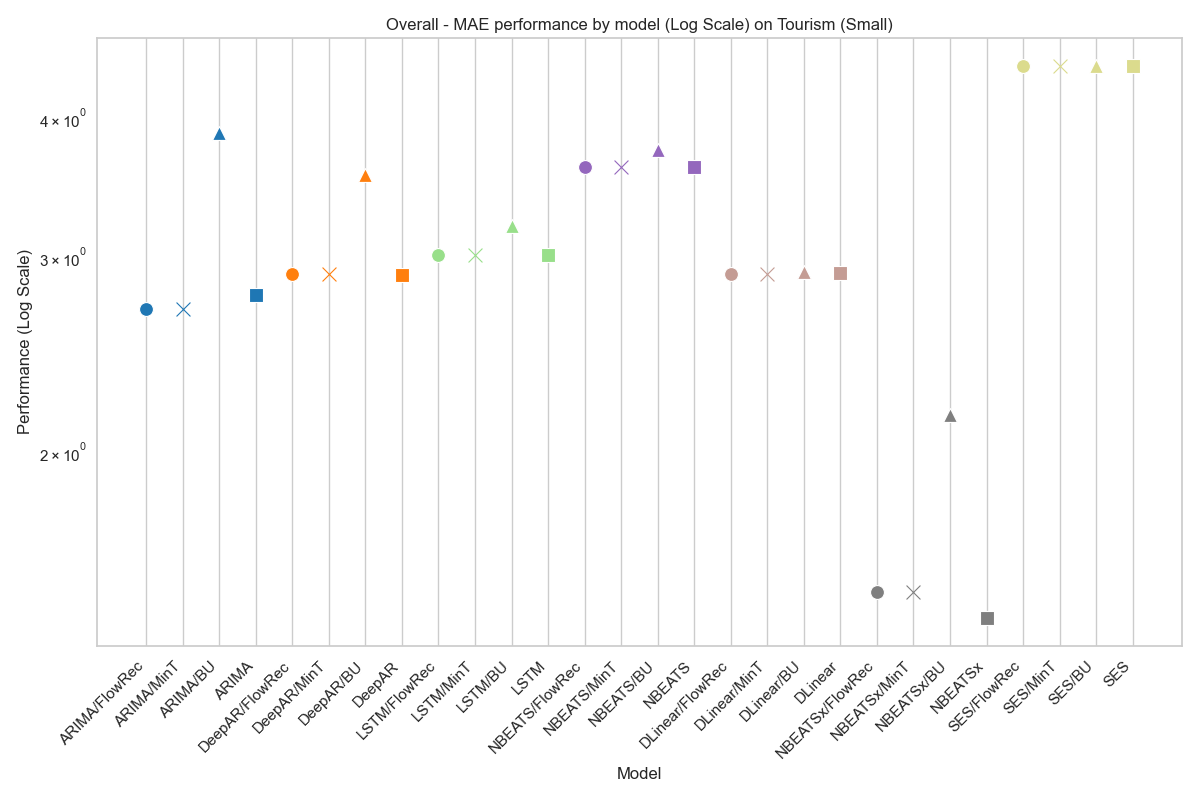}
    \caption{RMSE results for forecasting and reconciliation models across all hierarchies.}
    \label{fig:mae-tours}
\end{figure}
\end{document}